\definecolor{shadecolor}{RGB}{176,224,230}
\theoremstyle{plain}
\newtheorem{theorem}{Theorem}[section]
\newtheorem{proposition}[theorem]{Proposition}
\theoremstyle{definition}
\theoremstyle{remark}
\icmltitlerunning{Controlling Neural Collapse Enhances Out-of-Distribution Detection and Transfer Learning}
\begin{document}

\twocolumn[
\icmltitle{Controlling Neural Collapse Enhances \\
           Out-of-Distribution Detection and Transfer Learning}



\icmlsetsymbol{equal}{*}

\begin{icmlauthorlist}
\icmlauthor{Md Yousuf Harun}{rit}
\icmlauthor{Jhair Gallardo}{rit}
\icmlauthor{Christopher Kanan}{roc}
\end{icmlauthorlist}

\icmlaffiliation{rit}{Rochester Institute of Technology}
\icmlaffiliation{roc}{University of Rochester}

\icmlcorrespondingauthor{Md Yousuf Harun}{mh1023@rit.edu}

\icmlkeywords{Neural Collapse, Out-of-Distribution Detection, Out-of-Distribution Generalization, Transfer Learning, ICML}

\vskip 0.3in
]



\ifthenelse{\boolean{public_version}}{
\printAffiliationsAndNotice{}  
}



\begin{abstract}
Out-of-distribution (OOD) detection and OOD generalization are widely studied in Deep Neural Networks (DNNs), yet their relationship remains poorly understood. We empirically show that the degree of Neural Collapse (NC) in a network layer is inversely related with these objectives: stronger NC improves OOD detection but degrades generalization, while weaker NC enhances generalization at the cost of detection. This trade-off suggests that a single feature space cannot simultaneously achieve both tasks. To address this, we develop a theoretical framework linking NC to OOD detection and generalization. We show that entropy regularization mitigates NC to improve generalization, while a fixed Simplex Equiangular Tight Frame (ETF) projector enforces NC for better detection. Based on these insights, we propose a method to control NC at different DNN layers. In experiments, our method excels at both tasks across OOD datasets and DNN architectures. Code for our experiments is available: \href{https://yousuf907.github.io/ncoodg}{here}.
\end{abstract}
\section{Introduction}
\label{sec:intro}

\begin{figure}[t]
    \centering
    \includegraphics[width = 0.99\linewidth]{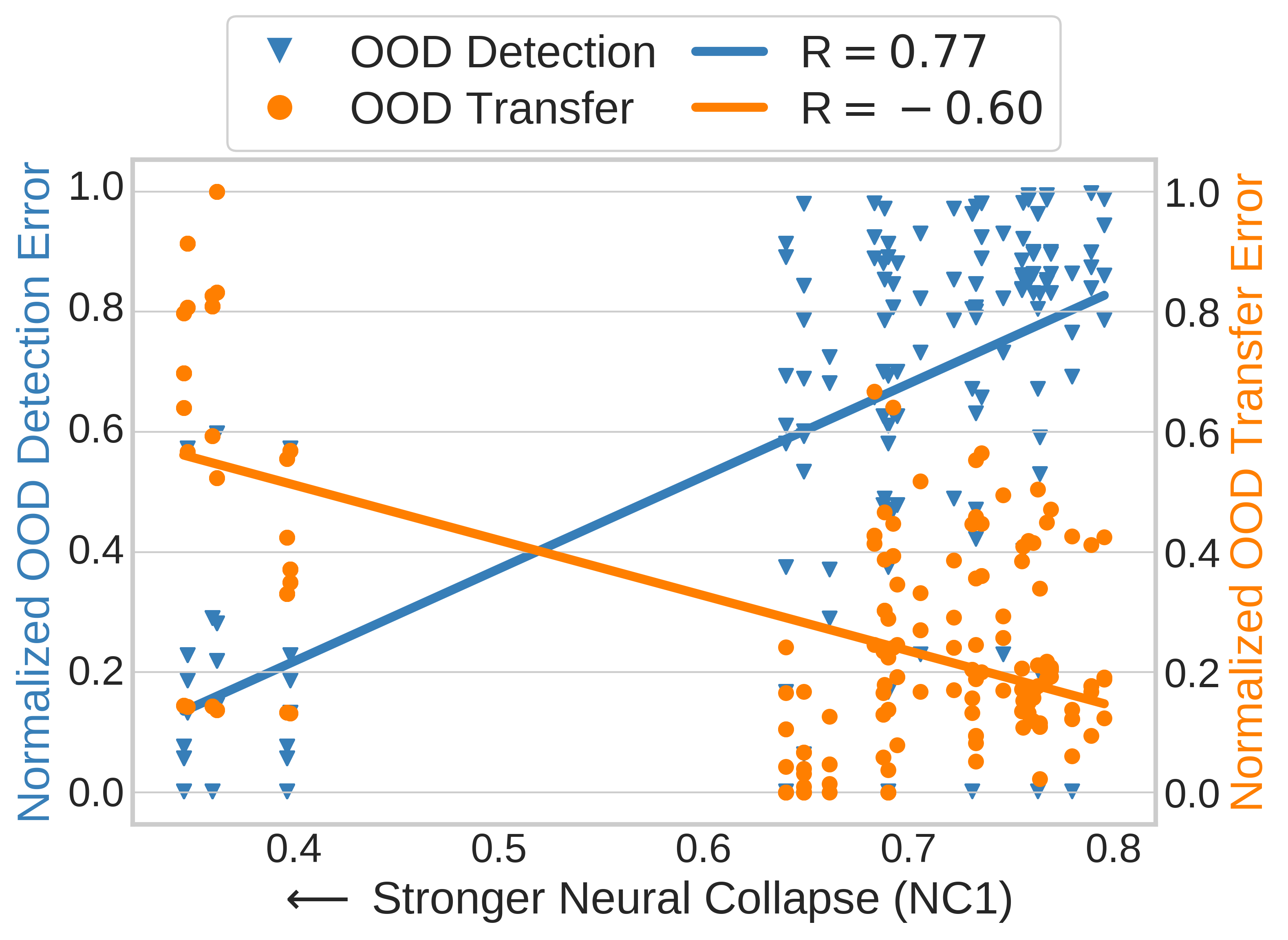}
  \caption{In this paper, we show that there is a close inverse relationship between OOD detection and generalization with respect to the degree of representation collapse in DNN layers. This plot illustrates this relationship for VGG17 pretrained on ImageNet-100 using four OOD datasets, where we measure collapse and OOD performance for various layers. For OOD detection, there is a strong positive Pearson correlation ($R=0.77$) with the degree of neural collapse (NC1) in a DNN layer, whereas for OOD generalization, there is a strong negative correlation ($R=-0.60$). We rigorously examine this inverse relationship and propose a method to control NC at different layers. 
  } 
  \label{fig:vis_abstract}
  \vspace{-0.21in}
\end{figure}

Out-of-distribution (OOD) detection and OOD generalization are two fundamental challenges in deep learning. OOD detection enables deep neural networks (DNNs) to reject unfamiliar inputs, preventing overconfident mispredictions, while OOD generalization allows DNNs to transfer their knowledge to new distributions. For applications like open-world learning, where a DNN continuously encounters new concepts, both capabilities are essential: OOD detection enables new concepts to be identified, while OOD generalization facilitates forward transfer to improve learning new concepts. Despite their importance, they have been studied in isolation. Here, we empirically and theoretically demonstrate a link between both tasks and Neural Collapse (NC), as illustrated in Fig.~\ref{fig:vis_abstract}.

NC is a phenomenon where DNNs develop compact and structured class representations~\cite{papyan2020prevalence}. While NC was first identified in the final hidden layer, later work has found that it occurs to varying degrees in the last $K$ DNN layers~\cite{rangamani2023feature,harun2024variables,sukenikneural2024}. NC has a major impact on both OOD detection and generalization. Strong NC improves OOD detection by forming tightly clustered class features that enhance separation between in-distribution (ID) and OOD data~\cite{haas2023linking, wu2024pursuing, ming2022poem}. Conversely, NC impairs OOD generalization by reducing feature diversity, making it harder to transfer knowledge to novel distributions~\cite{kothapalli2023neural, masarczyk2023tunnel,harun2024variables}. However, past work has considered NC in the context of either OOD detection or OOD generalization \textit{individually}, leaving open the question of how NC affects both tasks \textit{simultaneously}. We are the first to theoretically and empirically examine this relationship.

Here, we establish that the NC exhibited by a DNN layer has an \textbf{inverse relationship} with OOD detection and OOD generalization: \textit{stronger NC improves OOD detection but degrades generalization, while weaker NC enhances generalization at the cost of detection performance}. This trade-off suggests that a single feature space cannot effectively optimize both tasks, motivating the need for a novel approach. 

We propose a framework that strategically controls NC at different DNN layers to optimize both OOD detection and OOD generalization. We introduce entropy regularization to mitigate NC in the encoder, improving feature diversity and enhancing generalization. Simultaneously, we leverage a fixed Simplex Equiangular Tight Frame (ETF) projector to induce NC in the classification layer, improving feature compactness and enhancing detection. This design enables our DNNs to \textit{decouple representations} for detection and generalization, optimizing both objectives simultaneously.

\textbf{Our key contributions are as follows:}
\begin{enumerate}[noitemsep, nolistsep, leftmargin=*]
    \item We present the first unified study linking \textit{Neural Collapse} to both OOD detection and OOD generalization, empirically demonstrating their inverse relationship and extending analyses of NC beyond the final hidden layer.
    \item We develop a theoretical framework that explains how \textbf{entropy regularization mitigates NC} to improve OOD generalization. Additionally, we empirically demonstrate that a \textbf{fixed Simplex ETF projector enforces NC}, enabling effective OOD detection.

    \item In extensive experiments on diverse OOD datasets and DNN architectures, we demonstrate the efficacy of our method compared to baselines.
\end{enumerate}


\section{Background}
\label{sec:background}
\subsection{OOD Detection}
\vspace{-0.5em}

OOD detection methods aim to separate ID and OOD samples by leveraging the differences between their feature representations. Most existing OOD detection methods are \emph{post-hoc}, meaning they apply a scoring function to a model trained exclusively on ID data, without modifying the training process~\cite{salehi2022a}. These methods inherently rely on the properties of the learned feature space to distinguish ID from OOD samples.

Post-hoc detection techniques can be broadly categorized based on the source of their confidence estimates. Density-based methods model the ID distribution probabilistically and classify low-density test points as OOD~\cite{lee2018simple, zisselman2020deep, choi2018waic, jiang2023diverse}. More commonly, confidence-based approaches estimate OOD likelihood using model outputs~\cite{hendrycks2016baseline, liang2017enhancing, liu2020energy}, feature statistics~\cite{sun2021react, zhu2022boosting, sun2022out}, or gradient-based information~\cite{huang2021importance, wu2024low, lee2023probing, igoe2022useful}.

Since post-hoc methods depend on the representations learned during ID training, their effectiveness is fundamentally constrained by the quality of those features~\cite{roady2020open}. Highly compact, well-separated ID representations generally improve OOD detection by reducing feature overlap with OOD samples. For example, \citet{haas2023linking} demonstrated that $L_2$ normalization of penultimate-layer features induces NC, enhancing ID-OOD separability. Similarly, \citet{wu2024pursuing} introduced a regularization loss that enforces orthogonality between ID and OOD representations, leveraging NC-like properties to improve detection. 
NECO~\cite{ammar2024neco}, a post-hoc OOD detection method, leverages NC and the orthogonality between ID and OOD samples to achieve state-of-the-art performance. However, unlike our approach, NECO and other methods do not focus on OOD generalization or representation learning.

Another representation learning approach is to learn representations explicitly tailored for OOD detection by incorporating OOD samples during training~\cite{wu2024pursuing, bai2023feed, katz2022training, ming2022poem}. These methods encourage models to assign lower confidence~\cite{hendrycks2018deep} or higher energy~\cite{liu2020energy} to OOD inputs. However, this approach presents significant challenges, as the space of possible OOD data is essentially infinite, making it impractical to represent all potential OOD variations. Moreover, strong OOD detection performance often comes at the cost of degraded OOD generalization~\cite{zhang2024best}, as representations optimized for separability may lack the diversity needed for adaptation to novel distributions.

\subsection{Transfer Learning and OOD Generalization}
\vspace{-0.5em}

Transfer learning and OOD generalization methods focus on learning features that remain effective across distribution shifts. Robust transfer is particularly important in open-world learning scenarios, where models must not only adapt to new distributions. While continual learning has largely focused on forgetting~\cite{HayesEtAlCVPRW2018NewMetrics}, its true potential is only unlocked if forward transfer increases over time, thereby improving sample efficiency as learning progresses. To facilitate generalization, techniques such as feature alignment~\cite{li2018domain, ahuja2021invariance, zhao2020domain, ming2024hypo}, ensemble/meta-learning~\cite{balaji2018metareg, li2018metalearning, li2019episodic, bui2021exploiting}, robust optimization~\cite{rame2022fishr, cha2021swad, krueger2021out, shi2021gradient}, data augmentation~\cite{nam2021reducing, nuriel2021permuted, zhou2020learning}, and feature disentanglement~\cite{zhang2022towards} have been proposed.

Key properties of learned features significantly impact generalization to unseen distributions. Studies examining factors that affect OOD generalization emphasize that feature diversity is essential for robustness~\cite{masarczyk2023tunnel, harun2024variables, kornblith2021better, fang2024does, ramanujan2024connection, kolesnikov2020big, vishniakov2024convnet}. Notably, recent work~\cite{kothapalli2023neural, masarczyk2023tunnel, harun2024variables} suggests that progressive feature compression in deeper layers, linked to NC emergence, can hinder OOD generalization by reducing representation expressivity. 

\subsection{Neural Collapse}
\label{sec:nc_background}
\vspace{-0.5em}

As noted earlier, NC arises when class features become tightly clustered, often converging toward a Simplex ETF~\cite{papyan2020prevalence, kothapalli2023neural, zhu2021geometric, han2022neural}. Initially, NC was studied primarily in the final hidden layer, but later work demonstrated that NC manifests to varying degrees in earlier layers as well~\cite{rangamani2023feature, harun2024variables}. In image classification experiments, \citet{harun2024variables} showed that the degree of intermediate NC is heavily influenced by the properties of the training data, including the number of ID classes, image resolution, and the use of augmentations.

NC is characterized by four main properties:
\begin{enumerate}[noitemsep, nolistsep, leftmargin=*]
    \item \textbf{Feature Collapse} ($\mathcal{NC}1$): Features within each class concentrate around a single mean, exhibiting minimal intra-class variability.
    \item \textbf{Simplex ETF Structure} ($\mathcal{NC}2$): When centered at the global mean, class means lie on a hypersphere with maximal pairwise distances, forming a Simplex ETF.
    \item \textbf{Self-Duality} ($\mathcal{NC}3$): The last-layer classifiers align tightly with their corresponding class means, creating a nearly self-dual configuration.
    \item \textbf{Nearest Class Mean Decision} ($\mathcal{NC}4$): Classification behaves like a nearest-centroid scheme, assigning classes based on proximity to class means.
\end{enumerate}

While NC's structured representations can aid OOD detection by ensuring strong class separability~\cite{haas2023linking, wu2024pursuing}, the same compression may limit the feature diversity needed for generalization. One proposed explanation is the \emph{Tunnel Effect Hypothesis}~\cite{masarczyk2023tunnel}, which suggests that as features become increasingly compressed in deeper layers, generalization to unseen distributions is impeded.

\section{Problem Definition}
\label{sec:problem_definition}

We consider a supervised multi-class classification problem where the input space is \( \mathcal{X} \) and the label space is \( \mathcal{Y} = \{1, 2, \dots, K\} \). 
A model parameterized by \( \theta \), \( f_\theta: \mathcal{X} \to \mathbb{R}^K \), is trained on ID data drawn from \( P_{\mathcal{X}\mathcal{Y}}^{\text{ID}} \), to produce logits, \(f_\theta (x)\) which are used to predict labels. 
For robust operation in real-world scenarios, the model must classify samples from \( P_{\mathcal{X}\mathcal{Y}}^{\text{ID}} \) correctly and identify OOD samples from \( P_{\mathcal{X}\mathcal{Y}'}^{\text{OOD}} \) which represents the distribution with no overlap with the ID label space, i.e., \( \mathcal{Y} \cap \mathcal{Y}' = \emptyset \).

At test time, the objective of OOD detection is to determine whether a given sample \( x \) originates from the ID or an OOD source. This can be achieved by using a threshold-based decision rule through level set estimation, defined as:
\begin{equation*}
    G_\lambda(x) = 
    \begin{cases} 
    \text{ID}, & S(x) \geq \lambda \\ 
    \text{OOD}, & S(x) < \lambda
    \end{cases}
\end{equation*}
where \( S (\cdot) \) is a scoring function, and samples with \( S(x) \geq \lambda \) are classified as ID, while those with \( S(x) < \lambda \) are classified as OOD. \( \lambda \) denotes the threshold.

On the OOD generalization part, the objective is to build a model \( f^*_\theta: \mathcal{X} \rightarrow \mathbb{R}^{K} \), using the ID data such that it learns \emph{transferable} representations and becomes adept at both ID task and OOD downstream tasks. This is a challenging problem since we do not have access to OOD data during training.
In both OOD detection and OOD generalization, the label space is disjoint between ID and OOD sets.

\textbf{Differences from prior works.}
Prior works~\cite{zhang2024best, wang2024bridging, bai2023feed} focusing on OOD detection and OOD generalization, define the problem differently than us. For OOD detection, they use ``\emph{semantic OOD}'' data from \( P_{\mathcal{X}\mathcal{Y}'}^{\text{SEM}} \) that has no semantic overlap with known label space from \( P_{\mathcal{X}\mathcal{Y}}^{\text{ID}} \), i.e., \( \mathcal{Y} \cap \mathcal{Y}' = \emptyset \). However, for OOD generalization, they use ``\emph{covariate-shifted OOD}'' data from \( P_{\mathcal{X}\mathcal{Y}}^{\text{COV}} \), that has the same label space as \( P_{\mathcal{X}\mathcal{Y}}^{\text{ID}} \) but with shifted marginal distributions \( P_\mathcal{X}^{\text{COV}} \) due to noise or corruption. Furthermore, they use additional semantic OOD training data, \( P_{\mathcal{X}\mathcal{Y}'}^{\text{SEM}} \) during the training phase.
Our problem definition is fundamentally more challenging and practical than the prior works because: (1) we aim to detect semantic OOD samples, \( P_{\mathcal{X}\mathcal{Y}'}^{\text{SEM}} \) without access to auxiliary OOD data during training, and (2) we aim to generalize to semantic OOD samples that belong to novel semantic categories.

\textbf{Evaluation Metrics.} We define ID generalization error ($\mathcal{E}_{\text{ID}}$), OOD generalization error ($\mathcal{E}_{\text{GEN}}$), and OOD detection error ($\mathcal{E}_{\text{DET}}$) as follows:
\begin{enumerate}[noitemsep, nolistsep, leftmargin=*]
    \item $\downarrow \mathcal{E}_{\text{ID}} := 1 - \mathbb{E}_{(\bar{x}, y) \sim P^{\text{ID}}} \big(\mathbbm{I}\{\hat{y}(f_\theta(\bar{x})) = y\}\big)$,
    \item $\downarrow \mathcal{E}_{\text{GEN}} := 1 - \mathbb{E}_{(\bar{x}, y) \sim P^{\text{OOD}}} \big(\mathbbm{I}\{\hat{y}(f_\theta(\bar{x})) = y\}\big)$,
    \item $\downarrow \mathcal{E}_{\text{DET}} := \mathbb{E}_{\bar{x} \sim P^{\text{OOD}}} \big(\mathbbm{I}\{G_\lambda(\bar{x}) = \text{ID}\}\big)$,
\end{enumerate}
where $\mathbbm{I}\{\cdot\}$ denotes the indicator function, and the arrows indicate that lower is better. For OOD detection, ID samples are considered positive. FPR95 (false positive rate at 95\% true positive rate) is used as $\mathcal{E}_{\text{DET}}$. Details are in Appendix.

\textbf{OOD Detection.}
Following earlier work~\cite{sun2021react, liu2020energy}, we consider the energy-based scoring~\cite{liu2020energy} since it operates with logits and does not require any fine-tuning or hyper-parameters.
Scoring in energy-based models is defined as  
\begin{equation*}
    S(x) = -\log \sum_{k=1}^K \exp{\left( f_k\left(x\right) \right)}
\end{equation*}
where the \( k \)-th logit, \( f_k(x) \), denotes the model's confidence for assigning \( x \) to class \( k \). Note that \citet{liu2020energy} uses the negative energy, meaning that OOD samples should obtain high energy, hence low $S(x)$. See Fig.~\ref{fig:more_eng_id_ood} as an example.

\textbf{OOD Generalization.}
For evaluating OOD generalization, we consider linear probing which is widely used to evaluate the transferability of learned embeddings to OOD datasets~\cite{alain2016understanding,masarczyk2023tunnel,zhu2022ood_probe,waldis2024dive_probe,grill2020bootstrap,he2020momentum}. 
For a given OOD dataset, we extract embeddings from a pre-trained model. A linear probe (MLP classifier) is then attached to map these embeddings to OOD classes. The probe is trained and evaluated on OOD data.



\begin{figure}[h]
    \centering
    \includegraphics[trim={0 22em 0 50em},clip, width = 0.99\linewidth]{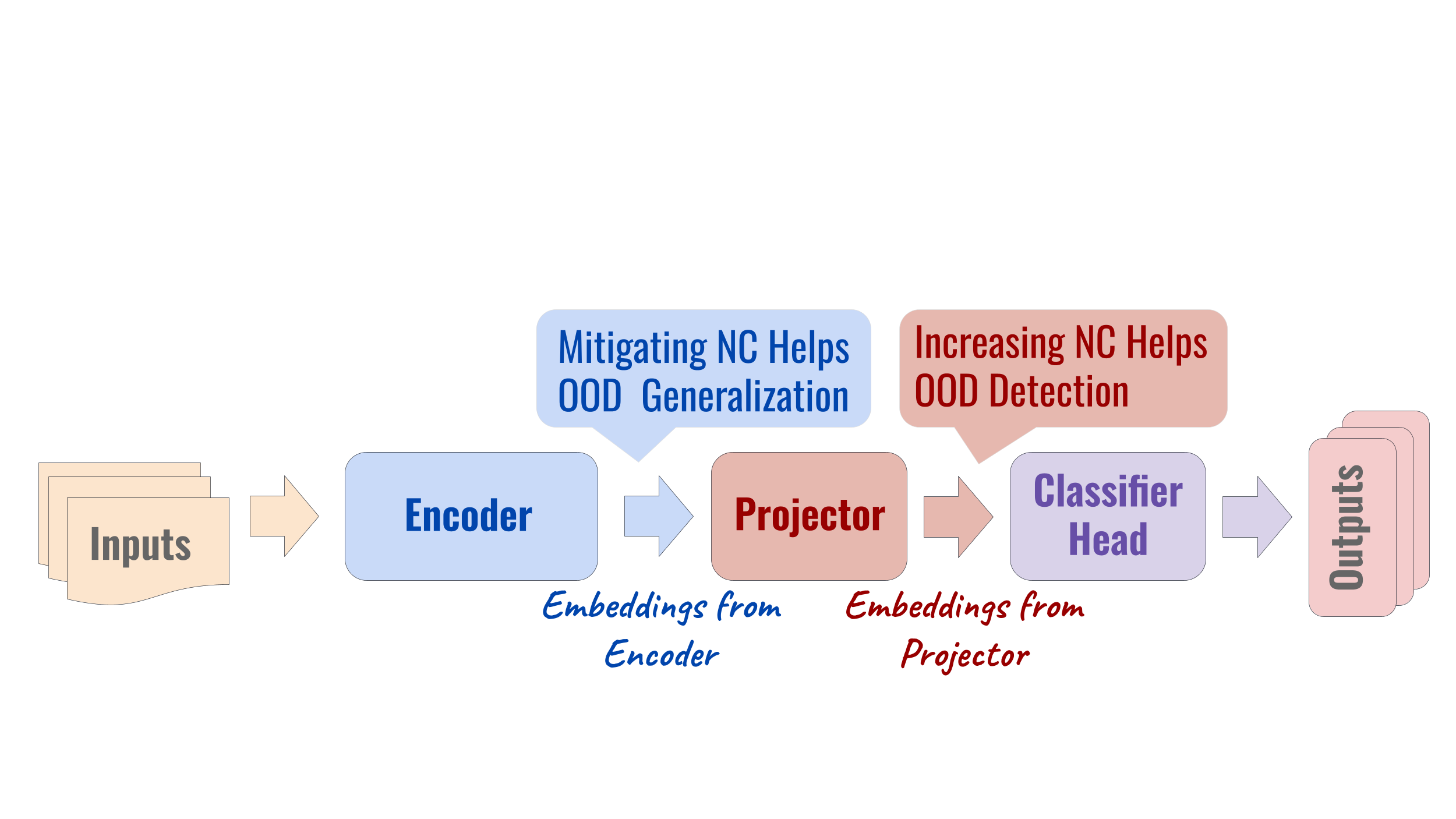}
  \caption{\textbf{Implication of Neural Collapse.} 
  Mitigating NC in the encoder enhances OOD generalization whereas increasing NC in the projector improves OOD detection.} 
  \label{fig:nc_implication}
  \vspace{-0.12in}
\end{figure}


\section{Controlling Neural Collapse}
\label{sec:framework}

Typically, penultimate-layer embeddings from a pre-trained DNN are used for downstream tasks. However, using the same embedding space for both OOD detection and OOD generalization is suboptimal due to their conflicting objectives. We therefore propose separate embedding spaces at different layers—one for OOD detection, another for OOD generalization. Specifically, we attach a \texttt{projector} network $g(\cdot)$ to the DNN backbone $f(\cdot)$ (the \texttt{encoder}) and add a \texttt{classifier head} $h(\cdot)$ on top. Given an input $\mathbf{x}$, the encoder outputs $\mathrm{\mathbf{f}} = f(\mathbf{x})$, e.g., a 512-dimensional vector for ResNet18. The projector then maps $\mathrm{\mathbf{f}}$ to $\mathrm{\mathbf{g}} = g(\mathrm{\mathbf{f}})$, and finally the classifier produces logits $\mathrm{\mathbf{h}} = h(\mathrm{\mathbf{g}}) \in \mathbb{R}^K$.

The encoder is trained to prevent NC and encourage transferable representations for OOD generalization, while the projector is designed to induce NC, producing collapsed representations beneficial for OOD detection.  
A high-level illustration is provided in Fig.~\ref{fig:nc_implication}.
For OOD detection and ID classification tasks, the entire network ($h \circ g \circ f$) is utilized, assuming projector embedding $\mathrm{\mathbf{g}}$ is most discriminative among all layers.
Whereas the encoder alone is utilized for OOD generalization, assuming encoder embedding $\mathrm{\mathbf{f}}$ is most transferable among all layers.
In the following subsections, we will portray how we can build these collapsed and transferable representations.


\subsection{Entropy Regularization Mitigates Neural Collapse}
\label{sec:entropy_and_collapse}

In this section, we provide a theoretical justification for using an entropy regularizer to prevent or mitigate \emph{intermediate neural collapse} (NC1) in deep networks. 
By ``intermediate'' we mean that the collapse occurs in hidden layers.

\paragraph{Setup and Notation.}
Let $L$ be the total number of layers in our network, and $\ell \in \{1,2,\ldots,L\}$ the intermediate layer index. 
We denote the embedding (activation) in layer~$\ell$ for the $i$-th sample $\mathbf{x}_i$ as $ \mathbf{z}_{\ell,i} = f_\ell(\mathbf{x}_i)$, where $\mathbf{z}_{\ell,i} \in \mathbb{R}^{d_\ell}$.
Suppose we have $K$ classes, labeled by $1,\dots,K$.  We can view the random variable $\mathbf{Z}_\ell$ (the layer-$\ell$ embeddings) as distributed under the data distribution according to
\begin{equation*}
  p_\ell(\mathbf{z}) 
  \;=\; 
  \sum_{k=1}^K \pi_k \, p_{\ell,k}(\mathbf{z})
\end{equation*}
where $\pi_k = \Pr(y = k)$ is the class prior, and $p_{\ell,k}(\mathbf{z})$ is the class\text{–}conditional distribution of $\mathbf{Z}_\ell$ for label~$k$. 

\paragraph{Intermediate Neural Collapse (NC1).}
Empirically, \emph{neural collapse} is observed when the within-class covariance of these embeddings shrinks as training proceeds. Formally, for each class~$k$, the distribution $p_{\ell,k}$ concentrates around its class mean $\boldsymbol{\mu}_{\ell,k} \in \mathbb{R}^{d_\ell}$, resulting in:
\[
  \mathrm{Trace}(\boldsymbol{\Sigma}_{\ell,k}) 
  \;\to\; 0,
  \quad\text{where}\quad
  \boldsymbol{\Sigma}_{\ell,k} 
  = 
  \mathrm{Cov}\bigl(\mathbf{Z}_\ell \,\mid\, y=k\bigr).
\]
Although often highlighted in the \emph{penultimate} layer, such collapse can appear across the final layers of a DNN~\cite{rangamani2023feature,harun2024variables}.

\paragraph{Differential Entropy and Collapsing Distributions.}
For a continuous random variable $\mathbf{Z}_\ell \in \mathbb{R}^{d_\ell}$ with density $p_\ell(\mathbf{z})$, the \emph{differential entropy} is given by
\begin{equation*}
   H\bigl(p_\ell\bigr) 
   \;=\; 
   - \int_{\mathbb{R}^{d_\ell}}
     p_\ell(\mathbf{z}) \,\log p_\ell(\mathbf{z})\, d\mathbf{z}
\end{equation*}
It is well known that if $p_{\ell,k}$ collapses to a delta (or near-delta) around $\boldsymbol{\mu}_{\ell,k}$, then $H(p_{\ell,k}) \to -\infty$ \cite{cover1999elements}.  Consequently, a \emph{mixture} of such collapsing class--conditional distributions also attains arbitrarily negative entropy.  The following proposition formalizes this point. 

\begin{proposition}[Entropy under Class--Conditional Collapse]
\label{prop:collapse_entropy}
Consider a mixture distribution 
\(
  p_\ell(\mathbf{z}) \;=\; \sum_{k=1}^K \pi_k\,p_{\ell,k}(\mathbf{z})
\)
on $\mathbb{R}^{d_\ell}$.
Suppose that, for each $k$, $p_{\ell,k}$ becomes arbitrarily concentrated around a single point $\boldsymbol{\mu}_{\ell,k}$.
In the limit where each $p_{\ell,k}$ approaches a Dirac delta, the differential entropy $H\bigl(p_\ell\bigr)$ diverges to $-\infty$. 
\end{proposition}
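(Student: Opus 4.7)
The plan is to reduce the entropy of the mixture to a sum of class-conditional entropies via a standard information-theoretic identity, and then to show that each class-conditional entropy individually diverges to $-\infty$ using the Gaussian maximum-entropy inequality. Because a Dirac delta is not a density, I would first make the limit precise as a sequence of class-conditional densities $p_{\ell,k}^{(n)}$ whose covariance matrices $\boldsymbol{\Sigma}_{\ell,k}^{(n)}$ satisfy $\mathrm{Trace}(\boldsymbol{\Sigma}_{\ell,k}^{(n)}) \to 0$ as $n \to \infty$. This matches the empirical $\mathcal{NC}1$ condition already cited in the excerpt and produces a weak limit at $\boldsymbol{\mu}_{\ell,k}$, so it is the natural formalization of the proposition's hypothesis.

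The first core step is the mixture decomposition. Introducing the class label $Y$ with $\Pr(Y=k)=\pi_k$, the chain rule gives
\begin{equation*}
   H(\mathbf{Z}_\ell)
   \;=\;
   H(\mathbf{Z}_\ell \mid Y) + H(Y) - H(Y \mid \mathbf{Z}_\ell).
\end{equation*}
Since $H(Y \mid \mathbf{Z}_\ell) \geq 0$, $H(Y) \leq \log K$, and $H(\mathbf{Z}_\ell \mid Y) = \sum_k \pi_k\, H(p_{\ell,k})$, this yields the clean upper bound
\begin{equation*}
   H(p_\ell) \;\leq\; \sum_{k=1}^K \pi_k \, H(p_{\ell,k}) + \log K,
\end{equation*}
which reduces the claim to controlling each summand separately.

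The second core step is to show $H(p_{\ell,k}^{(n)}) \to -\infty$. The Gaussian maximum-entropy inequality bounds any density on $\mathbb{R}^{d_\ell}$ with covariance $\boldsymbol{\Sigma}$ by $\tfrac{1}{2}\log\!\bigl((2\pi e)^{d_\ell}\det(\boldsymbol{\Sigma})\bigr)$. Applying this to $\boldsymbol{\Sigma}_{\ell,k}^{(n)}$ and combining with the AM-GM bound $\det(\boldsymbol{\Sigma}_{\ell,k}^{(n)}) \leq \bigl(\mathrm{Trace}(\boldsymbol{\Sigma}_{\ell,k}^{(n)})/d_\ell\bigr)^{d_\ell}$ gives
\begin{equation*}
   H(p_{\ell,k}^{(n)})
   \;\leq\;
   \tfrac{d_\ell}{2}\log\!\Bigl(\tfrac{2\pi e}{d_\ell}\,\mathrm{Trace}(\boldsymbol{\Sigma}_{\ell,k}^{(n)})\Bigr) \;\longrightarrow\; -\infty.
\end{equation*}
Plugging this into the mixture bound from the previous step completes the argument, since the $+\log K$ additive term is bounded.

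The main obstacle I anticipate is formalization rather than calculation: differential entropy is not continuous under weak convergence and is not even defined for the singular delta limit, so the statement must be read as divergence of the sequence $H(p_\ell^{(n)})$ rather than as an entropy ``at'' the delta. A secondary point worth flagging is that the Gaussian bound requires finite second moments; this is implicit in any $\mathcal{NC}1$-style framing that tracks $\mathrm{Trace}(\boldsymbol{\Sigma}_{\ell,k})$, but should be made an explicit standing hypothesis to make the proof watertight.
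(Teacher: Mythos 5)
Your proof is correct and its core step is the same as the paper's: both condition on the latent class label and use the chain rule together with \(H(Y\mid \mathbf{Z}_\ell)\ge 0\) to bound the mixture entropy by the weighted sum of class-conditional entropies plus a finite additive term (the paper keeps \(H(Y)\) explicitly; you further bound it by \(\log K\), which changes nothing). Where you genuinely go beyond the paper is in the second step: the paper simply \emph{assumes} that each component entropy \(H(p_{\ell,k})\) diverges to \(-\infty\) along the collapsing family, offering the isotropic Gaussian \(H = \tfrac{d_\ell}{2}\log(2\pi e\,\epsilon)\) only as an illustrative example, whereas you derive the divergence for arbitrary densities from the \(\mathcal{NC}1\) condition \(\mathrm{Trace}(\boldsymbol{\Sigma}_{\ell,k})\to 0\) via the Gaussian maximum-entropy inequality combined with the AM--GM bound \(\det(\boldsymbol{\Sigma})\le(\mathrm{Trace}(\boldsymbol{\Sigma})/d_\ell)^{d_\ell}\). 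This buys a strictly stronger result: the conclusion follows from vanishing within-class covariance alone (the quantity the paper actually measures), rather than from a separately postulated entropy divergence of each component. Your caveats about finite second moments and about reading the statement as divergence along a sequence rather than entropy ``at'' the delta are both apt and are points the paper's version glosses over.
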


\begin{proof}[Proof Sketch]
If each $p_{\ell,k}$ is a family of densities approaching $\delta(\mathbf{z}-\boldsymbol{\mu}_{\ell,k})$, the individual entropies $H\bigl(p_{\ell,k}\bigr)$ go to $-\infty$. The entropy of the mixture can be bounded above by the weighted sum of $H\bigl(p_{\ell,k}\bigr)$ plus a constant that depends on the mixture overlap. Hence, the overall mixture entropy also diverges to $-\infty$. Appendix~\ref{sec:details_proposition} has the full proof.
\end{proof}

\paragraph{Entropy Regularization to Mitigate Collapse.}
We see from Proposition~\ref{prop:collapse_entropy} that if \emph{all} class--conditional distributions collapse to near-deltas, the layer's overall density $p_\ell(\mathbf{z})$ has differential entropy $H(p_\ell)\to -\infty$.  Since standard classification objectives can favor very tight class clusters (e.g., to sharpen decision boundaries), one can counteract this by \emph{maximizing} $H\bigl(p_\ell\bigr)$.

Concretely, we augment the training loss $\mathcal{L}_{\mathrm{cls}}(\theta)$ with a negative--entropy penalty:
\begin{equation}
\label{eq:entropy_regularizer}
   \mathcal{L}_{\mathrm{total}}(\theta)
   \;=\;
   \mathcal{L}_{\mathrm{cls}}(\theta)
   \;-\;
   \alpha \, H\bigl(p_\ell(\mathbf{z}\mid\theta)\bigr),
\end{equation}
where $\alpha>0$ is a hyperparameter.  As $\boldsymbol{\Sigma}_{\ell,k}\to 0$ would force $H\bigl(p_\ell\bigr)$ to $\!-\infty$ (cf.\ Proposition~\ref{prop:collapse_entropy}), the additional term $-\alpha H\bigl(p_\ell\bigr)$ becomes unboundedly large.  Therefore, the model is compelled to maintain \emph{nonzero within--class variance} for each class distribution, preventing complete layer collapse.

Since we do not have direct access to $p_\ell(\mathbf{z})$, we need to estimate $H\bigl(p_\ell\bigr)$ using a data-driven density estimation approach. 
In particular, prior work~\cite{kozachenko1987sample, beirlant1997nonparametric} shows that the differential entropy can be estimated by nearest neighbor distances.

Given a batch of \( N \) random representations \( \{\mathbf{z}_n\}_{n=1}^N \), the nearest neighbor entropy estimate is given by
\begin{equation*}
    H\bigl(p_\ell\bigr) \approx \frac{1}{N} \sum_{n=1}^N \log \left( N \min_{i \in [N], i \neq n} \| \mathbf{z}_n - \mathbf{z}_i \|_2 \right) + \ln 2 + \text{EC}
\end{equation*}
where EC denotes the Euler constant. For our purposes, we can simplify the entropy maximization objective by removing affine terms, resulting in the following loss function:

\begin{equation*}
    \mathcal{L}_{\mathrm{reg}}(\theta) = -\frac{1}{N} \sum_{n=1}^N \log \left( \min_{i \in [N], i \neq n} \| \bar{\mathbf{z}}_n - \bar{\mathbf{z}}_i \|_2 \right)
\end{equation*}
Total loss becomes: $\mathcal{L}_{\mathrm{total}}(\theta) = \mathcal{L}_{\mathrm{cls}}(\theta) + \alpha \, \mathcal{L}_{\mathrm{reg}}(\theta)$.
Intuitively, $\mathcal{L}_{\mathrm{reg}}$ maximizes the distance between the nearest pairs in the batch, encouraging an even spread of representations across the embedding space.
The pairwise distances can be sensitive to outliers with large magnitudes. Therefore, in our method, the loss operates on the hyperspherical embedding space with the unit norm, i.e., \( \bar{\mathbf{z}} = \mathbf{z} / ||\mathbf{z}||_2 \). Note that unlike $\mathcal{L}_{\mathrm{cls}}$ acting on classifier head, $\mathcal{L}_{\mathrm{reg}}$ is applied in encoder for mitigating NC. 


Although various loss functions including cross-entropy (CE) and mean-squared-error (MSE) lead to NC, others produce less transferable features than CE~\cite{zhou2022all, kornblith2021better}. We also find that CE outperforms MSE in both OOD detection and OOD generalization (see Table~\ref{tab:mse_ce_comp}).
Therefore, we consider CE loss for $\mathcal{L}_{\mathrm{cls}}$ in Equation~\ref{eq:entropy_regularizer}. 
It has been found that using label smoothing with CE loss intensifies NC properties when compared with the regular CE loss~\cite{zhou2022all, kornblith2021better}. Therefore, we use label smoothing with CE loss to expedite NC properties in the projector and classifier head.


In addition to $\mathcal{L}_{\mathrm{reg}}$ mitigating NC, we consider alternatives to batch normalization (BN).
In the context of learning transferable representations in the encoder, batch dependency, especially using BN, is sub-optimal as OOD data statistically differs from ID data.
Therefore, for all layers in the encoder, we replace batch normalization with a batch-independent alternative, particularly, a combination of group normalization (GN)~\cite{wu2018group} and weight standardization (WS)~\cite{qiao2019micro} to enhance OOD generalization.

\subsection{Simplex ETF Projector for Inducing Collapse}
\label{sec:etf_projector_nc}

When a DNN enters into NC phase, the class-means converge to a simplex ETF (equinorm and
maximal equiangularity) in collapsed layers (NC2 criterion). This implies that fixing the collapsed layers to be ETFs does not impair ID performance~\cite{rangamani2023feature, zhu2021geometric}.
In this work, we induce NC in the projector to improve OOD detection performance. 
We do it by fixing the projector to be simplex ETF, acting as an architectural inductive bias.

Our projector comprises two MLP layers sandwiched between encoder and classifier head. 
We set the projector weights to simplex ETFs and keep them frozen during training.
In particular, each MLP layer is set to be a rank $\mathcal{D}-1$ simplex ETF, where $\mathcal{D}$ denotes width or output feature dimension. 
The rank $\mathcal{D}$ canonical simplex ETF is:
\begin{equation*}
    \sqrt{\frac{\mathcal{D}}{\mathcal{D}-1}} (\mathbf{I}_\mathcal{D} - \frac{1}{\mathcal{D}} \mathbf{1}_\mathcal{D} \mathbf{1}_\mathcal{D}^T)
\end{equation*}
Details on the projector are given in Appendix~\ref{sec:arch_details}.
We further apply $L_2$ normalization to the output of the projector since it constraints features to achieve equinormality 
and helps induce early neural collapse~\cite{haas2023linking}.

While prior work has found that incorporating a projector improves transfer in supervised learning~\cite{wang2022revisiting}, the objective of our projector is to impede transfer. The difference is that a projector is typically trained along with the backbone, whereas in our method the projector is configured as Simplex ETF and kept frozen during training.

\section{Experimental Setup}
\label{sec:exp_setting}

\noindent
\textbf{Datasets.}
For ID dataset, we use ImageNet-100~\cite{tian2020contrastive}- a subset (100 classes) of ImageNet-1K~\cite{russakovsky2015imagenet}. 
To assess OOD generalization and OOD detection, we study eight commonly used OOD datasets: NINCO~\cite{bitterwolf2023ninco}, ImageNet-R~\citep{hendrycks2021many}, CIFAR-100~\cite{krizhevsky2014cifar}, Oxford 102 Flowers~\citep{nilsback2008automated}, CUB-200~\citep{wah2011caltech}, Aircrafts~\citep{maji2013fine}, Oxford-IIIT Pets~\citep{parkhi2012cats}, and STL-10~\citep{coates2011analysis}. 
Dataset details are given in Appendix~\ref{sec:datasets}.

\noindent
\textbf{DNN Architectures.}
We train and evaluate a representative set of DNN architectures including VGG~\cite{Simonyan15}, ResNet~\cite{he2016deep}, and ViT~\cite{dosovitskiy2020image}. 
In total, we experiment with five backbones: VGG17, ResNet18, ResNet34, ViT-Tiny, and ViT-Small.
Our projector is composed of two MLP layers for all DNN architectures. 
Details are given in Appendix~\ref{sec:arch_details}.

\noindent
\textbf{NC Metrics ($\mathcal{NC}1$–$\mathcal{NC}4$).}
We use four metrics, $\mathcal{NC}1$, $\mathcal{NC}2$, $\mathcal{NC}3$, and $\mathcal{NC}4$, as described in~\cite{zhu2021geometric, zhou2022all}, to evaluate the NC properties of the DNN features and classifier. These metrics correspond to four NC properties outlined in Sec.~\ref{sec:nc_background}. 
Note that $\mathcal{NC}1$ is the most dominant indicator of neural collapse.
We describe each NC metric in detail in the Appendix~\ref{sec:nc_metrics}. 

\noindent
\textbf{Training Details.}
In our main experiments, we train different DNN architectures e.g., VGG17, ResNet18, and ViT-T on ImageNet-100 for 100 epochs.
The Entropy regularization loss $\mathcal{L}_{\mathrm{reg}}$ is modulated with $\alpha=0.05$. 
We use AdamW~\cite{loshchilov2017decoupled}  optimizer and cosine learning rate scheduler with a linear warmup of 5 epochs. For a batch size of 512, we set the learning rate to $6\times 10^{-3}$ for VGG17, $0.01$ for ResNet18, and $8\times 10^{-4}$ for ViT-T. For all models, we set the weight decay to $0.05$ and the label smoothing to $0.1$. 
In all our experiments, we use $224\times 224$ images. And, we use random resized crop and random horizontal flip augmentations.
Linear probes are attached to the encoder and projector layers of a pre-trained model and trained on extracted embeddings of OOD data using the AdamW optimizer and CE loss for 30 epochs. Additional implementation details are given in Appendix~\ref{sec:implementation}.

\noindent
\textbf{Baselines.}
Recent work defines the problem differently where they focus on OOD detection and covariate OOD generalization (same labels but different input distribution). Our problem setup focuses on OOD detection and semantic OOD generalization (different labels and different input distribution). Adapting other methods to our problem setup will require major modifications, hence we cannot compare directly with them. 
We compare the proposed method with baselines that do not use any of our mechanisms e.g., entropy regularization or fixed simplex ETF projector.
Additionally, in Sec.~\ref{sec:sota_comp}, we include a comparison with NECO~\cite{ammar2024neco}, a state-of-the-art OOD detection method that leverages NC properties.

\section{Experimental Results}
\label{sec:exp_results}

Sec.~\ref{sec:impact_nc} shows how controlling NC improves representations for OOD detection and generalization. We compare with baselines in Sec.~\ref{sec:baseline_comp}, and analyze the roles of entropy regularization and the ETF projector in Sec.~\ref{sec:entropy_reg_nc} and Sec.~\ref{sec:etf_proj_nc}, respectively. 
Sec.~\ref{sec:sota_comp} presents a comparison with NECO, and Sec.~\ref{sec:ablations} summarizes additional results.

\subsection{Impact of Controlling NC}
\label{sec:impact_nc}

\begin{table}[t]
\centering
  \caption{\textbf{Main Results (Encoder vs. Projector).} Various DNNs are trained on ImageNet-100 dataset (ID) and evaluated on eight OOD datasets. 
  All models incorporate entropy regularization and the ETF projector to control NC.
  Reported $\mathcal{E}_{\text{GEN}}$ (\%) and $\mathcal{E}_{\text{DET}}$ (\%) are averaged over eight OOD datasets. 
  \textbf{A lower $\mathcal{NC}$ indicates stronger neural collapse.} $+\Delta_{E \rightarrow P}$ and $-\Delta_{E \rightarrow P}$ indicate \% increase and \% decrease respectively, when changing from the encoder ($E$) to projector ($P$). 
  }
  \label{tab:main_results_summary}
  \centering
  \resizebox{\linewidth}{!}{
     \begin{tabular}{c|c|cccc|c|c}
     \hline 
     \multicolumn{1}{c|}{\textbf{Model}} &
     \multicolumn{1}{c|}{$\boldsymbol{\mathcal{E}}_{\text{ID}}$} &
     \multicolumn{4}{c|}{\textbf{Neural Collapse}} &
     \multicolumn{1}{c|}{$\boldsymbol{\mathcal{E}}_{\text{GEN}}$} &
     \multicolumn{1}{c}{$\boldsymbol{\mathcal{E}}_{\text{DET}}$} \\
     & $\downarrow$ & $\mathcal{NC}1$ & $\mathcal{NC}2$ & $\mathcal{NC}3$ & $\mathcal{NC}4$ & Avg. $\downarrow$ & Avg. $\downarrow$ \\
     \toprule
     \textcolor{blue}{\textbf{VGG17}} \\
     Projector & \textbf{12.62} & 0.393 & 0.490 & 0.468 & 0.316 & 66.36 & \textbf{65.10} \\
     Encoder & 15.52 & 2.175 & 0.603 & 0.616 & 5.364 & \textbf{41.85} & 87.62 \\
     \rowcolor[gray]{0.9}
     $\Delta_{E \rightarrow P}$ & -18.69 & -81.93 & -18.74 & -24.03 & -94.11 & +58.57 & -25.70 \\
    \toprule
    \textcolor{blue}{\textbf{ResNet18}} \\
    Projector & \textbf{16.14} & 0.341 & 0.456 & 0.306 & 0.540 & 63.08 & \textbf{69.70} \\
    Encoder & 20.14 & 1.762 & 0.552 & 0.555 & 10.695 & \textbf{47.72} & 86.17 \\
    \rowcolor[gray]{0.9}
    $\Delta_{E \rightarrow P}$ & -19.86 & -80.65 & -17.39 & -44.86 & -94.95 & +32.19 & -19.11 \\
    \toprule
    \textcolor{blue}{\textbf{ViT-T}} \\
    Projector & \textbf{32.04} & 2.748 & 0.609 & 0.798 & 1.144 & 63.53 & \textbf{83.16} \\
    Encoder & 33.94 & 5.769 & 0.748 & 0.847 & 2.332 & \textbf{52.63} & 90.89 \\
    \rowcolor[gray]{0.9}
    $\Delta_{E \rightarrow P}$ & -5.60 & -52.37 & -18.58 & -5.79 & -50.94 & +20.71 & -8.50 \\
    \bottomrule
    \end{tabular}}
    \vspace{-0.1in}
\end{table}

We investigate whether controlling NC improves OOD detection and generalization by examining NC properties in the encoder and projector. Table~\ref{tab:main_results_summary} summarizes the results across eight OOD datasets. The projector, which exhibits lower NC values (i.e., stronger NC), achieves superior OOD detection ($7.73\%$–$22.52\%$ margin) and lower ID error compared to the encoder. In contrast, the encoder’s higher NC values (i.e., weaker NC) lead to better OOD generalization ($10.90\%$–$24.51\%$ margin) than the projector. Comprehensive results across OOD datasets are given in Appendix~\ref{sec:comprehensive_results}.

We also visualize the encoder and projector embeddings in Fig~\ref{fig:umap_vis} and~\ref{fig:umap_vis_resnet} for deeper insights. Unlike encoder embeddings, projector embeddings cluster tightly around class means (reflecting stronger NC1).
Additionally, Fig.~\ref{fig:umap_id_ood} in the Appendix shows that the projector exhibits greater ID–OOD separation than the encoder. Finally, Fig.~\ref{fig:eng_id_ood} and~\ref{fig:more_eng_id_ood} in the Appendix show that the energy score distribution reveals the projector separates ID from OOD more effectively than the encoder across multiple datasets. These observations explain why the projector excels at OOD detection by exploiting more collapsed features.

Finally, we analyze different layers of VGG17 and ResNet18 models and find that increasing NC strongly correlates with lower OOD detection error and reducing NC strongly correlates with lower OOD generalization error, as shown in Fig.~\ref{fig:vis_abstract} and~\ref{fig:nc_resnet}.
Our experimental results validate that \emph{controlling NC effectively enhances OOD detection and OOD generalization abilities.}

\begin{figure}[t]
    \centering
    \includegraphics[width = 0.99\linewidth]{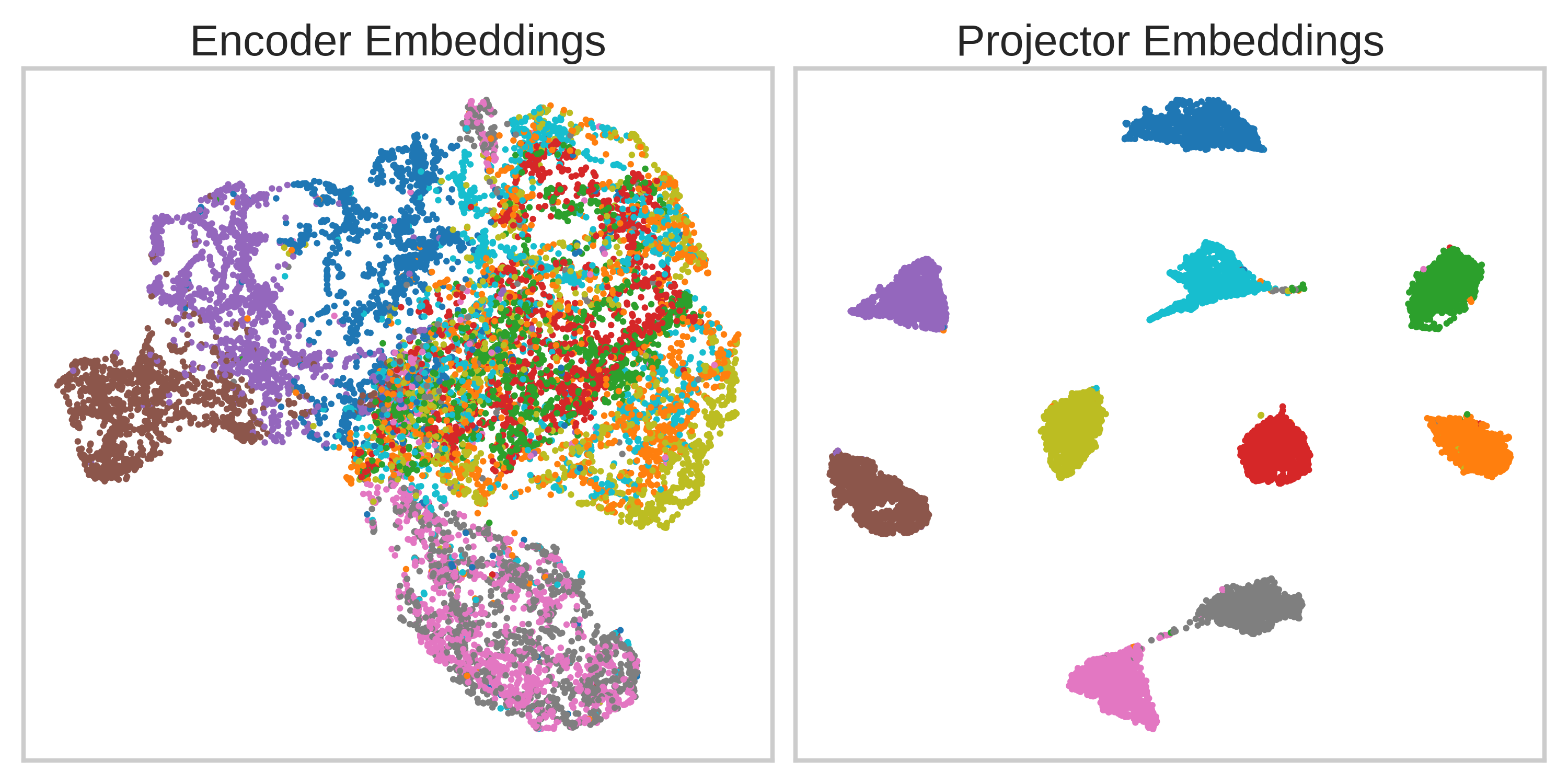}
  \caption{\textbf{UMAP Visualization of Embedding.} The projector embeddings exhibit much greater NC ($\mathcal{NC}1=0.393$) than the encoder embeddings ($\mathcal{NC}1=2.175$) as indicated by the formation of compact clusters around class means. For clarity, we highlight 10 ImageNet classes by distinct colors.
  For this, we use ImageNet-100 pre-trained VGG17.} 
  \label{fig:umap_vis}
\end{figure}

\subsection{Comparison with Baseline}
\label{sec:baseline_comp}


\begin{table}[t]
\centering
  \caption{\textbf{Comparison with Baseline.} Various DNNs e.g., VGG17, ResNet18, and ViT-T are trained on ImageNet-100 dataset (ID). 
  Baseline models do not incorporate mechanisms like entropy regularization or the ETF projector to control NC. NC metrics are computed using the penultimate-layer embeddings.
  Reported $\mathcal{E}_{\text{GEN}}$ (\%) and $\mathcal{E}_{\text{DET}}$ (\%) are averaged over eight OOD datasets. 
  } 
  \label{tab:baseline_comp_summary}
  \centering
  \resizebox{\linewidth}{!}{
     \begin{tabular}{c|c|cccc|c|c}
     \hline 
     \multicolumn{1}{c|}{\textbf{Model}} &
     \multicolumn{1}{c|}{$\boldsymbol{\mathcal{E}}_{\text{ID}}$} &
     \multicolumn{4}{c|}{\textbf{Neural Collapse}} &
     \multicolumn{1}{c|}{$\boldsymbol{\mathcal{E}}_{\text{GEN}}$} &
     \multicolumn{1}{c}{$\boldsymbol{\mathcal{E}}_{\text{DET}}$} \\
     & $\downarrow$ & $\mathcal{NC}1$ & $\mathcal{NC}2$ & $\mathcal{NC}3$ & $\mathcal{NC}4$ & Avg. $\downarrow$ & Avg. $\downarrow$ \\
     \toprule
     VGG17 & 12.18 & 0.766 & 0.705 & 0.486 & 37.491 & 49.54 & 94.92 \\
     \rowcolor[gray]{0.9}
     \textbf{+Ours} & 12.62 & 0.393 & 0.490 & 0.468 & 0.316 & \textbf{41.85} & \textbf{65.10} \\
    \toprule
     ResNet18 & 15.38 & 1.11 & 0.658 & 0.590 & 31.446 & 49.42 & 97.40 \\
     \rowcolor[gray]{0.9}
     \textbf{+Ours} & 16.14 & 0.341 & 0.456 & 0.306 & 0.540 & \textbf{47.72} & \textbf{69.70} \\
     \toprule
     ViT-T & 31.78 & 2.467 & 0.657 & 0.601 & 1.015 & 52.68 & 90.17 \\
     \rowcolor[gray]{0.9}
     \textbf{+Ours} & 32.04 & 2.748 & 0.609 & 0.798 & 1.144 & 52.63 & \textbf{83.16} \\
    \bottomrule
    \end{tabular}}
\end{table}

We want to check how standard DNNs perform without any mechanisms to control NC. As depicted in Table~\ref{tab:baseline_comp_summary}, different DNNs including VGG17, ResNet18, and ViT-T land on higher OOD detection error and OOD generalization error indicating that representations learned by these models cannot achieve both OOD detection and OOD generalization abilities. In contrast, our method shows significant improvements over these baselines. While being competitive in ID performance, our method controls NC unlike the baselines, and achieves better performance in OOD tasks. Particularly, OOD generalization is improved by $1.70\% - 7.69\%$ (absolute) and OOD detection is improved by $7.01\% - 29.82\%$ (absolute).
More comprehensive results across eight OOD datasets are given in Table~\ref{tab:base_model} in the Appendix.

\subsection{Entropy Regularization Mitigates NC}
\label{sec:entropy_reg_nc}

\begin{table}[t]
\centering
  \caption{\textbf{Impact of Entropy Regularization.} VGG17 models are trained on ImageNet-100 dataset (ID) and evaluated on eight OOD datasets. Both models share the same architecture and use an ETF projector; the difference lies solely in the use of entropy regularization.
  NC metrics are measured with encoder embeddings where entropy regularization is applied or omitted. Reported $\mathcal{E}_{\text{GEN}}$ (\%) and $\mathcal{E}_{\text{DET}}$ (\%) correspond to the encoder and projector, respectively and are averaged across eight OOD datasets.
  }
  \label{tab:entropy_results_summary}
  \centering
  \resizebox{\linewidth}{!}{
     \begin{tabular}{c|c|cccc|c|c}
     \hline 
     \multicolumn{1}{c|}{\textbf{Method}} &
     \multicolumn{1}{c|}{$\boldsymbol{\mathcal{E}}_{\text{ID}}$} &
     \multicolumn{4}{c|}{\textbf{Neural Collapse} $\uparrow$} &
     \multicolumn{1}{c|}{$\boldsymbol{\mathcal{E}}_{\text{GEN}}$} &
     \multicolumn{1}{c}{$\boldsymbol{\mathcal{E}}_{\text{DET}}$} \\
     & $\downarrow$ & $\mathcal{NC}1$ & $\mathcal{NC}2$ & $\mathcal{NC}3$ & $\mathcal{NC}4$ & Avg. $\downarrow$ & Avg. $\downarrow$ \\
     
    \toprule
    No Reg. & 13.46 & 1.31 & 0.72 & 0.62 & 5.18 & 44.56 & 67.46 \\
    
    \rowcolor[gray]{0.9}
    \textbf{Reg.} & \textbf{12.62} & \textbf{2.18} & 0.61 & 0.62 & 5.36 & \textbf{41.85} & \textbf{65.10} \\
    \bottomrule
    \end{tabular}}
\end{table}

At first, we measure entropy and NC1 across all VGG17 layers and observe that there lies a strong correlation between entropy and NC1 (Pearson correlation 0.88). As illustrated in Fig.~\ref{fig:entropy_nc}, stronger NC
correlates with lower entropy whereas weaker NC correlates with higher entropy. This empirically demonstrates why using entropy regularization mitigates NC in the encoder. 
Next, we compare two identical VGG17 models, one uses entropy regularization and another omits it. The results are summarized in Table~\ref{tab:entropy_results_summary}. Entropy regularization mitigates NC in the encoder, as evidenced by its higher $\mathcal{NC}1$, and achieves better performance in all criteria compared to the model without entropy regularization. 

An implication of NC is that the collapsed layers exhibit lower rank in the weights and representations~\cite{rangamani2023feature}. In additional analyses provided in the Appendix~\ref{sec:analysis_entropy_reg}, we observe that our entropy regularization implicitly encourages higher rank in the encoder embeddings 
and helps reduce dependence between dimensions (thereby promoting mutual independence).

\begin{figure}[t]
    \centering
    \includegraphics[width = 0.98\linewidth]{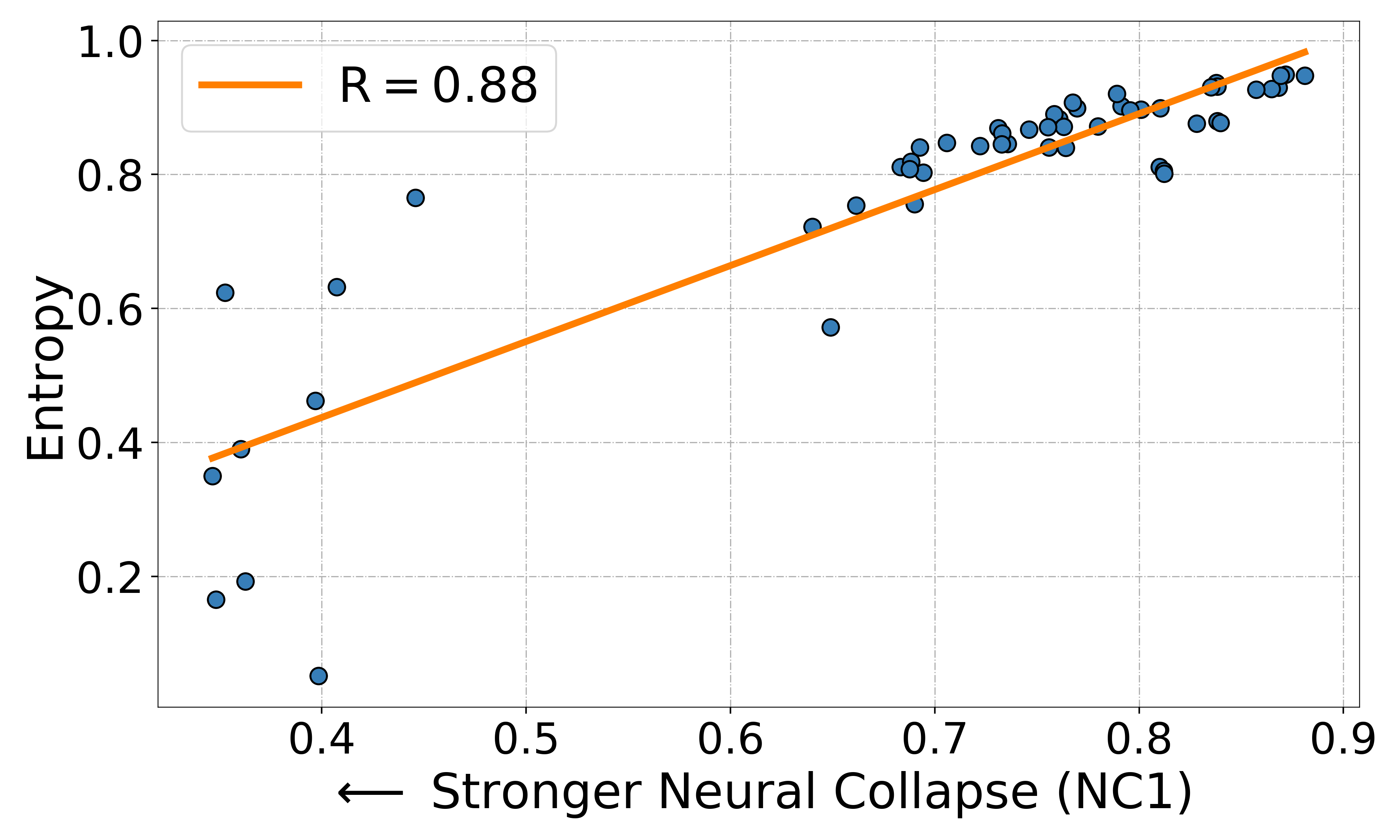}
  \caption{Neural collapse (NC1) correlates with entropy. There is an inverse relationship with NC and entropy. This suggests that increasing the entropy of the encoder’s embeddings can help mitigate NC and enhance OOD generalization.
  Similar to Fig.~\ref{fig:vis_abstract}, we analyze different layers of VGG17 networks that are pre-trained on the ImageNet-100 (ID) dataset. 
  $R$ denotes the Pearson correlation coefficient.
  } 
  \label{fig:entropy_nc}
\end{figure}

\subsection{Fixed Simplex ETF Projector Induces NC}
\label{sec:etf_proj_nc}

\begin{table}[t]
\centering
  \caption{\textbf{Impact of Fixed ETF Projector and $L_2$ Normalization on NC.} The evaluation is based on \textbf{projector embeddings} of ImageNet-100 pre-trained VGG17 networks. A lower $\mathcal{NC}$ indicates higher neural collapse.
  Our model (highlighted) uses a fixed ETF projector with $L_2$ normalization, whereas the baseline \textbf{\textit{Plastic}} uses a trainable projector with $L_2$ normalization, and the baseline \textbf{\textit{No $\mathbf{L_2}$ norm}} uses a fixed ETF projector but omits $L_2$ normalization.
  Reported $\mathcal{E}_{\text{DET}}$ (\%) is averaged over eight OOD datasets.
  }
  \label{tab:etf_l2_results}
  \centering
  \resizebox{\linewidth}{!}{
     \begin{tabular}{c|c|cccc|c}
     \hline 
     \multicolumn{1}{c|}{\textbf{Projector}} &
     \multicolumn{1}{c|}{$\boldsymbol{\mathcal{E}}_{\text{ID}}$} &
     \multicolumn{4}{c|}{\textbf{Neural Collapse} $\downarrow$} &
     \multicolumn{1}{c}{$\boldsymbol{\mathcal{E}}_{\text{DET}}$} \\
     & $\downarrow$ & $\mathcal{NC}1$ & $\mathcal{NC}2$ & $\mathcal{NC}3$ & $\mathcal{NC}4$ & Avg. $\downarrow$ \\
    \toprule
    Plastic & 15.10 & 0.498 & 0.515 & \textbf{0.428} & 1.422 & 74.00 \\

    \rowcolor[gray]{0.9}
    \textbf{Fixed ETF} & \textbf{12.62} & \textbf{0.393} & \textbf{0.490} & 0.468 & \textbf{0.316} & \textbf{65.10} \\

    \hline 
     No $L_2$ norm & 12.74 & 0.579 & 0.538 & \textbf{0.349} & 1.339 & 68.93 \\
    \rowcolor[gray]{0.9}
    \textbf{$L_2$ norm} & 12.62 & \textbf{0.393} & \textbf{0.490} & 0.468 & \textbf{0.316} & \textbf{65.10} \\

    \bottomrule
    \end{tabular}}
\end{table}

We train two identical models the same way (same hyperparameters and training protocol), one of them uses a regular trainable projector and the other one uses a frozen simplex ETF projector. 
We summarize our findings in Table~\ref{tab:etf_l2_results}. 
Our results indicate that the fixed simplex ETF projector strengthens NC more than a regular plastic projector as evidenced by lower $\mathcal{NC}1$. Consequently, the ETF projector outperforms plastic projector in OOD detection by an absolute 8.9\%. 

We also evaluate the impact of $L_2$ normalization on the projector embeddings. We train two models in an identical setting, the only variable we change is the $L_2$ normalization. We observe that $L_2$ normalization achieves a lower $\mathcal{NC}1$ value (thereby strengthening NC) and 3.83\% (absolute) lower OOD detection error than its counterpart. These results demonstrate that using $L_2$ normalization helps induce NC and thereby enhances OOD detection performance. Additional results are shown in Appendix~\ref{sec:additional_exp_supp}.

\subsection{State-of-the-art Comparison}
\label{sec:sota_comp}
To put our work in context with respect to existing methods, we compare our method with NECO~\cite{ammar2024neco}, a state-of-the-art OOD detection method based on NC.
Since NECO does not address OOD generalization, we restrict this comparison to OOD detection only.
We train multiple DNN architectures on ImageNet-100 (ID) and evaluate their performance on eight OOD datasets.
Remarkably, our method consistently outperforms NECO across all settings. As shown in Table~\ref{tab:neco_results_summary}, our approach reduces the average OOD detection error by an absolute margin of 12.72\% for VGG17, 18.43\% for ResNet18, and 2.51\% for ViT-T, highlighting its superior effectiveness. Comprehensive results across OOD datasets are presented in Table~\ref{tab:neco_comp}.

\begin{table}[t]
\centering
  \caption{\textbf{NECO vs. Our Method.} 
  Various DNNs e.g., VGG17, ResNet18, and ViT-T are trained on ImageNet-100 dataset (ID). Reported OOD detection error, $\mathcal{E}_{\text{DET}}$ (\%) is averaged across eight OOD datasets.
  }
  \label{tab:neco_results_summary}
  \centering
  \resizebox{\linewidth}{!}{
    \begin{tabular}{l|lll}
     \hline 
     \multicolumn{1}{c|}{\textbf{Method}} &
     \multicolumn{3}{c}{\textbf{Avg.} $\boldsymbol{\mathcal{E}}_{\text{DET}}$ (\%) $\downarrow$} \\
     & VGG17 & ResNet18 & ViT-T \\
     \toprule
     NECO & 77.82 & 88.13 & 85.67 \\
     \rowcolor[gray]{0.9}
     \textbf{Ours} & \textbf{65.10} \textcolor{gray}{ (-12.72)} & \textbf{69.70} \textcolor{gray}{ (-18.43)} & \textbf{83.16} \textcolor{gray}{ (-2.51)} \\
    \bottomrule
    \end{tabular}}
\end{table}

\subsection{Ablation Studies}
\label{sec:ablations}

\noindent
\textbf{Projector Design Criteria.}
Here we ask: \emph{does a deeper or wider projector achieve higher performance?}
Results are summarized in Table~\ref{tab:proj_results}. We find that the projector with depth 2 performs better than shallower or wider projectors. 
Table~\ref{tab:proj_design} in the Appendix contains comprehensive results.

\begin{table}[t]
\centering
  \caption{\textbf{Projector Configuration.} VGG17 models with different ETF projector configurations are trained on ImageNet-100 (ID) dataset. 
  $D$ and $W$ denote the depth and width of the projector, respectively.
  Reported $\mathcal{E}_{\text{GEN}}$ (\%) and $\mathcal{E}_{\text{DET}}$ (\%) are averaged over eight OOD datasets.
  }
  \label{tab:proj_results}
  \centering
  \resizebox{\linewidth}{!}{
     \begin{tabular}{c|c|cccc|c|c}
     \hline 
     \multicolumn{1}{c|}{\textbf{Config.}} &
     \multicolumn{1}{c|}{$\boldsymbol{\mathcal{E}}_{\text{ID}}$} &
     \multicolumn{4}{c|}{\textbf{Neural Collapse} $\downarrow$} &
     \multicolumn{1}{c|}{$\boldsymbol{\mathcal{E}}_{\text{GEN}}$} &
     \multicolumn{1}{c}{$\boldsymbol{\mathcal{E}}_{\text{DET}}$} \\
     & $\downarrow$ & $\mathcal{NC}1$ & $\mathcal{NC}2$ & $\mathcal{NC}3$ & $\mathcal{NC}4$ & Avg. $\downarrow$ & Avg. $\downarrow$ \\
    \toprule
    $D=1$ & 12.86 & 0.375 & 0.649 & 0.500 & 1.157 & 45.37 & 87.37 \\

    \rowcolor[gray]{0.9}
    $\mathbf{D=2}$ & \textbf{12.62} & 0.393 & 0.490 & 0.468 & 0.316 & \textbf{41.85} & \textbf{65.10} \\
    
    $W=2$ & 13.48 & 0.320 & 0.667 & 0.376 & 0.493 & 43.33 & 69.73 \\
    \bottomrule
    \end{tabular}}
\end{table}

\noindent
\textbf{Group Normalization Enhances Transfer.}
While the impact of BN on NC has been studied in prior work~\cite{pan2023towards, ergen2022demystifying}, we evaluate the effectiveness of both BN and GN within our framework.
We compare BN with GN (GN is combined with WS) and show the results in Table~\ref{tab:gn_results_summary}. 
We find that GN helps mitigate NC in the encoder as indicated by a higher $\mathcal{NC}1$ value than BN. This implies that, unlike GN, BN leads to stronger NC and impairs OOD transfer. This is further confirmed by GN outperforming BN by 10.11\% (absolute) in OOD generalization. Our results suggest that replacing BN is crucial for OOD generalization. Furthermore, using GN improves OOD detection by 4.37\% (absolute). 
Table~\ref{tab:bn_vs_gn} includes comprehensive results.

\begin{table}[t]
\centering
  \caption{\textbf{Impact of Group Normalization.} Reported
  $\mathcal{E}_{\text{GEN}}$ (\%) and $\mathcal{E}_{\text{DET}}$ (\%) are averaged across eight OOD datasets. 
  The backbones are ImageNet-100 pre-trained VGG17 models. 
  Reported $\mathcal{NC}1$ corresponds to the encoder. 
  }
  \label{tab:gn_results_summary}
  \centering
  \resizebox{\linewidth}{!}{
     \begin{tabular}{c|c|c|c|c}
     \hline 
     \multicolumn{1}{c|}{\textbf{Method}} &
     \multicolumn{1}{c|}{$\boldsymbol{\mathcal{NC}1}$} &
     \multicolumn{1}{c|}{$\boldsymbol{\mathcal{E}}_{\text{ID}}$} &
     \multicolumn{1}{c|}{$\boldsymbol{\mathcal{E}}_{\text{GEN}}$} &
     \multicolumn{1}{c}{$\boldsymbol{\mathcal{E}}_{\text{DET}}$} \\
     & $\uparrow$ & $\downarrow$ & Avg. $\downarrow$ & Avg. $\downarrow$ \\
     
    \toprule
    Batch Normalization & 1.401 & 12.52 & 51.96 & 69.47 \\
    \rowcolor[gray]{0.9}
    \textbf{Group Normalization} & \textbf{2.175} & 12.62 & \textbf{41.85} & \textbf{65.10} \\
    \bottomrule
    \end{tabular}}
\end{table}

\noindent
\textbf{SGD Optimizer.}
While our main experiments employed the AdamW optimizer, we also evaluate the effectiveness of our method with the widely used SGD optimizer to ensure its robustness across optimization schemes. To this end, we train VGG17 models on ImageNet-100 dataset (ID) using SGD optimizer and assess their performance on eight OOD datasets. As shown in Table~\ref{tab:sgd_results_summary}, our method outperforms the baseline by 6.26\% (absolute) in OOD generalization and by 28.88\% (absolute) in OOD detection.
Comprehensive results are provided in Appendix~\ref{sec:sgd_comprehensive}.


\begin{table}[t]
\centering
  \caption{\textbf{SGD Optimizer.} VGG17 models are trained on ImageNet-100 (ID) dataset. Baseline VGG17 does not incorporate mechanisms like entropy regularization or the ETF projector to control NC. NC metrics are computed using the penultimate-layer embeddings.
  Reported $\mathcal{E}_{\text{GEN}}$ (\%) and $\mathcal{E}_{\text{DET}}$ (\%) are averaged over eight OOD datasets.
  }
  \label{tab:sgd_results_summary}
  \centering
  \resizebox{\linewidth}{!}{
     \begin{tabular}{c|c|cccc|c|c}
     \hline 
     \multicolumn{1}{c|}{\textbf{Model}} &
     \multicolumn{1}{c|}{$\boldsymbol{\mathcal{E}}_{\text{ID}}$} &
     \multicolumn{4}{c|}{\textbf{Neural Collapse} $\downarrow$} &
     \multicolumn{1}{c|}{$\boldsymbol{\mathcal{E}}_{\text{GEN}}$} &
     \multicolumn{1}{c}{$\boldsymbol{\mathcal{E}}_{\text{DET}}$} \\
     & $\downarrow$ & $\mathcal{NC}1$ & $\mathcal{NC}2$ & $\mathcal{NC}3$ & $\mathcal{NC}4$ & Avg. $\downarrow$ & Avg. $\downarrow$ \\
    \toprule
    VGG17 & 13.06 & 1.017 & 0.449 & 0.479 & 26.459 & 57.17 & 89.69 \\

    \rowcolor[gray]{0.9}
    \textbf{+Ours} & 13.18 & 0.087 & 0.468 & 0.267 & 0.264 & \textbf{50.91} & \textbf{60.81} \\
    \bottomrule
    \end{tabular}}
\end{table}

\noindent
\textbf{Impact of Loss Functions: MSE vs. CE.}
Both MSE and CE are effective loss functions to achieve NC properties~\cite{zhou2022all}. Unlike prior work, we evaluate their efficacy in both OOD detection and OOD generalization tasks.
As shown in Table~\ref{tab:mse_ce_comp}, CE outperforms MSE by 6.74\% (absolute) in OOD detection and by 17.71\% (absolute) in OOD generalization.
Our observations are consistent with prior work~\cite{kornblith2021better, hui2020evaluation}. 

\noindent
\textbf{Computational Efficiency.}
Our method is computationally efficient, introducing minimal overhead compared to standard DNNs. We assess efficiency by measuring training time and FLOPs relative to baseline models. As shown in Table~\ref{tab:compute_overhead}, the additional cost remains below 0.3\% across all cases—a negligible overhead given the substantial performance gains. Further details are provided in Appendix~\ref{sec:compute}.

\section{Discussion}
\label{sec:discussion}

Our study highlights the impact of NC on OOD detection and generalization (forward transfer). Extending our approach to open-world continual learning~\cite{kim2025open, dong2024mr} presents an exciting challenge. 
While we focused on architectural and regularization-based techniques to control NC, another avenue is optimization-driven strategies. For instance, ~\citet{markou2024guiding} studies optimizing towards the nearest simplex ETF to accelerate NC. 
Guiding NC to enhance task-specific representations or disentangle conflicting tasks could improve robustness and generalization.  
Moreover, beyond standard loss functions, alternative formulations could be explored to regulate NC. Better representations can also help combat learning of spurious correlations; therefore, improving generalization even on smaller biased datasets~\citep{ShresthaEtAlWACV2022BiasMitigation,shrestha2022occamnets}.

\paragraph{Forward transfer \& continual learning.} Early deep continual learning work---and much subsequent benchmarking---has emphasized quantifying catastrophic forgetting rather than forward transfer and representational improvement over time \citep{HayesEtAlCVPRW2018NewMetrics}. Yet with replay, forgetting can often be driven near zero in practice~\cite{hayes2019memory,harun2023siesta}, shifting the practical bottleneck toward \emph{efficiency} and \emph{forward transfer}. While most continual learning methods are less computationally efficient than simply re-training from scratch~\citep{HarunEtAlCVPRW2023Efficiency}, a growing line of work targets computational efficiency~\citep{HarunEtAlArXiv2023GRASP,HarunKananTMLR2024StabilityGap,HarunKananCoLLAs2025GoodStart}. Our results show that by explicitly controlling NC to shape feature geometry, we not only preserve prior knowledge but also promote representations that \emph{improve over time}, yielding stronger forward transfer, which would reduce the number of samples that need to be replayed.

\paragraph{Beyond image classification.} Following prior work, our study primarily focused on vision tasks and datasets. However, extending our method to other modalities such as audio and text represents a promising direction for future research. While our experiments centered on classification tasks, the proposed method is inherently general and can be applied to other tasks, e.g., object detection or other regression tasks. The core of our approach—the fixed ETF projector—is designed to enforce NC in the final layer, enhancing feature representations that benefit both classification and regression tasks. Furthermore, our entropy regularization is task-agnostic and seamlessly integrates with both classification and regression objectives.

\paragraph{Better understanding of NC.} Our study utilized nearest neighbor density estimation for entropy regularization. Exploring parametric and adaptive approaches could offer more robust regularization techniques for improving OOD generalization.  
We demonstrated that controlling NC improves OOD detection and generalization, but a deeper theoretical understanding of this relationship is needed. Future work could establish theoretical frameworks that unify OOD detection and generalization from an NC perspective, offering a more comprehensive view of representation learning under distribution shifts.

\section{Conclusion}
\label{sec:conclusion}

In this work, we established a concrete relationship between neural collapse and OOD detection and generalization. Motivated by this relationship, our method enhances OOD detection by strengthening NC while promoting OOD generalization by mitigating NC. We also provided a theoretical framework to mitigate NC via entropy regularization.
Our method demonstrated strong OOD detection and generalization abilities compared to baselines that did not control NC.
This work has implications for open-world problems where simultaneous OOD detection and generalization are critical.
We hope our work inspires future efforts to develop more effective methods for building robust AI systems in open-world conditions.

\ifthenelse{\boolean{public_version}}{
\paragraph{Acknowledgments.} 
This work was partly supported by NSF awards \#2326491, \#2125362, and \#2317706. 
The views and conclusions contained herein are those of the authors and should not be interpreted as representing any sponsor's official policies or endorsements.
We thank Junyu Chen, Helia Dinh, and Shikhar Srivastava for their feedback and comments on the manuscript.
}

\section*{Impact Statement}

This paper aims to contribute to the advancement of the field of Machine Learning. While our work has the potential to influence various societal domains, we do not identify any specific societal impacts that require particular emphasis at this time.

\bibliography{main}
\bibliographystyle{icml2025}



\clearpage
\setcounter{page}{1}

\appendix

\begin{center}
    {\Large{\textbf{Appendix}}}
\end{center}

We organize the Appendix as follows: 
\begin{itemize}
    \item Appendix~\ref{sec:implementation} describes the implementation details. It describes the DNN architectures (VGG, ResNet, and ViT), feature extraction for linear probing, training, and evaluation details of both pre-training and linear probing in various experiments.
    
    \item Appendix~\ref{sec:datasets} provides details on the datasets used in this paper. In total, we use 9 datasets.

    \item Appendix~\ref{sec:nc_metrics} describes four neural collapse metrics ($\mathcal{NC}1 - \mathcal{NC}4$) used in this paper.

    \item Appendix~\ref{sec:mse_ce_supp} presents a comprehensive comparison between MSE and CE.

    \item Appendix~\ref{sec:details_proposition} contains proof on the implication of NC on entropy.

    \item Appendix~\ref{sec:comprehensive_results} provides a comprehensive comparison between the encoder and projector across different architectures. 

    \item Appendix~\ref{sec:analysis_entropy_reg} provides detailed analyses on entropy regularization and neural collapse.

    \item Additional experiments and analyses are summarized in Appendix~\ref{sec:additional_exp_supp}. The mechanisms of controlling NC have been examined.
    
    \item Appendix~\ref{sec:imagenet_100_classes} includes the list of 100 classes in the ImageNet-100 dataset. 
\end{itemize}

\section{Implementation Details}
\label{sec:implementation}

In this paper, we use several acronyms such as
\textbf{NC} : Neural Collapse, 
\textbf{ETF} : Equiangular Tight Frame,
\textbf{ID} : In-Distribution, 
\textbf{OOD} : Out-of-Distribution,
\textbf{LR} : Learning Rate, 
\textbf{WD} : Weight Decay,
\textbf{GAP} : Global Average Pooling,
\textbf{GN} : Group Normalization, 
\textbf{BN} : Batch Normalization, 
\textbf{WS} : Weight Standardization,  
\textbf{CE} : Cross Entropy, 
\textbf{MSE} : Mean Squared Error,
\textbf{FPR} : False Positive Rate.

\textbf{We use the terms ``OOD generalization'' and ``OOD transfer'' interchangeably. Code is available at:} \url{https://yousuf907.github.io/ncoodg}

\subsection{Architectures}
\label{sec:arch_details}

\textbf{VGG.}
We modified the VGG-19 architecture to create our VGG-17 encoder. Additionally, we removed two fully connected (FC) layers before the final classifier head. And, we added an adaptive average pooling layer (nn.AdaptiveAvgPool2d), which allows the network to accept any input size while keeping the output dimensions the same. After VGG-17 encoder, we attached a projector consisting of two MLP layers ($512 \rightarrow 2048 \rightarrow 512$) and finally added a classifier head. 
We use ReLU activation between projector layers.
We replace BN with GN+WS in all layers. For GN, we use 32 groups in all layers.

\noindent
\textbf{ResNet.}
We used the entire ResNet-18 or ResNet-34 as the encoder and attached a projector ($512 \rightarrow 2048 \rightarrow 512$) similar to the VGG networks mentioned above. 
We replace BN with GN+WS in all layers. For GN, we use 32 groups in all layers.

\noindent
\textbf{ViT.}
We consider ViT-Tiny/Small (5.73M/21.85M parameters) as the encoder for our experiments. 
The projector comprising two MLP layers configured as fixed ETF Simplex and added after the encoder.
Following~\cite{beyer2022better}, we omit the learnable position embeddings and instead use the fixed 2D sin-cos position embeddings. Other details adhere to the original ViT paper~\cite{dosovitskiy2020image}. 
\begin{enumerate} 
    \item \textbf{ViT-Tiny Configuration:} patch size=16, embedding dimension=192, \# heads=3, depth=12. Projector has
    output dimension=192 and hidden dimension=768,
    ($192 \rightarrow 768 \rightarrow 192$). We use ReLU activation between projector layers.
    The number of parameters in ViT-Tiny $+$ projector is 6.02M.
    \item \textbf{ViT-Small Configuration:} patch size=16, embedding dimension=384, \# heads=6, depth=12. Projector has
    output dimension=384 and hidden dimension=1536,
    ($384 \rightarrow 1536 \rightarrow 384$). We use ReLU activation between projector layers.
    The number of parameters in ViT-Small $+$ projector is 23.03M.
\end{enumerate}

\subsection{Feature Extraction For Linear Probing}
\label{sec:feat_extract_details}
In experiments with CNNs, at each layer $l$, for each sample, we extract features of dimension $H_{l}\times W_{l}\times C_{l}$, where $H_{l}$, $W_{l}$, and $C_{l}$ denote the height, width and channel dimensions respectively. Next, following~\cite{sarfi2023simulated}, we apply $2\times 2$ adaptive average pooling on each spatial tensor ($H_{l}\times W_{l}$). After average pooling, features of dimension $2\times 2 \times C_{l}$ are flattened and converted into a vector of dimension $4C_{l}$. Finally, a linear probe is trained on the flattened vectors. 
In experiments with ViTs, 
following~\cite{raghu2021vision}, we apply global average-pooling (GAP) to aggregate image tokens excluding the class token and train a linear probe on top of GAP tokens.
We report the best error ($\%$) on the test dataset for linear probing at each layer.

\subsection{VGG Experiments}

\textbf{VGG ID Training:} For training VGG on ImageNet-100, we employ the AdamW optimizer with a LR of $6\times10^{-3}$ and WD of $5\times10^{-2}$ for batch size 512. The model is trained for 100 epochs using the Cosine Annealing LR scheduler with a linear warmup of 5 epochs. 
In all experiments, we use CE and entropy regularization ($\alpha=0.05$) losses. 
However, in some particular experiments comparing CE and MSE, we use MSE loss ($\kappa$=15, M=60) and entropy regularization loss ($\alpha=0.05$). 
In the experiments with SGD optimizer and CE loss, we set LR to 0.2 and WD to $10^{-4}$ for batch size 512.

\noindent
\textbf{VGG Linear Probing:} We use the AdamW optimizer with a flat LR of $1\times 10^{-3}$ and WD of $0$ for batch size 128. The linear probes are trained for 30 epochs. We use label smoothing of 0.1 with the cross-entropy loss.

\subsection{ResNet Experiments}

\textbf{ResNet ID Training:} 
For training ResNet-18/34, we employ the AdamW optimizer with an LR of 0.01 and a WD of 0.05 for batch size 512. The model is trained for 100 epochs using the Cosine Annealing LR scheduler with a linear warmup of 5 epochs. 
We use CE and entropy regularization ($\alpha=0.05$) losses. 

\noindent
\textbf{ResNet Linear Probing:} In the linear probing experiment, we use the AdamW optimizer with an LR of $1\times 10^{-3}$ and WD of $0$ for batch size 128. The linear probes are trained for 30 epochs. We use label smoothing of 0.1 with cross-entropy loss.

\subsection{ViT Experiments}

\textbf{ViT ID Training:} 
For training ViT-Tiny, we employ the AdamW optimizer with LR of $8\times10^{-4}$ and WD of $5\times10^{-2}$ for batch size 256. The LR is scaled for $n$ GPUs according to: $LR \times n \times \frac{batch size}{512}$. We use an LR of $4\times10^{-4}$ for ViT-Small when the batch size is 256.
We use the Cosine Annealing LR scheduler with warm-up (5 epochs). 
We train the ViT-Tiny/Small for 100 epochs using CE and entropy regularization ($\alpha=0.05$) losses.
Following~\cite{raghu2021vision, beyer2022better}, we omit class token and instead use GAP token by global average-pooling image tokens and feed GAP embeddings to the projector.

\noindent
\textbf{ViT Linear Probing:} We use the AdamW optimizer with LR of $0.01$ and WD of $1\times 10^{-4}$ for batch size 512. The linear probes are trained for 30 epochs. We use label smoothing of 0.1 with cross-entropy loss.

\noindent
\textbf{Augmentation.}
We use random resized crop and random flip augmentations and $224 \times 224$ images as inputs to the DNNs.

In experiments with CE loss, we use label smoothing of $0.1$.

\subsection{Evaluation Criteria}

\textbf{\textit{FPR95.}}
The OOD detection performance is evaluated by the FPR (False Positive Rate) metric. In particular, we use FPR95 (FPR at 95\% True Positive Rate) that evaluates OOD detection performance by measuring the fraction of OOD samples misclassified as ID where threshold, $\lambda$ is chosen when the true positive rate is 95\%. 
Both OOD detection and OOD generalization tasks are evaluated on the \emph{same} OOD test set.

\textbf{\textit{Percentage Change.}}
To capture percentage increase or decrease when switching from the encoder ($E$) to the projector ($P$), we use 
\[
\Delta_{E \rightarrow P} = \frac{(P - E)} {|E|} \times 100.
\]

\textbf{\textit{Normalization for different OOD datasets.}}
In our correlation analysis between NC and OOD detection/generalization (Fig.~\ref{fig:vis_abstract} and~\ref{fig:nc_resnet}), we use min-max normalization for layer-wise OOD detection errors and OOD generalization errors which enables comparison using different OOD datasets. For a given OOD dataset and a DNN consisting of total $L$ layers, let the OOD detection/ generalization error for a layer $l$ be $E_l$. For $L$ layers we have error vector $\mathbf{E} = [E_1, E_2, \cdots E_L]$ which is then normalized by
\[
\mathbf{E}_N = \frac{\mathbf{E} - \mathrm{min}(\mathbf{E})} {\mathrm{max}(\mathbf{E}) - \mathrm{min}(\mathbf{E})}.
\]

\textbf{\textit{Effective Rank.}}
We use RankMe~\cite{garrido2023rankme} to measure the effective rank of the embeddings.

\section{Datasets}
\label{sec:datasets}

\textbf{ImageNet-100.} 
ImageNet-100~\cite{tian2020contrastive} is a subset of ImageNet-1K~\cite{deng2009imagenet} and contains 100 ImageNet classes. It consists of 126689 training images ($224\times 224$) and 5000 test images.
The object categories present in ImageNet-100 are listed in Appendix~\ref{sec:imagenet_100_classes}.

\textbf{CIFAR-100.} 
CIFAR-100~\cite{krizhevsky2014cifar} is a dataset widely used in computer vision. It contains $60,000$ RGB images and $100$ classes, each containing $600$ images. The
dataset is split into $50,000$ training samples and $10,000$ test samples. The images in CIFAR-100 have a
resolution of $32\times 32$ pixels. 
Unlike CIFAR-10, CIFAR-100 has a higher level of granularity, with
more fine-grained classes such as flowers, insects, household items, and a variety of animals and vehicles.
For linear probing, all samples from both the training and validation datasets were used.

\textbf{NINCO (No ImageNet Class Objects).} NINCO ~\cite{bitterwolf2023ninco} is a dataset with 64 classes. The dataset is curated to eliminate semantic overlap with ImageNet-1K dataset and is used to evaluate the OOD performance of the models pre-trained on imagenet-1K. The NINCO dataset has 5878 samples, and we split it into 4702 samples for training and 1176 samples for evaluation. We do not have a fixed number of samples per class for training and evaluation datasets.

\textbf{ImageNet-Rendition (ImageNet-R).} ImageNet-R incorporates distribution shifts using different artistic renditions of object classes from the original ImageNet dataset~\citep{hendrycks2021many}.
We use a variant of ImageNet-R dataset from~\cite{wang2022dualprompt}.
ImageNet-R is a challenging benchmark for continual learning, transfer learning, and OOD detection. It consists of classes with different styles and intra-class diversity and thereby poses significant distribution shifts for ImageNet-1K pre-trained models~\citep{wang2022dualprompt}.
It contains 200 classes, 24000 training images, and 6000 test images.

\textbf{CUB-200.} CUB-200 is composed of 200 different bird species~\cite{wah2011caltech}. The CUB-200 dataset comprises a total of 11,788 images, with 5,994 images allocated for training and 5,794 images for testing.

\textbf{Aircrafts-100.} Aircrafts or FGVCAircrafts dataset~\cite{maji2013fine} consists of 100 different aircraft categories and 10000 high-resolution images with 100 images per category. The training and test sets contain 6667 and 3333 images respectively.

\textbf{Oxford Pets-37.} The Oxford Pets dataset includes a total of 37 various pet categories, with an approximately equal number of images for dogs and cats, totaling around 200 images for each category~\cite{parkhi2012cats}.

\textbf{Flowers-102.} The Flowers-102 dataset contains 102 flower categories that can be easily found in the UK. Each category of the dataset contains 40 to 258 images.~\cite{nilsback2008automated}

\textbf{STL-10.} STL-10 has 10 classes with 500 training images and 800 test images per class~\cite{coates2011analysis}.

For all datasets, images are resized to $224 \times 224$ to train and evaluate DNNs.


\section{Neural Collapse Metrics}
\label{sec:nc_metrics}

Neural Collapse (NC) describes a structured organization of representations in DNNs~\cite{papyan2020prevalence, kothapalli2023neural, zhu2021geometric, rangamani2023feature}.
The following four criteria characterize Neural Collapse:
\begin{enumerate}
    \item \textbf{Feature Collapse} ($\mathcal{NC}1$): Features within each class concentrate around a single mean, with almost no variability within classes.
    \item \textbf{Simplex ETF Structure} ($\mathcal{NC}2$): Class means, when centered at the global mean, are linearly separable, maximally distant, and form a symmetrical structure on a hypersphere known as a Simplex Equiangular Tight Frame (Simplex ETF).
    \item \textbf{Self-Duality} ($\mathcal{NC}3$): The last-layer classifiers align closely with their corresponding class means, forming a self-dual configuration.
    \item \textbf{Nearest Class Mean Decision} ($\mathcal{NC}4$): The classifier operates similarly to the nearest class-center (NCC) decision rule, assigning classes based on proximity to the class means. 
\end{enumerate}

Here, we describe each NC metric used in our results. Let \( \boldsymbol{\mu}_G \) denote the global mean and \( \boldsymbol{\mu}_c \) the \( c \)-th class mean of the features, \( \{\mathbf{z}_{c,i}\} \) at layer \( l \), defined as follows:
\[
\boldsymbol{\mu}_G = \frac{1}{nC} \sum_{c=1}^C \sum_{i=1}^n \mathbf{z}_{c,i}, \quad \boldsymbol{\mu}_c = \frac{1}{n} \sum_{i=1}^n \mathbf{z}_{c,i} \quad (1 \leq c \leq C).
\]
We drop the layer index \( l \) from notation for simplicity.
Also bias is excluded for notation simplicity. Feature dimension is $d$ instead of $d+1$.

\noindent
\textbf{Within-Class Variability Collapse ($\mathcal{NC}1$):}  
It measures the relative size of the within-class covariance \( \boldsymbol{\Sigma}_W \) with respect to the between-class covariance \( \boldsymbol{\Sigma}_B \) of the DNN features:
\[
\boldsymbol{\Sigma}_W = \frac{1}{nC} \sum_{c=1}^C \sum_{i=1}^n \left( \mathbf{z}_{c,i} - \boldsymbol{\mu}_c \right) \left( \mathbf{z}_{c,i} - \boldsymbol{\mu}_c \right)^\top \in \mathbb{R}^{d \times d},
\]
\[
\boldsymbol{\Sigma}_B = \frac{1}{C} \sum_{c=1}^C \left( \boldsymbol{\mu}_c - \boldsymbol{\mu}_G \right) \left( \boldsymbol{\mu}_c - \boldsymbol{\mu}_G \right)^\top \in \mathbb{R}^{d \times d}.
\]

The $\mathcal{NC}1$ metric is defined as:

\[
\mathcal{NC}1 = \frac{1}{C} \operatorname{trace} \left( \boldsymbol{\Sigma}_W \boldsymbol{\Sigma}_B^{\dagger} \right),
\]
where \( \boldsymbol{\Sigma}_B^{\dagger} \) is the pseudo-inverse of \( \boldsymbol{\Sigma}_B \). \textbf{Note that $\mathcal{NC}1$ is the most dominant indicator of neural collapse.}

\noindent
\textbf{Convergence to Simplex ETF ($\mathcal{NC}2$):}  
It quantifies the \( \ell_2 \) distance between the normalized simplex ETF and the normalized \( \mathbf{WW}^\top \), as follows:
\[
\mathcal{NC}2 := \left\| \frac{\mathbf{WW}^\top}{\| \mathbf{WW}^\top \|_F} - \frac{1}{\sqrt{C-1}} \left( \mathbf{I}_C - \frac{1}{C} \mathbf{1}_C \mathbf{1}_C^\top \right) \right\|_F,
\]
where \( \mathbf{W} \in \mathbb{R}^{C \times d} \) denotes the weight matrix of the learned classifier.

\noindent
\textbf{Convergence to Self-Duality ($\mathcal{NC}3$):}  
It measures the \( \ell_2 \) distance between the normalized simplex ETF and the normalized \( \mathbf{WZ} \):
\[
\mathcal{NC}3 := \left\| \frac{\mathbf{WZ}}{\| \mathbf{WZ} \|_F} - \frac{1}{\sqrt{C-1}} \left( \mathbf{I}_C - \frac{1}{C} \mathbf{1}_C \mathbf{1}_C^\top \right) \right\|_F,
\]
where \( \mathbf{Z} = \left[ \mathbf{z}_1 - \boldsymbol{\mu}_G \; \cdots \; \mathbf{z}_C - \boldsymbol{\mu}_G \right] \in \mathbb{R}^{d \times C} \) is the centered class-mean matrix.

\textbf{Simplification to NCC ($\mathcal{NC}4$):} It measures the collapse of bias \( \mathbf{b} \):
\[
\mathcal{NC}4 := \left\| \mathbf{b} + \mathbf{W} \boldsymbol{\mu}_G   \right\|_2.
\]


\section{Mean Squared Error vs. Cross-Entropy}
\label{sec:mse_ce_supp}

Prior work~\cite{kornblith2021better} finds that MSE rivals CE in ID classification task but underperforms CE in OOD transfer. However, the comparison between CE and MSE in OOD detection task remains unexplored.
In this work, we find that CE significantly outperforms MSE in both OOD transfer and OOD detection tasks.
As shown in Table~\ref{tab:mse_ce_comp}, MSE underperforms CE by 6.74\% (absolute) in OOD detection and by 17.71\% (absolute) in OOD generalization. Our OOD generalization results are consistent with~\citet{kornblith2021better}.
CE also obtains lower ID error than MSE, thereby showing good overall performance.

In terms of inducing neural collapse, both MSE and CE are effective and achieve lower NC values (i.e., stronger NC). However, our results suggest that CE does a better job than MSE in enhancing NC without sacrificing OOD transfer. We find MSE to be sensitive to the hyperparameters.
The comparison on all OOD datasets is shown in Table~\ref{tab:main_results}.

\begin{table}[t]
\centering
  \caption{\textbf{Comparison between MSE and CE.} VGG17 networks are trained on \textbf{ImageNet-100} dataset (ID). 
  For OOD generalization we report $\boldsymbol{\mathcal{E}}_{\text{GEN}}$ (\%) whereas for OOD detection we report $\boldsymbol{\mathcal{E}}_{\text{DET}}$ (\%), both are averaged over eight OOD datasets. 
  All models incorporate entropy regularization and the ETF projector to control NC.
  \textbf{A lower $\mathcal{NC}$ indicates stronger neural collapse.} $+\Delta_{E \rightarrow P}$ and $-\Delta_{E \rightarrow P}$ indicate \% increase and \% decrease respectively, when changing from the encoder ($E$) to projector ($P$). 
  }
  \label{tab:mse_ce_comp}
  \centering
  \resizebox{\linewidth}{!}{
     \begin{tabular}{c|c|cccc|c|c}
     \hline 
     \multicolumn{1}{c|}{\textbf{Method}} &
     \multicolumn{1}{c|}{$\boldsymbol{\mathcal{E}}_{\text{ID}}$} &
     \multicolumn{4}{c|}{\textbf{Neural Collapse}} &
     \multicolumn{1}{c|}{$\boldsymbol{\mathcal{E}}_{\text{GEN}}$} &
     \multicolumn{1}{c}{$\boldsymbol{\mathcal{E}}_{\text{DET}}$} \\
     & $\downarrow$ & $\mathcal{NC}1$ & $\mathcal{NC}2$ & $\mathcal{NC}3$ & $\mathcal{NC}4$ & Avg. $\downarrow$ & Avg. $\downarrow$ \\
     \toprule
     \rowcolor[gray]{0.9}
     \textbf{CE Loss} \\
     Projector & \textbf{12.62} & 0.393 & 0.490 & 0.468 & 0.316 & 66.36 & \textbf{65.10} \\
     Encoder & 15.52 & 2.175 & 0.603 & 0.616 & 5.364 & \textbf{41.85} & 87.62 \\
     \rowcolor{yellow!50}
     $\Delta_{E \rightarrow P}$ & -18.69 & -81.93 & -18.74 & -24.03 & -94.11 & +58.57 & -25.70 \\
    \toprule
    \rowcolor[gray]{0.9}
    \textbf{MSE Loss} \\
    Projector & \textbf{14.04} & 0.469 & 0.743 & 0.279 & 0.382 & 70.87 & \textbf{71.84} \\
    Encoder & 14.74 & 2.267 & 0.843 & 0.673 & 10.773 & \textbf{59.56} & 88.88 \\
    \rowcolor{yellow!50}
    $\Delta_{E \rightarrow P}$ & -4.75 & -79.31 & -11.86 & -58.54 & -96.45 & +18.99 & -19.17 \\
    \bottomrule
    \end{tabular}}
\end{table}

\section{Formal Proposition: Collapsing Implies Entropy $-\infty$}
\label{sec:details_proposition}

\begin{proposition}[Entropy under Class--Conditional Collapse]
Consider a mixture of $K$ class--conditional densities $\{p_{\ell,k}(\mathbf{z};\epsilon)\}_{k=1}^K$ in $\mathbb{R}^{d_\ell}$ with mixture weights $\{\pi_k\}_{k=1}^K$. Suppose that for each $k$, the density $p_{\ell,k}(\mathbf{z};\epsilon)$ is a member of a family indexed by $\epsilon>0$ that converges in the weak sense to a Dirac delta, i.e.,
\[
\lim_{\epsilon \to 0} p_{\ell,k}(\mathbf{z};\epsilon) = \delta(\mathbf{z}-\boldsymbol{\mu}_{\ell,k}).
\]
Then, the differential entropy of the mixture
\[
p_\ell(\mathbf{z};\epsilon) = \sum_{k=1}^K \pi_k\, p_{\ell,k}(\mathbf{z};\epsilon)
\]
diverges to $-\infty$ in the limit $\epsilon\to 0$, that is,
\[
\lim_{\epsilon \to 0} H\bigl(p_\ell(\mathbf{z};\epsilon)\bigr) = -\infty.
\]
\end{proposition}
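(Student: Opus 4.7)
The plan is to reduce the mixture statement to the single--component case via a standard mixture--entropy inequality, and then to argue that each collapsing component contributes an entropy that diverges to $-\infty$. First I would invoke the upper bound $H\!\left(\sum_k \pi_k p_{\ell,k}\right) \leq H(\pi) + \sum_{k=1}^K \pi_k\, H(p_{\ell,k})$, which follows from introducing the latent component indicator $Z$, writing $H(Y) \leq H(Y,Z) = H(Z) + H(Y\mid Z)$ for $Y \sim p_\ell$, and noting that $H(Z) = H(\pi) = -\sum_k \pi_k \log \pi_k$ is a finite constant depending only on the mixture weights. Since $K$ is fixed and the $\pi_k$ do not depend on $\epsilon$, this term is harmless.

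Second I would handle the individual components. Pure weak convergence $p_{\ell,k}(\cdot;\epsilon) \Rightarrow \delta(\cdot - \boldsymbol{\mu}_{\ell,k})$ is not by itself enough to force $H(p_{\ell,k};\epsilon) \to -\infty$, so I would make the mild additional assumption that the family arises by scaling, i.e. $p_{\ell,k}(\mathbf{z};\epsilon) = \epsilon^{-d_\ell}\, \phi_k\!\left((\mathbf{z}-\boldsymbol{\mu}_{\ell,k})/\epsilon\right)$ for some fixed density $\phi_k$ of finite differential entropy. Under a change of variables this gives $H(p_{\ell,k};\epsilon) = H(\phi_k) + d_\ell \log \epsilon$, and hence $H(p_{\ell,k};\epsilon) \to -\infty$ as $\epsilon \to 0$. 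The canonical instance is the Gaussian family with covariance $\epsilon^2 I_{d_\ell}$, where $H = \tfrac{d_\ell}{2}\log(2\pi e\,\epsilon^2) \to -\infty$; this is the setting tacitly used in the NC literature. Combining this with the mixture bound yields $H(p_\ell;\epsilon) \leq H(\pi) + d_\ell \log \epsilon + \sum_k \pi_k H(\phi_k) \to -\infty$.

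The main obstacle I anticipate is precisely this gap between weak convergence and entropy divergence: in full generality one can construct sequences of densities that converge weakly to a Dirac while keeping their differential entropy bounded (for instance by placing mass on a shrinking ball but with a highly oscillatory profile that balances the $-\int p \log p$ integral). So the proof must either restrict the family (as above, via a scaling/concentration hypothesis) or impose a uniform--concentration condition such as $\mathrm{Trace}(\boldsymbol{\Sigma}_{\ell,k}(\epsilon)) \to 0$ together with absolute continuity with respect to Lebesgue measure. In the writeup I would state the scaling hypothesis explicitly in a short remark before the formal argument, then present Steps 1--2 above, noting that any of the standard concentration models used in the NC literature (Gaussian shrinkage, kernel smoothing with bandwidth $\epsilon$, etc.) satisfy it, so the proposition applies to all cases of practical interest.
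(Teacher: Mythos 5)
Your argument is essentially the paper's own proof: introduce the latent class indicator, apply the chain rule to get $H(\mathbf{Z}) \le H(K) + \sum_k \pi_k H\bigl(p_{\ell,k}\bigr)$, observe that $H(K)$ is a finite constant, and conclude from the divergence of the component entropies. The one place you go beyond the paper is in flagging that weak convergence to a Dirac delta does not by itself force $H\bigl(p_{\ell,k}\bigr)\to-\infty$, and in proposing the scaling hypothesis $p_{\ell,k}(\mathbf{z};\epsilon)=\epsilon^{-d_\ell}\phi_k\bigl((\mathbf{z}-\boldsymbol{\mu}_{\ell,k})/\epsilon\bigr)$ to close that gap; this concern is legitimate, since the paper's proof handles it only by quietly adding ``and, importantly, that its differential entropy diverges'' as an extra standing assumption inside the proof (illustrated by the Gaussian with covariance $\epsilon I$) rather than deriving it from the stated hypothesis. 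So your writeup is, if anything, more honest about what must be assumed; the substance of the argument is the same.
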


\begin{proof}[Detailed Proof]
We begin by considering the mixture distribution
\[
p_\ell(\mathbf{z};\epsilon) = \sum_{k=1}^K \pi_k\, p_{\ell,k}(\mathbf{z};\epsilon).
\]
For each $k$, assume that the density $p_{\ell,k}(\mathbf{z};\epsilon)$ satisfies
\[
\lim_{\epsilon \to 0} p_{\ell,k}(\mathbf{z};\epsilon) = \delta(\mathbf{z}-\boldsymbol{\mu}_{\ell,k}),
\]
and, importantly, that its differential entropy diverges as
\[
\lim_{\epsilon \to 0} H\bigl(p_{\ell,k}(\mathbf{z};\epsilon)\bigr) = -\infty.
\]
A concrete example is when $p_{\ell,k}(\mathbf{z};\epsilon)$ is a Gaussian with covariance $\epsilon I$. In that case,
\[
H\bigl(p_{\ell,k}(\mathbf{z};\epsilon)\bigr) = \frac{d_\ell}{2} \log(2\pi e\epsilon),
\]
which clearly tends to $-\infty$ as $\epsilon\to 0$.

\paragraph{Step 1: Introduce a Latent Class Variable.}
Define a discrete random variable $K$ taking values in $\{1,\dots,K\}$ with $\Pr(K=k)=\pi_k$. Then, the joint distribution of $(\mathbf{Z},K)$ is given by
\[
p(\mathbf{z},k;\epsilon) = \pi_k\, p_{\ell,k}(\mathbf{z};\epsilon).
\]

\paragraph{Step 2: Apply the Chain Rule for Differential Entropy.}
Using the chain rule for differential entropy, we have
\[
H(\mathbf{Z},K) = H(K) + H(\mathbf{Z}\mid K).
\]
Here, the entropy of the discrete variable $K$ is
\[
H(K) = -\sum_{k=1}^K \pi_k \log \pi_k,
\]
which is finite since there are only finitely many classes. The conditional entropy is given by
\[
H(\mathbf{Z}\mid K) = \sum_{k=1}^K \pi_k\, H\bigl(p_{\ell,k}(\mathbf{z};\epsilon)\bigr).
\]

\paragraph{Step 3: Relate the Entropy of the Mixture to the Conditional Entropies.}
By a standard property of conditional entropy, we have
\[
H(\mathbf{Z}) = H(\mathbf{Z},K) - H(K\mid \mathbf{Z}) \leq H(\mathbf{Z},K).
\]
Thus, the entropy of the mixture satisfies
\[
H\bigl(p_\ell(\mathbf{z};\epsilon)\bigr) = H(\mathbf{Z}) \leq H(K) + \sum_{k=1}^K \pi_k\, H\bigl(p_{\ell,k}(\mathbf{z};\epsilon)\bigr).
\]

\paragraph{Step 4: Conclude that the Mixture Entropy Diverges to $-\infty$.}
Since $H(K)$ is a finite constant and for each $k$, 
\[
\lim_{\epsilon \to 0} H\bigl(p_{\ell,k}(\mathbf{z};\epsilon)\bigr) = -\infty,
\]
it follows that
\[
\lim_{\epsilon \to 0} \sum_{k=1}^K \pi_k\, H\bigl(p_{\ell,k}(\mathbf{z};\epsilon)\bigr) = -\infty.
\]
Therefore,
\[
\lim_{\epsilon\to 0} H\bigl(p_\ell(\mathbf{z};\epsilon)\bigr) = -\infty.
\]

\paragraph{Discussion.}
The essential idea is that even though the mixture might appear to smooth the singular behavior of individual class--conditional distributions, the overall entropy is still governed by the weighted sum of the entropies of its components. Because each component entropy diverges to $-\infty$, the entire mixture's entropy must also diverge to $-\infty$, up to the finite additive constant $H(K)$.

This completes the proof.
\end{proof}


\section{Comprehensive Results (Encoder Vs. Projector)}
\label{sec:comprehensive_results}

\begin{table*}[ht]
\centering
  \caption{\textbf{Comprehensive VGG Results.} VGG17 models are trained on \textbf{ImageNet-100} dataset (ID) and evaluated on eight OOD datasets. 
  All models incorporate entropy regularization and the ETF projector to control NC.
  For OOD transfer we report $\boldsymbol{\mathcal{E}}_{\text{GEN}}$ (\%) whereas for OOD detection we report $\boldsymbol{\mathcal{E}}_{\text{DET}}$ (\%). 
  \textbf{A lower $\mathcal{NC}$ indicates stronger neural collapse.} The same color highlights the rows to compare. 
  } 
  \label{tab:main_results}
  \centering
  \resizebox{\linewidth}{!}{
     \begin{tabular}{cc|cccc|ccccccccc}
     \hline 
     \multicolumn{1}{c}{\textbf{Method}} &
     \multicolumn{1}{c|}{$\boldsymbol{\mathcal{E}}_{\text{ID}} \downarrow$} &
     \multicolumn{4}{c|}{\textbf{Neural Collapse}} &
     \multicolumn{9}{c}{\textbf{OOD Datasets}} \\
    & IN & $\mathcal{NC}1$ &  $\mathcal{NC}2$ &  $\mathcal{NC}3$ & $\mathcal{NC}4$ & IN-R & CIFAR & Flowers & NINCO & CUB & Aircrafts & Pets & STL & Avg. \\
    & 100 &  &  &  &  & 200 & 100 & 102 & 64 & 200 & 100 & 37 & 10 & \\    
    \toprule
    \rowcolor[gray]{0.9}
    \textbf{CE Loss} \\
    
    \textcolor{orange}{\textbf{Transfer Error $\downarrow$}} \\
    Projector & \textbf{12.62} & 0.393 & 0.490 & 0.468 & 0.316 & 91.38 & 65.72 & 64.51 & 64.97 & 82.22 & 97.42 & 43.17 & 21.51 & 66.36 \\
    
    \rowcolor{yellow!50}
    \textbf{Encoder} & 15.52 & 2.175 & 0.603 & 0.616 & 5.364 & \textbf{71.52} & \textbf{47.24} & \textbf{25.10} & \textbf{24.32} & \textbf{63.67} & \textbf{67.81} & \textbf{21.56} & \textbf{13.55} & \textbf{41.85} \\ 

    \midrule
    \textcolor{orange}{\textbf{Detection Error $\downarrow$}} \\
    \rowcolor{green!25}
    \textbf{Projector} & \textbf{12.62} & 0.393 & 0.490 & 0.468 & 0.316 & \textbf{60.85} & \textbf{48.23} & \textbf{42.35} & \textbf{67.69} & \textbf{56.51} & \textbf{99.04} & \textbf{76.32} & \textbf{69.84} & \textbf{65.10} \\
    
    Encoder & 15.52 & 2.175 & 0.603 & 0.616 & 5.364 & 67.17 & 98.14 & 81.76 & 84.95 & 84.57 & 99.70 & 97.36 & 87.34 & 87.62 \\
    \hline \hline
    
    \toprule
    \rowcolor[gray]{0.9}
    \textbf{MSE Loss} \\
    \textcolor{orange}{\textbf{Transfer Error $\downarrow$}} \\
    Projector & \textbf{14.04} & 0.469 & 0.743 & 0.279 & 0.382 & 87.18 & 70.33 & 82.16 & 55.95 & 90.35 & 97.09 & 55.93 & 28.01 & 70.87 \\
    \rowcolor{yellow!50}
    \textbf{Encoder} & 14.74 & 2.267 & 0.843 & 0.673 & 10.773 & \textbf{83.22} & \textbf{60.55} & \textbf{66.27} & \textbf{40.73} & \textbf{78.89} & \textbf{88.27} & \textbf{36.17} & \textbf{22.41} & \textbf{59.56} \\ 
    \midrule
    \textcolor{orange}{\textbf{Detection Error $\downarrow$}} \\
    \rowcolor{green!25}
    \textbf{Projector} & \textbf{14.04} & 0.469 & 0.743 & 0.279 & 0.382 & \textbf{63.75} & \textbf{48.02} & \textbf{61.18} & \textbf{69.50} & \textbf{74.58} & \textbf{99.10} & \textbf{84.57} & \textbf{74.06} & \textbf{71.84} \\
    Encoder & 14.74 & 2.267 & 0.843 & 0.673 & 10.773 & 93.32 & 62.42 & 77.55 & 92.14 & 95.77 & 99.19 & 99.13 & 91.50 & 88.88 \\
    \bottomrule
    \end{tabular}}
\end{table*}

\begin{figure*}[t]
    \centering
    \begin{subfigure}[b]{0.48\textwidth}
        \centering
        \includegraphics[width=\textwidth]{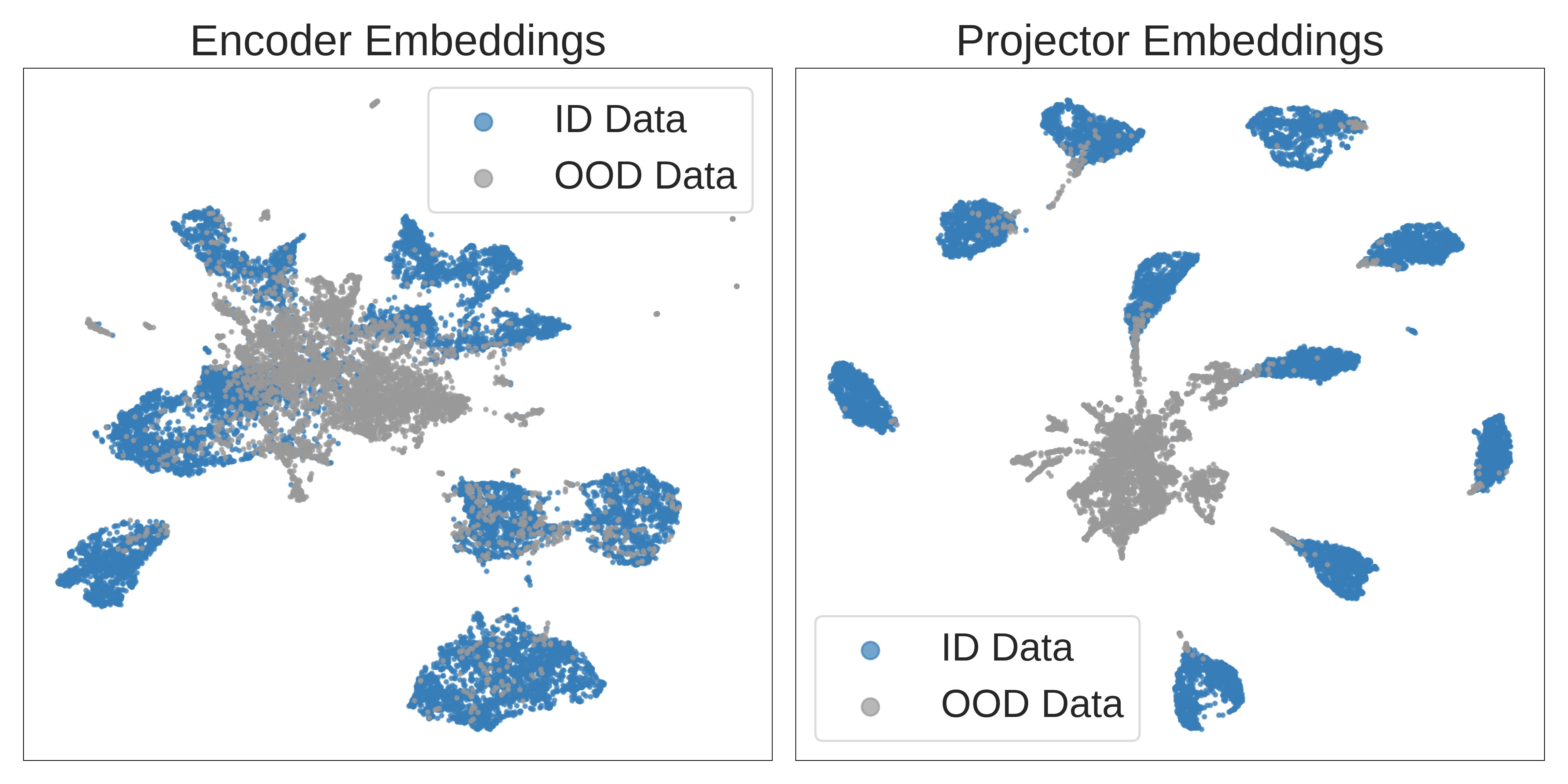}
        \caption{UMAP of Embeddings}
        \label{fig:umap_id_ood}
    \end{subfigure}
    \hfill
    \begin{subfigure}[b]{0.48\textwidth}
        \centering
        \includegraphics[width=\textwidth]{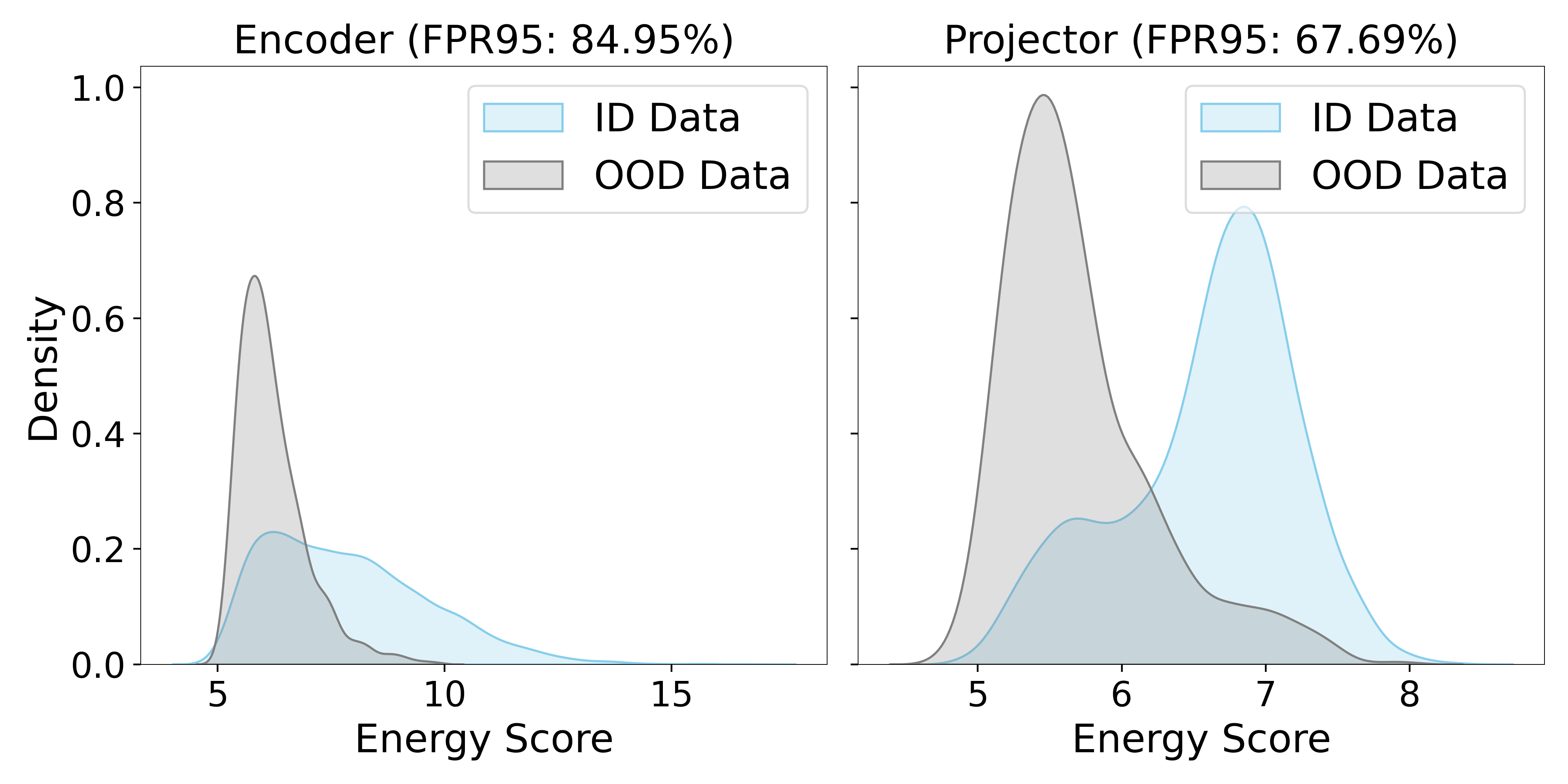}
        \caption{Energy Score Distribution}
        \label{fig:eng_id_ood}
    \end{subfigure}
    \caption{\textbf{ID \& OOD Data Visualization.} In \textbf{(a)}, The projector exhibits a greater separation between ID and OOD embeddings than the encoder. For clarity, we show 10 ImageNet classes as ID data and 64 classes from the NINCO dataset as OOD data. 
    In \textbf{(b)}, The projector achieves higher energy scores (and lower FPR95) for ID data.
    For ID and OOD datasets, we show ImageNet-100 and NINCO-64 respectively.
    }
    \label{fig:umap_eng_id_ood}
\end{figure*}


\begin{figure*}[t]
    \centering
    \begin{subfigure}[b]{0.48\textwidth}
        \centering
        \includegraphics[width=\textwidth]{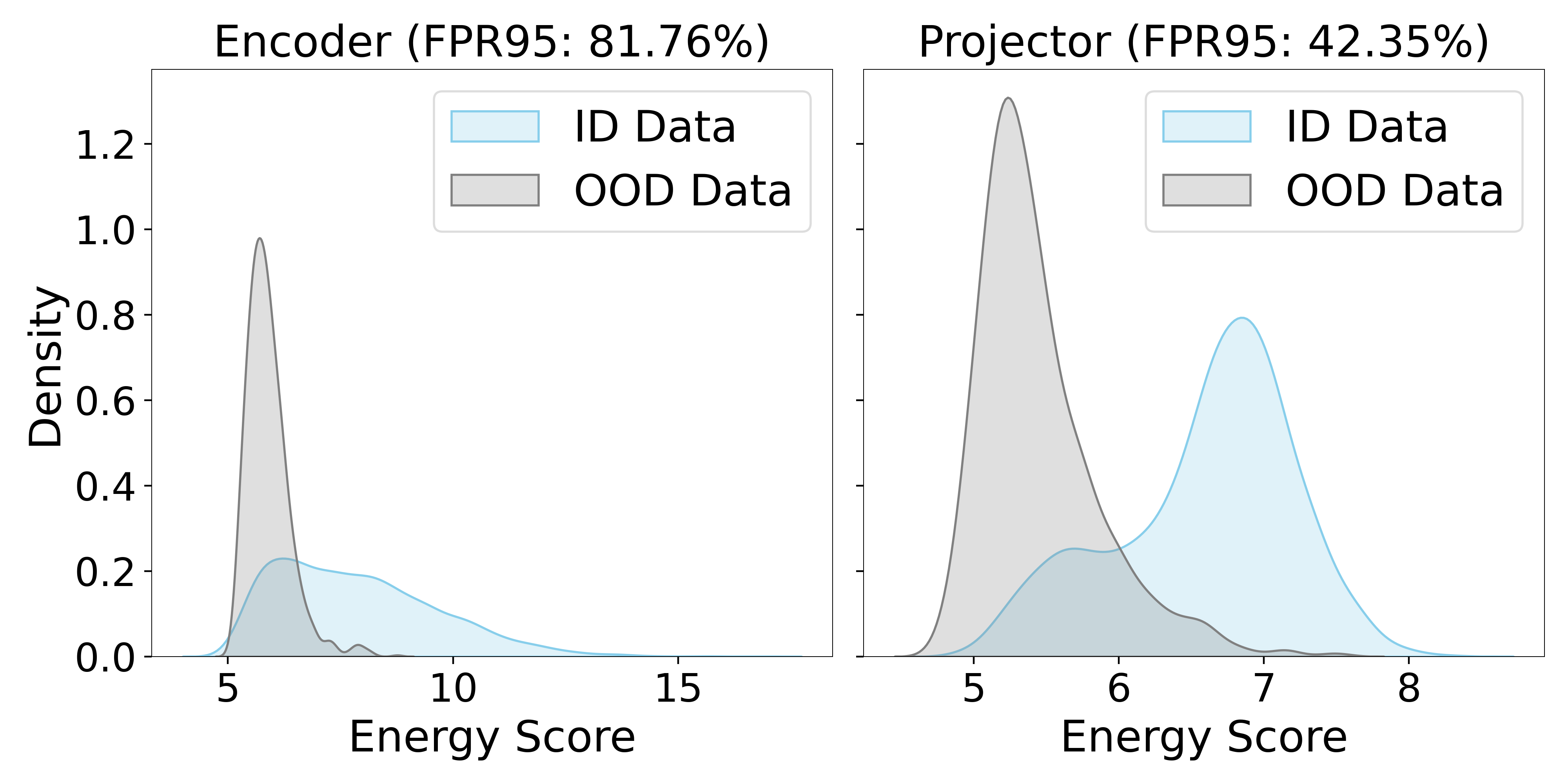}
        \caption{OOD Dataset: Flowers-102}
        \label{fig:eng_flower}
    \end{subfigure}
    \hfill
    \begin{subfigure}[b]{0.48\textwidth}
        \centering
        \includegraphics[width=\textwidth]{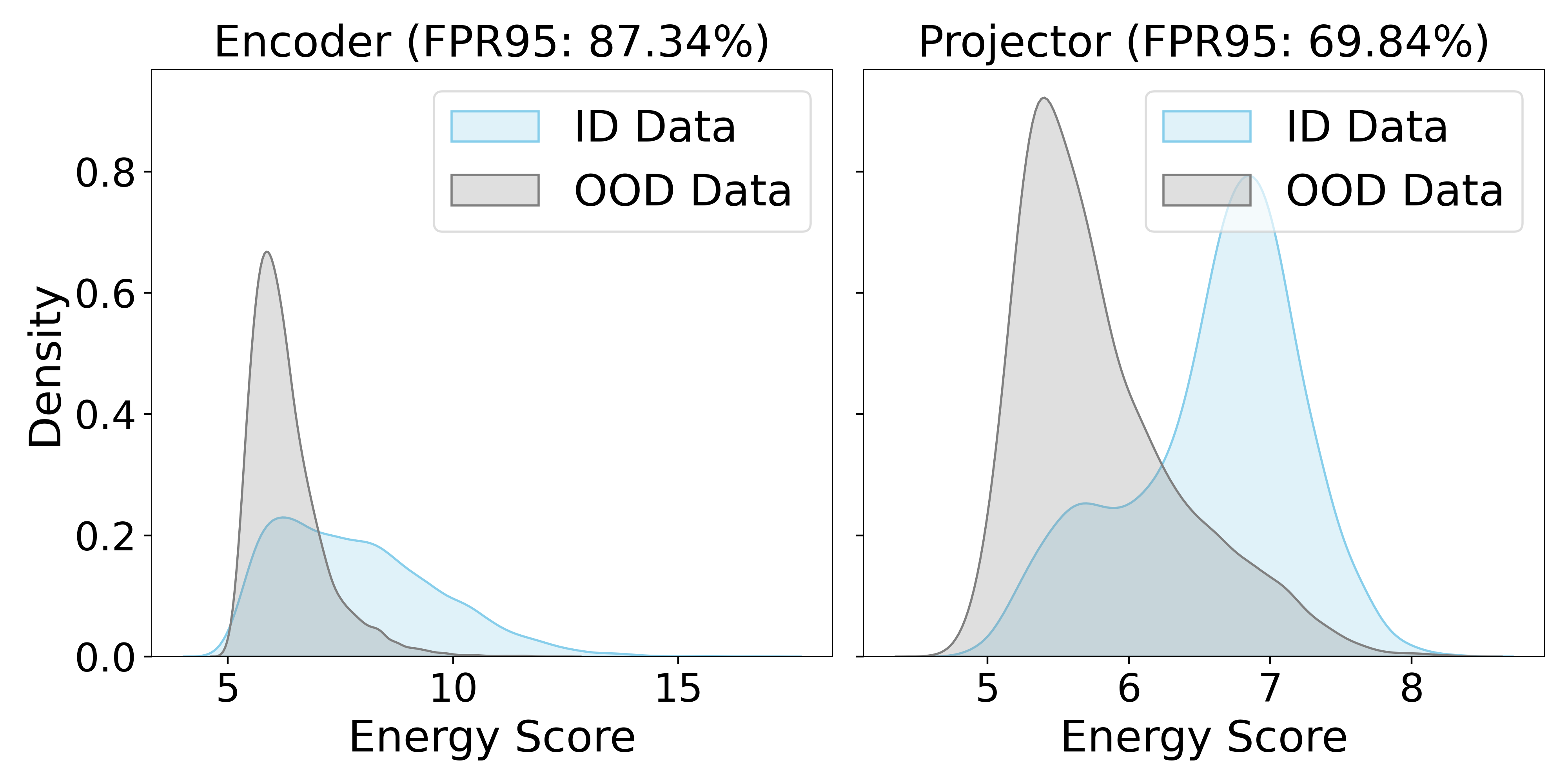}
        \caption{OOD Dataset: STL-10}
        \label{fig:eng_stl}
    \end{subfigure}
    \caption{\textbf{Energy Score Distribution.}
    The projector creates a greater separation between ID and OOD data and achieves a lower FPR95 than the encoder. For better OOD detection, ID data should obtain higher energy scores than OOD data. For ID and OOD datasets, we show ImageNet-100 and Flowers-102/ STL-10 respectively. The energy scores are calculated based on logits from the VGG17 model pre-trained on ImageNet-100.
    }
    \label{fig:more_eng_id_ood}
\end{figure*}


\subsection{VGG Experiments}

The detailed VGG17 results are given in Table~\ref{tab:main_results}. VGG results demonstrate that the encoder effectively mitigates NC for OOD generalization and the projector builds collapsed features and excels at the OOD detection task. The results also confirm that NC properties can be built using both CE and MSE loss functions.

\textbf{Qualitative Comparison.} 
We compare and visualize encoder embeddings and projector embeddings using UMAP. We also visualize the energy score distribution of ID and OOD data. The analysis is based on the VGG17 model pre-trained on the ImageNet-100 (ID) dataset and evaluated on OOD datasets: NINCO-64, Flowers-102, and STL-10. We observe the following:
\begin{itemize}[noitemsep, nolistsep, leftmargin=*]
    \item In Fig.~\ref{fig:umap_id_ood}, the UMAP shows that projector embeddings nicely separate ID and OOD sets whereas encoder embeddings exhibit substantial overlap between ID and OOD sets. This demonstrates that, unlike the encoder, the projector can intensify NC and is adept at OOD detection.

    \item We show the energy distribution of ID and OOD sets in Fig.~\ref{fig:eng_id_ood} and~\ref{fig:more_eng_id_ood}. In all comparisons, we observe that the projector outperforms the encoder in separating ID and OOD sets based on energy scores.
    
\end{itemize}

\subsection{ResNet Experiments}

\begin{table*}[ht]
\centering
  \caption{\textbf{Comprehensive ResNet Results.} ResNet models are trained on \textbf{ImageNet-100} dataset (ID) and evaluated on eight OOD datasets. 
  All models incorporate entropy regularization and the ETF projector to control NC.
  For OOD transfer we report $\boldsymbol{\mathcal{E}}_{\text{GEN}}$ (\%) whereas for OOD detection we report $\boldsymbol{\mathcal{E}}_{\text{DET}}$ (\%). 
  \textbf{A lower $\mathcal{NC}$ indicates stronger neural collapse.} 
  } 
  \label{tab:resnet_results}
  \centering
  \resizebox{\linewidth}{!}{
     \begin{tabular}{cc|cccc|ccccccccc}
     \hline 
     \multicolumn{1}{c}{\textbf{Model}} &
     \multicolumn{1}{c|}{$\boldsymbol{\mathcal{E}}_{\text{ID}} \downarrow$} &
     \multicolumn{4}{c|}{\textbf{Neural Collapse}} &
     \multicolumn{9}{c}{\textbf{OOD Datasets}} \\
    & IN & $\mathcal{NC}1$ &  $\mathcal{NC}2$ &  $\mathcal{NC}3$ & $\mathcal{NC}4$ & IN-R & CIFAR & Flowers & NINCO & CUB & Aircrafts & Pets & STL & Avg. \\
    & 100 &  &  &  &  & 200 & 100 & 102 & 64 & 200 & 100 & 37 & 10 & \\    
    \toprule
    \textcolor{blue}{\textbf{ResNet18}} \\
    \textcolor{orange}{\textbf{Transfer Error $\downarrow$}} \\
    Projector & \textbf{16.14} & 0.341 & 0.456 & 0.306 & 0.540 & 86.65 & 60.33 & 63.92 & 50.09 & 81.79 & 94.36 & 43.15 & 24.32 & 63.08 \\
    
    \rowcolor[gray]{0.9}
    \textbf{Encoder} & 20.14 & 1.762 & 0.552 & 0.555 & 10.695 & \textbf{74.17} & \textbf{53.33} & \textbf{31.37} & \textbf{28.15} & \textbf{68.85} & \textbf{81.61} & \textbf{27.72} & \textbf{16.56} & \textbf{47.72} \\ 

    \midrule
    \textcolor{orange}{\textbf{Detection Error $\downarrow$}} \\
    \rowcolor[gray]{0.9}
    \textbf{Projector} & \textbf{16.14} & 0.341 & 0.456 & 0.306 & 0.540 & \textbf{67.92} & \textbf{61.21} & \textbf{71.18} & \textbf{71.09} & \textbf{23.20} & \textbf{99.28} & \textbf{81.41} & \textbf{82.29} & \textbf{69.70} \\
    
    Encoder & 20.14 & 1.762 & 0.552 & 0.555 & 10.695 & 71.50 & 96.44 & 86.27 & 84.78 & 65.48 & 99.43 & 95.86 & 89.63 & 86.17 \\
    
    \hline \hline
    \textcolor{blue}{\textbf{ResNet34}} \\
    \textcolor{orange}{\textbf{Transfer Error $\downarrow$}} \\
    Projector & \textbf{14.54} & 0.252 & 0.672 & 0.294 & 0.324 & 83.93 & 58.65 & 64.41 & 44.05 & 81.65 & 93.58 & 43.64 & 22.87 & 61.60 \\
    
    \rowcolor[gray]{0.9}
    \textbf{Encoder} & 17.20 & 0.737 & 0.634 & 0.871 & 22.587 & \textbf{76.97} & \textbf{54.45} & \textbf{41.47} & \textbf{33.33} & \textbf{71.25} & \textbf{82.00} & \textbf{29.25} & \textbf{16.45} & \textbf{50.65} \\
    
    \hline
    \textcolor{orange}{\textbf{Detection Error $\downarrow$}} \\
    \rowcolor[gray]{0.9}
    \textbf{Projector} & \textbf{14.54} & 0.252 & 0.672 & 0.294 & 0.324 & \textbf{61.72} & \textbf{60.05} & \textbf{47.94} & \textbf{66.24} & \textbf{67.59} & \textbf{98.35} & \textbf{83.78} & \textbf{78.49} & \textbf{70.52} \\

    Encoder & 17.20 & 0.737 & 0.634 & 0.871 & 22.587 & 69.67 & 93.07 & 70.59 & 76.87 & 83.02 & 99.34 & 97.17 & 90.75 & 85.06 \\
    \bottomrule
    \end{tabular}}
\end{table*} 

\begin{figure}[t]
    \centering
    \includegraphics[width = 0.99\linewidth]{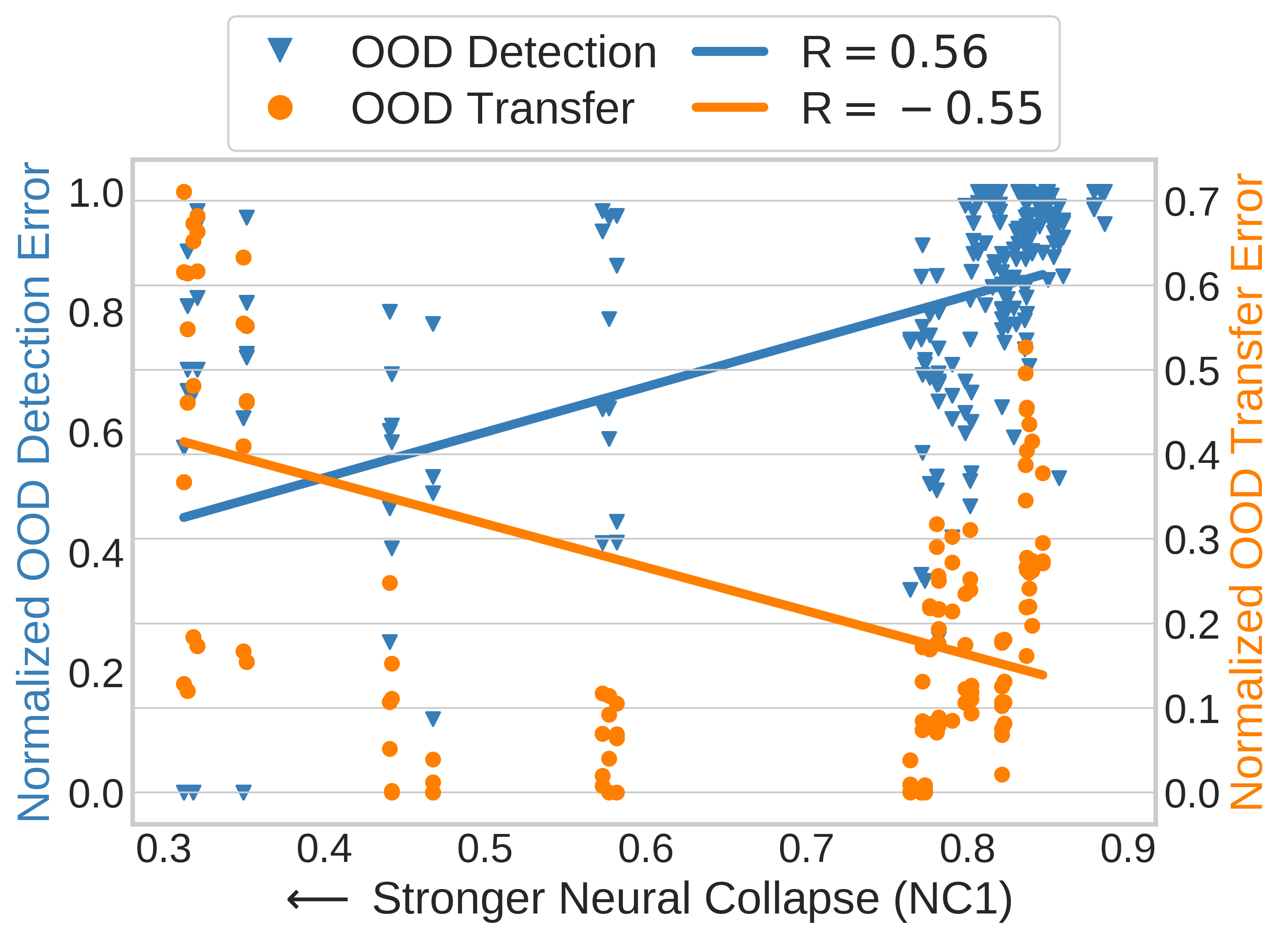}
  \caption{Lower NC1 values (indicating stronger neural collapse) correlate with lower OOD detection error but higher OOD transfer error, and vice versa. This suggests that stronger neural collapse improves OOD detection, while weaker neural collapse enhances OOD generalization. We analyze various layers of \textbf{ResNet18}, pre-trained on ImageNet-100 dataset (ID), and evaluate them on four OOD datasets. $R$ denotes the Pearson correlation coefficient.
  } 
  \label{fig:nc_resnet}
\end{figure}

The detailed ResNet18/34 results are given in Table~\ref{tab:resnet_results}. Our findings validate that NC can be controlled in various ResNet architectures for improving OOD detection and OOD generalization performance.
Additionally, NC shows a strong correlation with OOD detection and OOD generalization as illustrated in Fig.~\ref{fig:nc_resnet}.

\begin{figure}[h]
    \centering
    \includegraphics[width = 0.99\linewidth]{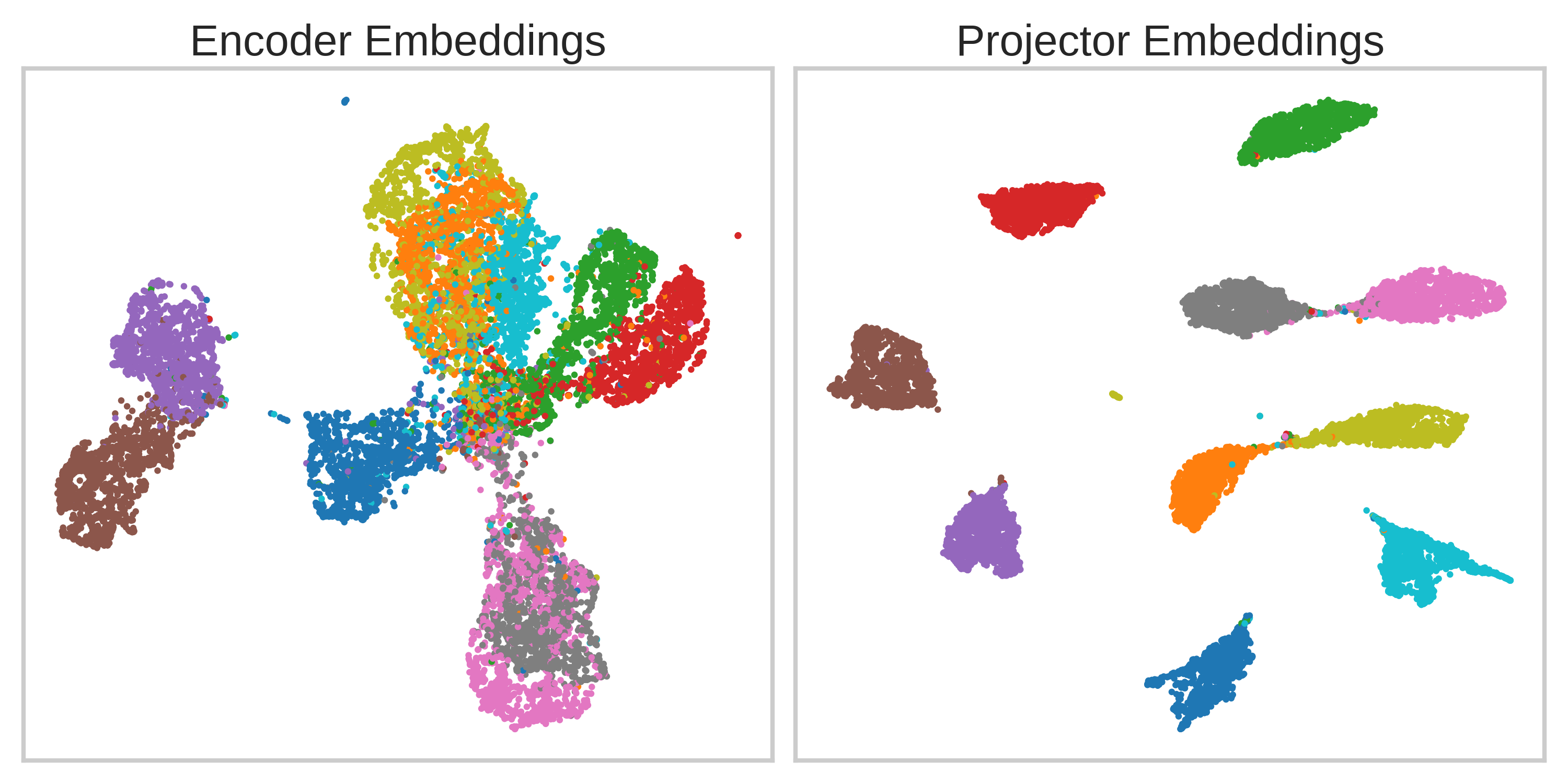}
  \caption{\textbf{Visualization of Embedding (ResNet18).} In this UMAP, projector embeddings exhibit greater neural collapse ($\mathcal{NC}1=0.341$) than the encoder embeddings ($\mathcal{NC}1=1.762$) as indicated by the formation of tight clusters around class-means. For clarity, we highlight 10 ImageNet classes by distinct colors. The embeddings are extracted from ImageNet-100 pre-trained ResNet18.} 
  \label{fig:umap_vis_resnet}
\end{figure}

We also visualize embeddings extracted from the encoder and projector of the ResNet18 model. As depicted in Fig.~\ref{fig:umap_vis_resnet}, projector embeddings exhibit much greater neural collapse than encoder embeddings.

\subsection{ViT Experiments} 

\begin{table*}[ht]
\centering
  \caption{\textbf{Comprehensive ViT Results.} ViT-Tiny (6.02M) and ViT-Small (21.62M) are trained on \textbf{ImageNet-100} dataset (ID) and evaluated on eight OOD datasets. 
  All models incorporate entropy regularization and the ETF projector to control NC.
  For OOD transfer we report $\boldsymbol{\mathcal{E}}_{\text{GEN}}$ (\%) whereas for OOD detection we report $\boldsymbol{\mathcal{E}}_{\text{DET}}$ (\%). 
  \textbf{A lower $\mathcal{NC}$ indicates stronger neural collapse.} 
  } 
  \label{tab:vit_results}
  \centering
  \resizebox{\linewidth}{!}{
     \begin{tabular}{cc|cccc|ccccccccc}
     \hline 
     \multicolumn{1}{c}{\textbf{Model}} &
     \multicolumn{1}{c|}{$\boldsymbol{\mathcal{E}}_{\text{ID}} \downarrow$} &
     \multicolumn{4}{c|}{\textbf{Neural Collapse}} &
     \multicolumn{9}{c}{\textbf{OOD Datasets}} \\
    & IN & $\mathcal{NC}1$ &  $\mathcal{NC}2$ &  $\mathcal{NC}3$ & $\mathcal{NC}4$ & IN-R & CIFAR & Flowers & NINCO & CUB & Aircrafts & Pets & STL & Avg. \\
    & 100 &  &  &  &  & 200 & 100 & 102 & 64 & 200 & 100 & 37 & 10 & \\    
    \toprule
    \textcolor{blue}{\textbf{ViT-Tiny}} \\
    \textcolor{orange}{\textbf{Transfer Error $\downarrow$}} \\
    Projector & \textbf{32.04} & 2.748 & 0.609 & 0.798 & 1.144 & 87.37 & 60.71 & 64.61 & 39.71 & 80.00 & 92.00 & 54.27 & 29.55 & 63.53 \\
    
    \rowcolor[gray]{0.9}
    \textbf{Encoder} & 33.94 & 5.769 & 0.748 & 0.847 & 2.332 & \textbf{82.28} & \textbf{52.00} & \textbf{42.94} & \textbf{30.36} & \textbf{63.15} & \textbf{84.31} & \textbf{44.86} & \textbf{21.13} & \textbf{52.63} \\

    \midrule
    \textcolor{orange}{\textbf{Detection Error $\downarrow$}} \\
    \rowcolor[gray]{0.9}
    \textbf{Projector} & \textbf{32.04} & 2.748 & 0.609 & 0.798 & 1.144 & \textbf{81.12} & \textbf{60.81} & \textbf{77.55} & \textbf{82.40} & \textbf{79.05} & 99.10 & \textbf{95.15} & 90.06 & \textbf{83.16} \\
    
    Encoder & 33.94 & 5.769 & 0.748 & 0.847 & 2.332 & 83.80 & 96.76 & 87.65 & 93.11 & 82.14 & 99.10 & 95.75 & \textbf{88.79} & 90.89 \\
    \hline \hline
    \textcolor{blue}{\textbf{ViT-Small}} \\
    \textcolor{orange}{\textbf{Transfer Error $\downarrow$}} \\
    
    Projector & \textbf{31.28} & 0.822 & 0.522 & 0.712 & 0.962 & 86.57 & 58.46 & 64.51 & 39.20 & 78.25 & 90.70 & 53.86 & 29.30 & 62.61 \\
    
    \rowcolor[gray]{0.9}
    \textbf{Encoder} & 33.40 & 1.610 & 0.601 & 0.740 & 2.814 & \textbf{80.53} & \textbf{49.68} & \textbf{40.49} & \textbf{29.93} & \textbf{61.08} & \textbf{81.28} & \textbf{44.45} & \textbf{20.98} & \textbf{51.05} \\
    \hline
    
    \textcolor{orange}{\textbf{Detection Error $\downarrow$}} \\
    \rowcolor[gray]{0.9}
    \textbf{Projector} & \textbf{31.28} & 0.822 & 0.522 & 0.712 & 0.962 & \textbf{76.03} & \textbf{58.79} & \textbf{75.20} & \textbf{81.97} & \textbf{82.46} & \textbf{98.50} & 95.42 & \textbf{88.74} & \textbf{82.14} \\
    
    Encoder & 33.40 & 1.610 & 0.601 & 0.740 & 2.814 & 82.47 & 96.84 & 90.39 & 92.60 & 86.00 & 99.25 & \textbf{94.36} & 89.04 & 91.37 \\
    
    \bottomrule
    \end{tabular}}
\end{table*} 

As shown in Table~\ref{tab:vit_results}, the projector outperforms the encoder in OOD detection by absolute 7.73\% (ViT-Tiny) and 9.23\% (ViT-Small). Whereas the encoder outperforms the projector in OOD transfer by absolute 10.90\% (ViT-Tiny) and 11.56\% (ViT-Small). This demonstrates that controlling NC improves OOD detection and generalization in ViTs.

\section{Analysis on Entropy Regularization}
\label{sec:analysis_entropy_reg}

\begin{table*}[t]
\centering
  \caption{\textbf{Entropy Regularization Vs. No Entropy Regularization.} 
  VGG17 models are pre-trained on \textbf{ImageNet-100} dataset (ID) and evaluated on eight OOD datasets. \textbf{All models incorporate an ETF projector}. Entropy regularization loss with a coefficient, $\alpha$ is applied in the last encoder layer. The same color highlights the rows to compare. All metrics except NC are reported in \%. The lower the NC value, the stronger the neural collapse. For OOD transfer we report $\boldsymbol{\mathcal{E}}_{\text{GEN}}$ (\%) whereas for OOD detection we report $\boldsymbol{\mathcal{E}}_{\text{DET}}$ (\%).
  } 
  \label{tab:reg_vs_no_reg}
  \centering
  \resizebox{\linewidth}{!}{
     \begin{tabular}{cc|cccc|ccccccccc}
     \hline 
     \multicolumn{1}{c}{\textbf{Method}} &
     \multicolumn{1}{c|}{$\boldsymbol{\mathcal{E}}_{\text{ID}} \downarrow$} &
     \multicolumn{4}{c|}{\textbf{Neural Collapse}} &
     \multicolumn{9}{c}{\textbf{OOD Datasets} $\downarrow$} \\
    & IN & $\mathcal{NC}1$ & $\mathcal{NC}2$ & $\mathcal{NC}3$ & $\mathcal{NC}4$ & IN-R & CIFAR & Flowers & NINCO & CUB & Aircrafts & Pets & STL & Avg. \\
    & 100 &  &  &  &  & 200 & 100 & 102 & 64 & 200 & 100 & 37 & 10 & \\    
    \hline
    \textbf{\textcolor{orange}{Transfer Error $\downarrow$}} \\
    \textcolor{blue}{\textbf{No Reg.} ($\mathbf{\alpha=0}$)} \\
    Projector & 13.46 & 0.260 & 0.636 & 0.369 & 0.883 & 84.30 & 60.73 & 65.69 & 45.15 & 82.90 & 93.73 & 40.56 & 22.85 & 61.99 \\
    \rowcolor{yellow!50}
    Encoder & 15.24 & 1.308 & 0.719 & 0.619 & 5.184 & 73.52 & 49.26 & 37.06 & 25.51 & 64.31 & 69.58 & 22.24 & 15.00 & 44.56 \\
    \hline 

    \textcolor{blue}{\textbf{Reg.} ($\mathbf{\alpha=0.05}$)} \\

    Projector & \textbf{12.62} & 0.393 & 0.490 & 0.468 & 0.316 & 91.38 & 65.72 & 64.51 & 64.97 & 82.22 & 97.42 & 43.17 & 21.51 & 66.36 \\
    
    \rowcolor{yellow!50}
    \textbf{Encoder} & 15.52 & 2.175 & 0.603 & 0.616 & 5.364 & \textbf{71.52} & \textbf{47.24} & \textbf{25.10} & 24.32 & 63.67 & \textbf{67.81} & \textbf{21.56} & \textbf{13.55} & \textbf{41.85} \\ 

    \hline
    \textcolor{blue}{\textbf{Reg.} ($\mathbf{\alpha=0.1}$)} \\
    Projector & 13.04 & 0.428 & 0.671 & 0.340 & 0.320 & 93.62 & 66.00 & 55.29 & 79.25 & 81.84 & 97.09 & 46.96 & 23.00 & 67.88 \\

    \rowcolor{yellow!50}
    Encoder & 16.12 & 2.861 & 0.538 & 0.636 & 6.677 & 73.05 & 48.61 & 27.84 & \textbf{22.62} & \textbf{61.91} & 70.21 & 22.87 & 13.83 & 42.62 \\
    
    \hline \hline
    \textbf{\textcolor{orange}{Detection Error $\downarrow$}} \\
    \textcolor{blue}{\textbf{No Reg.} ($\mathbf{\alpha=0}$)} \\
    \rowcolor{green!25}
    Projector & 13.46 & 0.260 & 0.636 & 0.369 & 0.883 & 65.22 & 54.32 & 45.20 & 67.18 & 52.37 & 98.41 & 84.38 & 72.58 & 67.46 \\
    
    Encoder & 15.24 & 1.308 & 0.719 & 0.619 & 5.184 & 74.22 & 99.75 & 85.10 & 88.52 & 92.99 & 98.59 & 95.34 & 92.14 & 90.83 \\
    \hline
    \textcolor{blue}{\textbf{Reg.} ($\mathbf{\alpha=0.05}$)} \\
    \rowcolor{green!25}
    \textbf{Projector} & \textbf{12.62} & 0.393 & 0.490 & 0.468 & 0.316 & \textbf{60.85} & \textbf{48.23} & \textbf{42.35} & 67.69 & 56.51 & 99.04 & \textbf{76.32} & \textbf{69.84} & \textbf{65.10} \\
    
    Encoder & 15.52 & 2.175 & 0.603 & 0.616 & 5.364 & 67.17 & 98.14 & 81.76 & 84.95 & 84.57 & 99.70 & 97.36 & 87.34 & 87.62 \\

    \hline
    \textcolor{blue}{\textbf{Reg.} ($\mathbf{\alpha=0.1}$)} \\
    \rowcolor{green!25}
    Projector & 13.04 & 0.428 & 0.671 & 0.340 & 0.320 & 61.13 & 54.69 & 43.14 & \textbf{64.63} & \textbf{50.73} & 98.74 & 82.42 & 71.51 & 65.87 \\

    Encoder & 16.12 & 2.861 & 0.538 & 0.636 & 6.677 & 68.72 & 94.67 & 85.78 & 87.76 & 85.49 & 98.92 & 95.15 & 86.28 & 87.85 \\
    
    \bottomrule
    \end{tabular}}
\end{table*}

Table~\ref{tab:reg_vs_no_reg} presents the detailed comparison between a model using the entropy regularization vs another model omitting it. We observe that using entropy penalty enhances OOD transfer by 2.71\% (absolute), OOD detection by 2.36\% (absolute), and ID performance by 0.84\% (absolute).

Additionally, we analyze the impact of the entropy regularization loss coefficient on the ID and OOD transfer. Table~\ref{tab:entropy_loss_coeff} shows that increasing coefficient increases OOD transfer and rank of embeddings. This suggests that entropy regularization helps encode diverse features and reduce redundant features, encouraging utilization of all dimensions. Although entropy regularization is not sensitive to coefficient, over-regularization may hurt ID performance. Thereby, any non-aggressive coefficient can maintain good performance in both ID and OOD tasks. 

We also analyze the impact of entropy regularization on encoder embeddings during the training phase. During each training epoch, we measure the NC1 criterion, entropy, and effective rank of encoder embeddings. These experiments are computationally intensive for large-scale datasets. Therefore, we perform small-scale experiments where we train VGG17 models on the ImageNet-10 (10 ImageNet classes) subset for 100 epochs. We evaluate two cases: one with entropy regularization and another without entropy regularization.

The results are illustrated in Fig.~\ref{fig:nc_dynamics}.
We find that entropy regularization achieves higher NC1 values during training compared to the model without any regularization. Thus, it helps mitigate NC during training, thereby contributing to OOD generalization. These findings align with our theoretical analysis showing entropy as an effective mechanism to prevent NC in the encoder.

Entropy regularization also increases the entropy and effective rank of the encoder embeddings. 
This demonstrates that entropy regularization helps encode diverse features, ensuring the features remain sufficiently ``spread out.''

Without the entropy regularization, the entropy of encoder embeddings does not improve. Also, the effective rank ends up at a low value (as low as the number of ID classes). The low rank is a sign of strong neural collapse and suggests that the encoder uses a few feature dimensions to encode information with huge redundancy in other dimensions. This degeneracy of embeddings impairs OOD transfer.
Entropy regularization counteracts this and improves OOD transfer.

\begin{table*}[t]
\centering
  \caption{\textbf{Entropy Regularization Loss Coefficient.}
  We examine the impact of entropy regularization on the OOD generalization of a regular \textbf{VGG17 without any projector}.
  VGG17 models are pre-trained on the ImageNet-10 (10 ImageNet classes) ID dataset and evaluated on eight OOD datasets. 
  $\alpha$ denotes the entropy regularization loss coefficient. 
  Effective rank corresponds to the penultimate embeddings where entropy regularization is applied or omitted. For OOD generalization, we report $\boldsymbol{\mathcal{E}}_{\text{GEN}}$ (\%).
  } 
  \label{tab:entropy_loss_coeff}
  \centering
  \resizebox{\linewidth}{!}{
     \begin{tabular}{ccc|ccccccccc}
     \hline 
     \multicolumn{1}{c}{\textbf{Reg. Coeff.}} &
     \multicolumn{1}{c}{$\boldsymbol{\mathcal{E}}_{\text{ID}} \downarrow$} &
     \multicolumn{1}{c|}{\textbf{Rank} $\uparrow$} &
     \multicolumn{9}{c}{$\boldsymbol{\mathcal{E}}_{\text{GEN}} \downarrow$} \\
    $\mathbf{\alpha}$ & IN & IN & IN-R & CIFAR & Flowers & NINCO & CUB & Aircrafts & Pets & STL & Avg. \\
    & 10 & 10 & 200 & 100 & 102 & 64 & 200 & 100 & 37 & 10 & \\ 
    \hline
    0 & \textbf{9.20} & 2211.99 & 94.62 & 83.77 & 72.45 & 65.56 & 86.76 & 89.32 & 80.32 & 49.44 & 77.78 \\
    0.1 & 9.80 & 2964.39 & 90.72 & 75.58 & 57.94 & 50.09 & 79.96 & 84.13 & 72.72 & 39.79 & 68.87 \\
    0.2 & 10.20 & 3170.92 & 90.25 & 72.77 & 57.84 & 50.85 & 79.48 & 84.16 & 70.43 & \textbf{37.71} & 67.94 \\
    0.6 & 12.00 & 3761.33 & \textbf{88.33} & 68.73 & \textbf{50.29} & 47.45 & \textbf{77.10} & 82.57 & \textbf{67.73} & 39.00 & 65.15 \\
    1.0 & 12.80 & \textbf{4815.32} & 88.38 & \textbf{67.81} & \textbf{50.29} & \textbf{47.11} & 77.48 & \textbf{81.64} & 68.27 & 38.74 & \textbf{64.96} \\
    \hline 
    \end{tabular}
    }
\end{table*}

\begin{figure*}[t]
    \centering
    \begin{subfigure}[b]{0.45\textwidth}
        \centering
        \includegraphics[width=\textwidth]{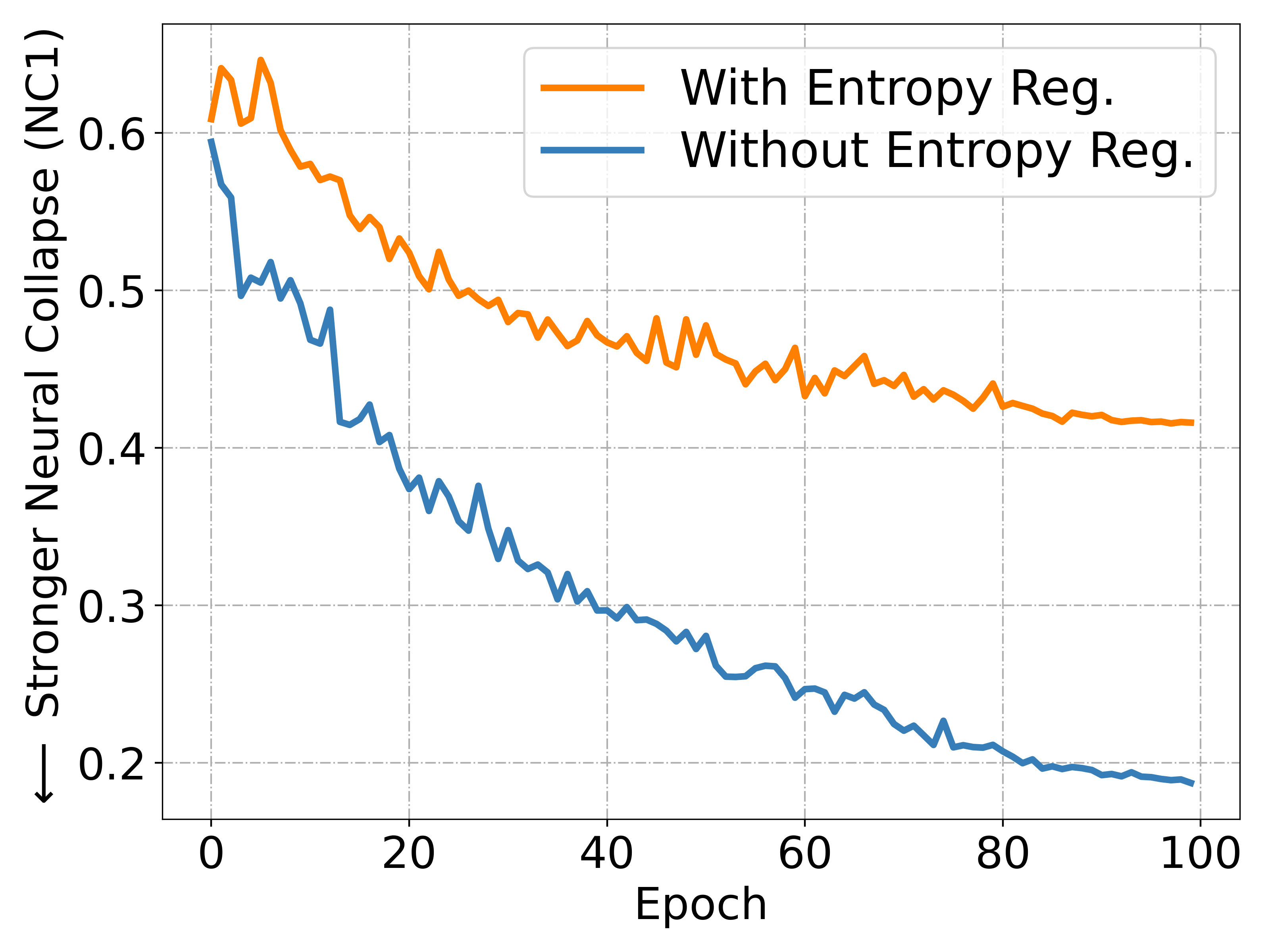}
        \caption{Impact of Entropy Regularization on NC1}
        \label{fig:entropy_reg}
    \end{subfigure}
    \begin{subfigure}[b]{0.45\textwidth}
        \centering
        \includegraphics[width=\textwidth]{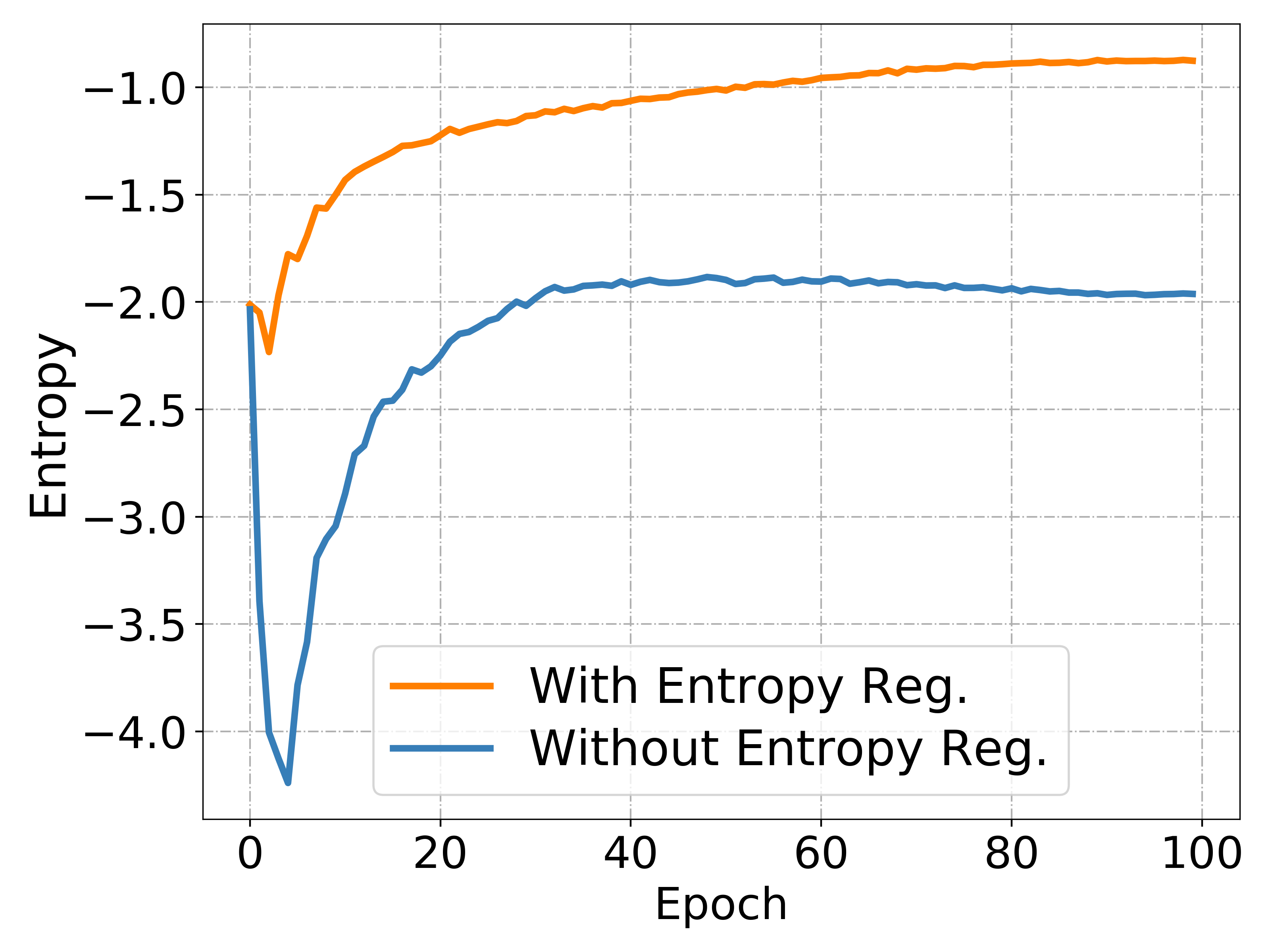}
        \caption{Entropy Dynamics}
        \label{fig:entropy-dynamics}
    \end{subfigure}
    \begin{subfigure}[b]{0.45\textwidth}
        \centering
        \includegraphics[width=\textwidth]{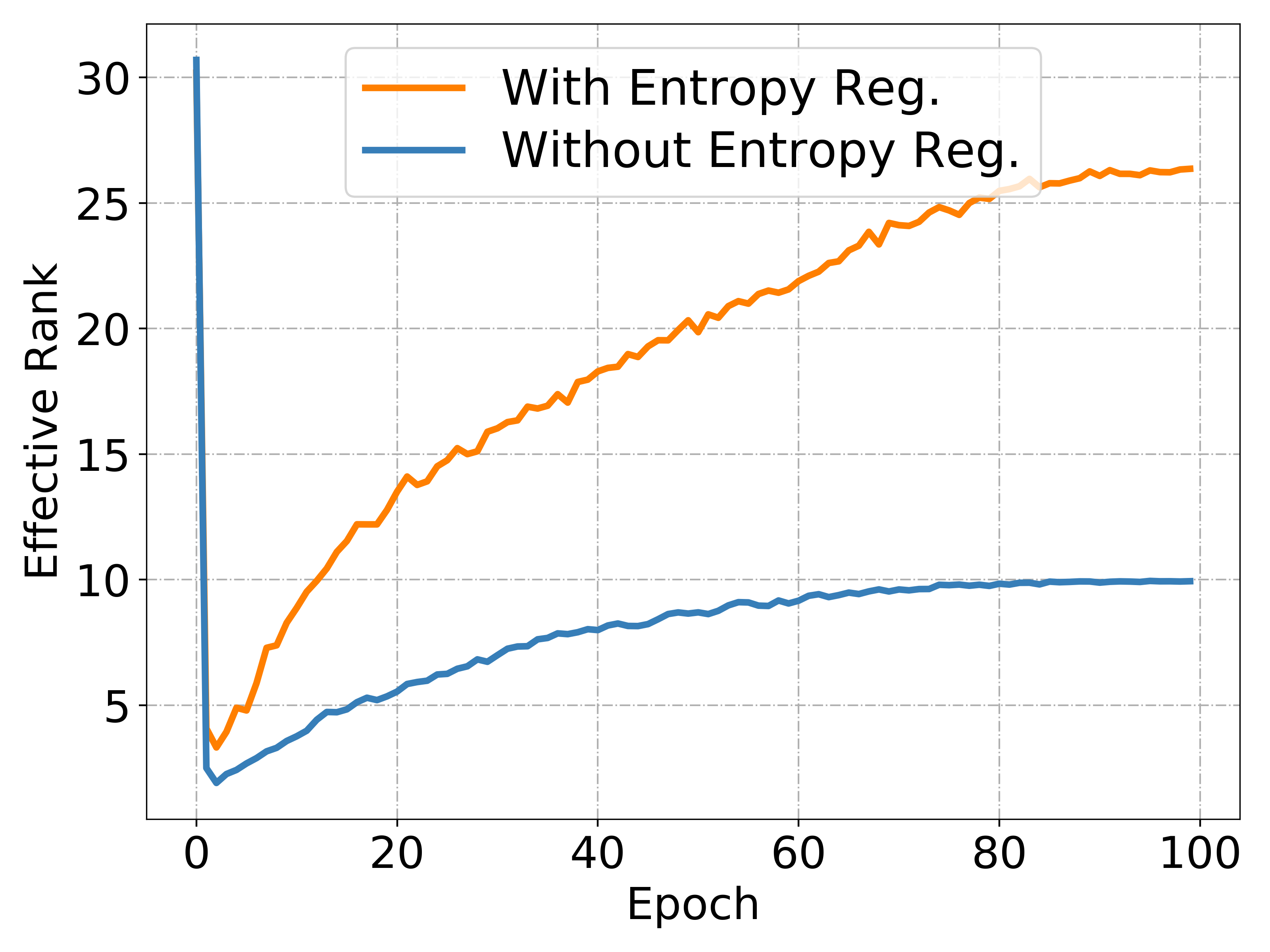}
        \caption{Effective Rank Dynamics}
        \label{fig:effective-rank}
    \end{subfigure}
    \begin{subfigure}[b]{0.45\textwidth}
        \centering
        \includegraphics[width=\textwidth]{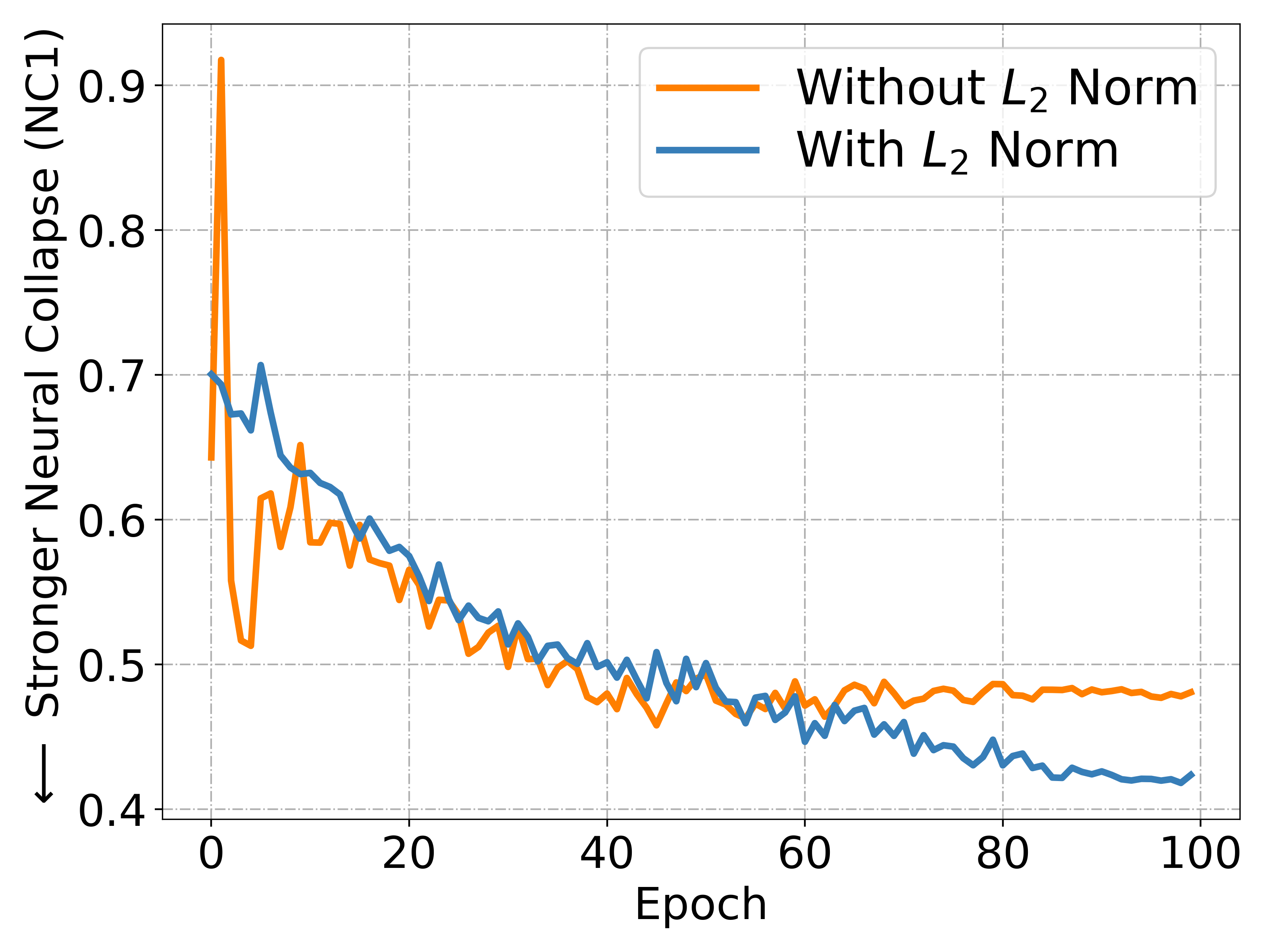}
        \caption{Impact of $L_2$ Normalization on NC1}
        \label{fig:l2_norm}
    \end{subfigure}
    
    \caption{\textbf{Analyzing entropy regularization \& $\mathbf{L_2}$ normalization.} 
    \textcolor{blue}{\textbf{(a)}} Entropy regularization reduces neural collapse (indicated by higher NC1 values) in the encoder. 
    \textcolor{blue}{\textbf{(b)}} Entropy regularization increases the entropy of encoder embeddings otherwise entropy remains unchanged.
    \textcolor{blue}{\textbf{(c)}} Entropy regularization increases the effective rank of encoder embeddings otherwise effective rank remains as low as the number of classes (i.e., 10 ImageNet classes).
    \textcolor{blue}{\textbf{(d)}} $L_2$ normalization increases neural collapse (indicated by lower NC1 values) in the projector. 
    For this analysis, we train VGG17 networks on the ImageNet-10 subset (10 ImageNet classes) for 100 epochs.
    }
    \label{fig:nc_dynamics}
\end{figure*}

\section{Additional Experimental Results}
\label{sec:additional_exp_supp}


\subsection{Fixed ETF Projector Vs. Learnable Projector}

In Table~\ref{tab:plastic_proj}, we observe that the fixed ETF projector shows a higher transfer error (2.47\% absolute) than the plastic projector but outperforms the plastic projector in ID error (2.48\% absolute) and OOD detection error (8.9\% absolute). A fixed ETF projector should intensify NC and hinder OOD transfer and our fixed ETF projector fulfills this goal.

\begin{table*}[t]
\centering
  \caption{\textbf{ETF Fixed Projector Vs. Plastic Projector.} VGG17 models are trained on \textbf{ImageNet-100} dataset (ID) and evaluated on eight OOD datasets. 
  All models incorporate entropy regularization and a projector (plastic/fixed ETF).
  The same color highlights the rows to compare.
  For OOD transfer we report $\boldsymbol{\mathcal{E}}_{\text{GEN}}$ (\%) whereas for OOD detection we report $\boldsymbol{\mathcal{E}}_{\text{DET}}$ (\%). 
  } 
  \label{tab:plastic_proj}
  \centering
  \resizebox{\linewidth}{!}{
     \begin{tabular}{cc|cccc|ccccccccc}
     \hline 
     \multicolumn{1}{c}{\textbf{Projector}} &
     \multicolumn{1}{c|}{$\boldsymbol{\mathcal{E}}_{\text{ID}} \downarrow$} &
     \multicolumn{4}{c|}{\textbf{Neural Collapse}} &
     \multicolumn{9}{c}{\textbf{OOD Datasets}} \\
    & IN & $\mathcal{NC}1$ & $\mathcal{NC}2$ & $\mathcal{NC}3$ & $\mathcal{NC}4$ & IN-R & CIFAR & Flowers & NINCO & CUB & Aircrafts & Pets & STL & Avg. \\
    & 100 &  &  &  &  & 200 & 100 & 102 & 64 & 200 & 100 & 37 & 10 & \\    
    \hline
    \textbf{\textcolor{orange}{Transfer Error $\downarrow$}} \\
    \textcolor{blue}{\textbf{Plastic}} \\
    Projector & 15.10 & 0.498 & 0.515 & 0.428 & 1.422 & 87.52 & 64.83 & 79.71 & 53.32 & 87.00 & 93.46 & 48.76 & 28.04 & 67.83 \\

    \rowcolor{yellow!50}
    Encoder & 23.64 & 13.953 & 0.526 & 0.833 & 6.697 & \textbf{69.43} & \textbf{45.12} & \textbf{20.00} & \textbf{23.55} & \textbf{57.90} & \textbf{60.10} & 25.40 & 13.52 & \textbf{39.38} \\
    \hline 

    \textcolor{blue}{\textbf{Fixed ETF (Ours)}} \\

    Projector & \textbf{12.62} & 0.393 & 0.490 & 0.468 & 0.316 & 91.38 & 65.72 & 64.51 & 64.97 & 82.22 & 97.42 & 43.17 & 21.51 & 66.36 \\
    
    \rowcolor{yellow!50}
    \textbf{Encoder} & 15.52 & 2.175 & 0.603 & 0.616 & 5.364 & 71.52 & 47.24 & 25.10 & 24.32 & 63.67 & 67.81 & \textbf{21.56} & 13.55 & 41.85 \\ 

    \hline \hline
    \textbf{\textcolor{orange}{Detection Error $\downarrow$}} \\
    \textcolor{blue}{\textbf{Plastic}} \\
    \rowcolor{green!25}
    Projector & 15.10 & 0.498 & 0.515 & 0.428 & 1.422 & 63.05 & \textbf{47.87} & 62.45 & 70.07 & 80.88 & 98.95 & 89.37 & 79.25 & 74.00 \\
    
    Encoder & 23.64 & 13.953 & 0.526 & 0.833 & 6.697 & 81.27 & 98.82 & 93.33 & 86.48 & 79.98 & 99.40 & 91.25 & 93.88 & 90.55 \\
    \hline
    \textcolor{blue}{\textbf{Fixed ETF (Ours)}} \\
    \rowcolor{green!25}
    \textbf{Projector} & \textbf{12.62} & 0.393 & 0.490 & 0.468 & 0.316 & \textbf{60.85} & 48.23 & \textbf{42.35} & \textbf{67.69} & \textbf{56.51} & 99.04 & \textbf{76.32} & \textbf{69.84} & \textbf{65.10} \\
    
    Encoder & 15.52 & 2.175 & 0.603 & 0.616 & 5.364 & 67.17 & 98.14 & 81.76 & 84.95 & 84.57 & 99.70 & 97.36 & 87.34 & 87.62 \\
    
    \hline \hline
    \end{tabular}}
\end{table*}

\subsection{Impact of $\mathbf{L_2}$ Normalization on NC}

\begin{table*}[t]
\centering
  \caption{\textbf{$\mathbf{L_2}$ Normalization.} VGG17 models are trained on \textbf{ImageNet-100} dataset (ID) and evaluated on eight OOD datasets. 
  All models incorporate entropy regularization and the ETF projector to control NC.
  The same color highlights the rows to compare. 
  For OOD detection, we report $\boldsymbol{\mathcal{E}}_{\text{DET}}$ (\%).
  } 
  \label{tab:l2_norm_nc}
  \centering
  \resizebox{\linewidth}{!}{
     \begin{tabular}{cc|cccc|ccccccccc}
     \hline 
     \multicolumn{1}{c}{\textbf{Method}} &
     \multicolumn{1}{c|}{$\boldsymbol{\mathcal{E}}_{\text{ID}} \downarrow$} &
     \multicolumn{4}{c|}{\textbf{Neural Collapse} $\downarrow$} &
     \multicolumn{9}{c}{$\boldsymbol{\mathcal{E}}_{\text{DET}} \downarrow$} \\
    & IN & $\mathcal{NC}1$ & $\mathcal{NC}2$ & $\mathcal{NC}3$ & $\mathcal{NC}4$ & IN-R & CIFAR & Flowers & NINCO & CUB & Aircrafts & Pets & STL & Avg. \\
    & 100 &  &  &  &  & 200 & 100 & 102 & 64 & 200 & 100 & 37 & 10 & \\    
    \toprule
    \textcolor{blue}{\textbf{No $L_2$ Norm}} \\
    \rowcolor{green!25}
    Projector & 12.74 & 0.579 & 0.538 & \textbf{0.349} & 1.339 & \textbf{57.43} & 49.41 & 62.35 & 69.81 & 58.04 & 99.58 & 85.28 & 69.53 & 68.93 \\
    
    Encoder & 14.70 & 1.788 & 0.633 & 0.823 & 10.643 & 77.08 & 96.77 & 91.18 & 92.35 & 89.47 & 99.64 & 89.51 & 85.31 & 90.16 \\
    \hline
    \textcolor{blue}{\textbf{$L_2$ Norm}} \\
    \rowcolor{green!25}
    \textbf{Projector} & \textbf{12.62} & \textbf{0.393} & \textbf{0.490} & 0.468 & \textbf{0.316} & 60.85 & \textbf{48.23} & \textbf{42.35} & \textbf{67.69} & \textbf{56.51} & 99.04 & \textbf{76.32} & 69.84 & \textbf{65.10} \\
    
    Encoder & 15.52 & 2.175 & 0.603 & 0.616 & 5.364 & 67.17 & 98.14 & 81.76 & 84.95 & 84.57 & 99.70 & 97.36 & 87.34 & 87.62 \\
    
    \bottomrule
    \end{tabular}}
\end{table*}

We verify whether $L_2$ normalization effectively induces more neural collapse and improves OOD detection.
We analyze two VGG17 models pre-trained on ImageNet-100 dataset where one model uses $L_2$ normalization and the other omits it.
The results are summarized in Table~\ref{tab:l2_norm_nc}.
We find that $L_2$ normalization induces more NC as evidenced by the lower NC1 value than its counterpart.
Consequently, $L_2$ normalization improves OOD detection by 3.83\% (absolute). Also, it achieves lower ID error than the compared model without $L_2$ normalization.

Next, we analyze how $L_2$ normalization impacts NC during training. We perform small-scale experiments since large-scale experiments are compute-intensive.
We train two VGG17 models on the ImageNet-10 (10 ImageNet classes) subset where one model uses $L_2$ normalization and another does not. During training, we measure the NC1 metric for the encoder embeddings.
The impact of $L_2$ normalization on NC1 is exhibited in Fig.~\ref{fig:l2_norm}. We find that $L_2$ normalization helps intensify NC during training. Consequently, it promotes better OOD detection.


\subsection{Batch Normalization Vs. Group Normalization}

\begin{table*}[t]
\centering
  \caption{\textbf{Batch Norm Vs. Group Norm.} VGG17 models are trained on \textbf{ImageNet-100} dataset (ID) and evaluated on eight OOD datasets. 
  All models incorporate entropy regularization and the ETF projector to control NC.
  The same color highlights the rows to compare.
  Group norm is integrated with weight standardization.
  All metrics except NC are reported in percentage. For OOD transfer we report $\boldsymbol{\mathcal{E}}_{\text{GEN}}$ (\%) whereas for OOD detection we report $\boldsymbol{\mathcal{E}}_{\text{DET}}$ (\%).
  } 
  \label{tab:bn_vs_gn}
  \centering
  \resizebox{\linewidth}{!}{
     \begin{tabular}{cc|cccc|ccccccccc}
     \hline 
     \multicolumn{1}{c}{\textbf{Method}} &
     \multicolumn{1}{c|}{$\boldsymbol{\mathcal{E}}_{\text{ID}} \downarrow$} &
     \multicolumn{4}{c|}{\textbf{Neural Collapse}} &
     \multicolumn{9}{c}{\textbf{OOD Datasets}} \\
    & IN & $\mathcal{NC}1$ & $\mathcal{NC}2$ & $\mathcal{NC}3$ & $\mathcal{NC}4$ & IN-R & CIFAR & Flowers & NINCO & CUB & Aircrafts & Pets & STL & Avg. \\
    & 100 &  &  &  &  & 200 & 100 & 102 & 64 & 200 & 100 & 37 & 10 & \\    
    \hline
    \textbf{\textcolor{orange}{Transfer Error $\downarrow$}} \\
    \textcolor{blue}{\textbf{Batch Norm}} \\
    Projector & 12.52 & 0.372 & 0.669 & 0.263 & 0.536 & 89.43 & 66.00 & 63.14 & 64.46 & 83.00 & 94.57 & 38.65 & 21.30 & 65.07 \\
    \rowcolor{yellow!50}
    Encoder & 14.54 & 1.401 & 0.605 & 0.590 & 25.611 & 78.02 & 53.34 & 49.51 & 33.25 & 74.08 & 85.27 & 25.46 & 16.75 & 51.96 \\
    
    \hline 

    \textcolor{blue}{\textbf{Group Norm}} \\

    Projector & 12.62 & 0.393 & 0.490 & 0.468 & 0.316 & 91.38 & 65.72 & 64.51 & 64.97 & 82.22 & 97.42 & 43.17 & 21.51 & 66.36 \\
    
    \rowcolor{yellow!50}
    \textbf{Encoder} & 15.52 & 2.175 & 0.603 & 0.616 & 5.364 & \textbf{71.52} & \textbf{47.24} & \textbf{25.10} & \textbf{24.32} & \textbf{63.67} & \textbf{67.81} & \textbf{21.56} & \textbf{13.55} & \textbf{41.85} \\ 

    \hline \hline
    \textbf{\textcolor{orange}{Detection Error $\downarrow$}} \\
    \textcolor{blue}{\textbf{Batch Norm}} \\
    \rowcolor{green!25}
    Projector & 12.52 & 0.372 & 0.669 & 0.263 & 0.536 & \textbf{57.30} & 74.62 & 44.12 & \textbf{66.33} & 65.14 & 99.19 & 75.93 & 73.13 & 69.47 \\
    
    Encoder & 14.54 & 1.401 & 0.605 & 0.590 & 25.611 & 92.17 & 99.77 & 91.08 & 91.41 & 99.48 & 98.62 & 85.39 & 93.26 & 93.90 \\
    \hline
    \textcolor{blue}{\textbf{Group Norm}} \\
    \rowcolor{green!25}
    \textbf{Projector} & 12.62 & 0.393 & 0.490 & 0.468 & 0.316 & 60.85 & \textbf{48.23} & \textbf{42.35} & 67.69 & \textbf{56.51} & 99.04 & 76.32 & \textbf{69.84} & \textbf{65.10} \\
    
    Encoder & 15.52 & 2.175 & 0.603 & 0.616 & 5.364 & 67.17 & 98.14 & 81.76 & 84.95 & 84.57 & 99.70 & 97.36 & 87.34 & 87.62 \\
    
    \bottomrule
    \end{tabular}}
\end{table*}

In this experiment, we analyze how batch normalization and group normalization perform within our framework.
We find that group normalization (combined with weight standardization) outperforms batch normalization by 10.11\% (absolute) in OOD generalization and by 4.37\% (absolute) in OOD detection (see Table~\ref{tab:bn_vs_gn}). This demonstrates that batch normalization leads to less transferable representations.

Moreover, group normalization achieves a higher $\mathcal{NC}1$ value (i.e., lower neural collapse) than batch normalization, thereby mitigating NC effect and enhancing OOD generalization. Group normalization also achieves ID performance similar to that of batch normalization.
Our results demonstrate that group normalization achieves competitive performance and plays a crucial role in OOD generalization.


\subsection{Comparison with Baseline}

\begin{table*}[t]
\centering
  \caption{\textbf{Comprehensive Comparison with Baseline.} Various DNNs are trained on \textbf{ImageNet-100} dataset (ID) and evaluated on eight OOD datasets. 
  Baseline models do not incorporate mechanisms like entropy regularization or the ETF projector to control NC. NC metrics are computed using the penultimate-layer embeddings.
  For OOD transfer we report $\boldsymbol{\mathcal{E}}_{\text{GEN}}$ (\%) whereas for OOD detection we report $\boldsymbol{\mathcal{E}}_{\text{DET}}$ (\%).
  } 
  \label{tab:base_model}
  \centering
  \resizebox{\linewidth}{!}{
     \begin{tabular}{cc|cccc|ccccccccc}
     \hline 
     \multicolumn{1}{c}{\textbf{Model}} &
     \multicolumn{1}{c|}{$\boldsymbol{\mathcal{E}}_{\text{ID}} \downarrow$} &
     \multicolumn{4}{c|}{\textbf{Neural Collapse}} &
     \multicolumn{9}{c}{\textbf{OOD Datasets}} \\
    & IN & $\mathcal{NC}1$ & $\mathcal{NC}2$ & $\mathcal{NC}3$ & $\mathcal{NC}4$ & IN-R & CIFAR & Flowers & NINCO & CUB & Aircrafts & Pets & STL & Avg. \\
    & 100 &  &  &  &  & 200 & 100 & 102 & 64 & 200 & 100 & 37 & 10 & \\    
    \toprule
    \textbf{\textcolor{orange}{Transfer Error $\downarrow$}} \\
    VGG17 & 12.18 & 0.766 & 0.705 & 0.486 & 37.491 & 75.60 & 50.11 & 42.75 & 29.17 & 71.35 & 84.13 & 27.58 & 15.65 & 49.54 \\
    
    \rowcolor{yellow!50}
    \textbf{VGG17+Ours} & 12.62 & 0.393 & 0.490 & 0.468 & 0.316 & \textbf{71.52} & \textbf{47.24} & \textbf{25.10} & \textbf{24.32} & \textbf{63.67} & \textbf{67.81} & \textbf{21.56} & \textbf{13.55} & \textbf{41.85} \\ 

    \hline 
    \textbf{\textcolor{orange}{Detection Error $\downarrow$}} \\
    VGG17 & 12.18 & 0.766 & 0.705 & 0.486 & 37.491 & 96.02 & 97.16 & 97.94 & 93.11 & 95.19 & 98.59 & 87.33 & 94.05 & 94.92 \\

    \rowcolor{yellow!50}
    \textbf{VGG17+Ours} & 12.62 & 0.393 & 0.490 & 0.468 & 0.316 & \textbf{60.85} & \textbf{48.23} & \textbf{42.35} & \textbf{67.69} & \textbf{56.51} & 99.04 & \textbf{76.32} & \textbf{69.84} & \textbf{65.10} \\
    
    \hline \hline
    \textbf{\textcolor{orange}{Transfer Error $\downarrow$}} \\
    ResNet18 & 15.38 & 1.11 & 0.658 & 0.590 & 31.446 & 75.75 & \textbf{49.48} & 41.37 & 30.02 & 69.80 & 82.75 & 29.63 & 16.53 & 49.42 \\

    \rowcolor{yellow!50}
    \textbf{ResNet18+Ours} & 16.14 & 0.341 & 0.456 & 0.306 & 0.540 & \textbf{74.17} & 53.33 & \textbf{31.37} & \textbf{28.15} & \textbf{68.85} & \textbf{81.61} & \textbf{27.72} & 16.56 & \textbf{47.72} \\

    \hline 
    \textbf{\textcolor{orange}{Detection Error $\downarrow$}} \\
    ResNet18 & 15.38 & 1.11 & 0.658 & 0.590 & 31.446 & 98.40 & 98.85 & 98.33 & 96.68 & 96.60 & 99.67 & 92.40 & 98.25 & 97.40 \\

    \rowcolor{yellow!50}
    \textbf{ResNet18+Ours} & 16.14 & 0.341 & 0.456 & 0.306 & 0.540 & \textbf{67.92} & \textbf{61.21} & \textbf{71.18} & \textbf{71.09} & \textbf{23.20} & \textbf{99.28} & \textbf{81.41} & \textbf{82.29} & \textbf{69.70} \\
    \hline \hline

    \textbf{\textcolor{orange}{Transfer Error $\downarrow$}} \\
    ViT-T & 31.78 & 2.467 & 0.657 & 0.601 & 1.015 & 82.18 & 52.64 & 41.67 & 32.74 & 63.48 & 81.61 & 45.11 & 22.00 & 52.68 \\

    \rowcolor{yellow!50}
    \textbf{ViT-T+Ours} & 32.04 & 2.748 & 0.609 & 0.798 & 1.144 & 82.28 & \textbf{52.00} & \textbf{42.94} & \textbf{30.36} & \textbf{63.15} & 84.31 & \textbf{44.86} & \textbf{21.13} & \textbf{52.63} \\

    \hline 
    \textbf{\textcolor{orange}{Detection Error $\downarrow$}} \\
    ViT-T & 31.78 & 2.467 & 0.657 & 0.601 & 1.015 & 85.18 & 91.70 & 87.06 & 89.54 & 87.78 & \textbf{98.35} & \textbf{91.77} & 89.99 & 90.17 \\

    \rowcolor{yellow!50}
    \textbf{ViT-T+Ours} & 32.04 & 2.748 & 0.609 & 0.798 & 1.144 & \textbf{81.12} & \textbf{60.81} & \textbf{77.55} & \textbf{82.40} & \textbf{79.05} & 99.10 & \textbf{95.15} & 90.06 & \textbf{83.16} \\
    
    \bottomrule
    \end{tabular}}
\end{table*}

Our experimental results show that our method significantly improves OOD detection and OOD transfer performance across all DNN architectures. We summarize the results in Table~\ref{tab:base_model}. We evaluate VGG17, ResNet18, and ViT-T baselines on 8 OOD datasets and compare them with our models.
The absolute improvements over VGG17 baseline are 7.69\% (OOD generalization) and 
29.82\% (OOD detection). Similarly, our method outperforms other DNNs in all criteria.
Our results corroborate our argument that \emph{controlling NC enables good OOD detection and OOD generalization performance}. It is also evident that a single feature space cannot simultaneously achieve both OOD detection and OOD generalization abilities.


\subsection{Projector Design Criteria}

Here we study the design choices of the projector network. We want to know how depth and width impact the performance. For this, we examine projectors consisting of a single layer (\textit{depth=1}, $512d$), two layers (\textit{depth=2}, $512d \rightarrow 2048d \rightarrow 512d$), three layers (\textit{depth=3}, $512d \rightarrow 2048d \rightarrow 2048d \rightarrow 512d$),
and a wider variant (\textit{width=2}, $512d \rightarrow 4096d \rightarrow 512d$). All of these variants are trained in identical settings and only the projector is changed. We train VGG17 networks on ImageNet-100 dataset (ID) and evaluate OOD detection/generalization on 8 OOD datasets.
The results are shown in Table~\ref{tab:proj_design}. The projector with depth 2 outperforms other variants across all evaluations.

\begin{table*}[t]
\centering
  \caption{\textbf{Projector Design Criteria.} VGG17 models are trained on \textbf{ImageNet-100} dataset (ID) and evaluated on eight OOD datasets. \textbf{All compared projectors are configured as fixed simplex ETFs}. And, entropy regularization is used in all cases. The same color highlights the rows to compare.
  Our final model has depth 2 and performs better than other variants. 
  For OOD transfer we report $\boldsymbol{\mathcal{E}}_{\text{GEN}}$ (\%) whereas for OOD detection we report $\boldsymbol{\mathcal{E}}_{\text{DET}}$ (\%).
  } 
  \label{tab:proj_design}
  \centering
  \resizebox{\linewidth}{!}{
     \begin{tabular}{cc|cccc|ccccccccc}
     \hline 
     \multicolumn{1}{c}{\textbf{Criteria}} &
     \multicolumn{1}{c|}{$\boldsymbol{\mathcal{E}}_{\text{ID}} \downarrow$} &
     \multicolumn{4}{c|}{\textbf{Neural Collapse}} &
     \multicolumn{9}{c}{\textbf{OOD Datasets}} \\
    & IN & $\mathcal{NC}1$ & $\mathcal{NC}2$ & $\mathcal{NC}3$ & $\mathcal{NC}4$ & IN-R & CIFAR & Flowers & NINCO & CUB & Aircrafts & Pets & STL & Avg. \\
    & 100 &  &  &  &  & 200 & 100 & 102 & 64 & 200 & 100 & 37 & 10 & \\    
    \hline
    \textbf{\textcolor{orange}{Transfer Error $\downarrow$}} \\
    \textcolor{blue}{\textbf{Depth=1}} \\

    Projector & 12.86 & 0.375 & 0.649 & 0.500 & 1.157 & 90.27 & 64.61 & 60.88 & 55.02 & 81.12 & 96.34 & 44.34 & 23.04 & 64.45 \\

    \rowcolor{yellow!50}
    Encoder & 16.34 & 1.673 & 0.667 & 0.589 & 7.936 & 74.08 & 50.61 & 30.00 & 28.06 & 66.74 & 71.95 & 25.73 & 15.75 & 45.37 \\
    \hline 

    \textcolor{blue}{\textbf{Depth=2 (Ours)}} \\

    Projector & \textbf{12.62} & 0.393 & 0.490 & 0.468 & 0.316 & 91.38 & 65.72 & 64.51 & 64.97 & 82.22 & 97.42 & 43.17 & 21.51 & 66.36 \\
    
    \rowcolor{yellow!50}
    \textbf{Encoder} & 15.52 & 2.175 & 0.603 & 0.616 & 5.364 & \textbf{71.52} & \textbf{47.24} & \textbf{25.10} & 24.32 & 63.67 & 67.81 & \textbf{21.56} & \textbf{13.55} & \textbf{41.85} \\ 



    \hline
    \textcolor{blue}{\textbf{Width=2}} \\
    Projector & 13.48 & 0.320 & 0.667 & 0.376 & 0.493 & 89.88 & 66.46 & 64.51 & 53.40 & 82.50 & 95.77 & 41.76 & 23.90 & 64.77 \\

    \rowcolor{yellow!50}
    Encoder & 16.46 & 2.341 & 0.607 & 0.646 & 5.899 & 73.05 & 50.61 & 27.25 & 25.60 & 64.84 & 67.87 & 22.35 & 15.10 & 43.33 \\

    \hline \hline
    \textbf{\textcolor{orange}{Detection Error $\downarrow$}} \\
    \textcolor{blue}{\textbf{Depth=1}} \\
    \rowcolor{green!25}
    
    Projector & 12.86 & 0.375 & 0.649 & 0.500 & 1.157 & 80.15 & 95.98 & 81.68 & 84.18 & 92.75 & \textbf{98.38} & \textbf{73.62} & 92.24 & 87.37 \\
    
    Encoder & 16.34 & 1.673 & 0.667 & 0.589 & 7.936 & 62.72 & 95.04 & 84.65 & 84.95 & 92.22 & 99.43 & 89.75 & 83.66 & 86.55 \\
    
    \hline
    \textcolor{blue}{\textbf{Depth=2 (Ours)}} \\
    \rowcolor{green!25}
    \textbf{Projector} & \textbf{12.62} & 0.393 & 0.490 & 0.468 & 0.316 & \textbf{60.85} & \textbf{48.23} & \textbf{42.35} & \textbf{67.69} & \textbf{56.51} & 99.04 & 76.32 & \textbf{69.84} & \textbf{65.10} \\
    
    Encoder & 15.52 & 2.175 & 0.603 & 0.616 & 5.364 & 67.17 & 98.14 & 81.76 & 84.95 & 84.57 & 99.70 & 97.36 & 87.34 & 87.62 \\



    \hline
    \textcolor{blue}{\textbf{Width=2}} \\
    \rowcolor{green!25}
    Projector & 13.48 & 0.320 & 0.667 & 0.376 & 0.493 & 65.43 & 60.83 & 51.96 & 67.77 & 57.70 & 99.52 & 79.29 & 75.33 & 69.73 \\
    
    Encoder & 16.46 & 2.341 & 0.607 & 0.646 & 5.899 & 66.80 & 97.64 & 89.61 & 83.42 & 88.89 & 98.89 & 98.58 & 94.39 & 89.78 \\
    
    \bottomrule
    \end{tabular}}
\end{table*}


\subsection{SGD Optimizer}
\label{sec:sgd_comprehensive}

In our experiments, we mainly used AdamW optimizer and thereby we want to verify if our method works well with other commonly used optimizers e.g., SGD. For this, we train VGG17 models on ImageNet-100 dataset (ID) with the SGD optimizer and evaluate them on eight OOD datasets. 
As shown in Table~\ref{tab:sgd_comprehensive_table}, our method outperforms the baseline by an absolute 6.26\% in OOD generalization and by an absolute 28.88\% in OOD detection. Also, we observe that our encoder reduces NC and enhances OOD generalization by an absolute 13.86\% compared to the projector. Whereas the projector intensifies NC and improves OOD detection by an absolute 25.34\% compared to the encoder.  While SGD intensifies NC more than AdamW, AdamW achieves better overall performance (Table~\ref{tab:base_model} in Appendix).

\begin{table*}[t]
\centering
  \caption{\textbf{Comprehensive Results with SGD Optimizer.} VGG17 models are trained on \textbf{ImageNet-100} dataset (ID) using the SGD optimizer. 
  Baseline models do not incorporate mechanisms like entropy regularization or the ETF projector to control NC. A lower NC value indicates stronger neural collapse. 
  For OOD transfer we report $\boldsymbol{\mathcal{E}}_{\text{GEN}}$ (\%) whereas for OOD detection we report $\boldsymbol{\mathcal{E}}_{\text{DET}}$ (\%). OOD performance is averaged across eight OOD datasets. The same color highlights the rows to compare.
  } 
  \label{tab:sgd_comprehensive_table}
  \centering
  \resizebox{\linewidth}{!}{
     \begin{tabular}{cc|cccc|ccccccccc}
     \hline 
     \multicolumn{1}{c}{\textbf{Model}} &
     \multicolumn{1}{c|}{$\boldsymbol{\mathcal{E}}_{\text{ID}} \downarrow$} &
     \multicolumn{4}{c|}{\textbf{Neural Collapse}} &
     \multicolumn{9}{c}{\textbf{OOD Datasets}} \\
    & IN & $\mathcal{NC}1$ & $\mathcal{NC}2$ & $\mathcal{NC}3$ & $\mathcal{NC}4$ & IN-R & CIFAR & Flowers & NINCO & CUB & Aircrafts & Pets & STL & Avg. \\
    & 100 &  &  &  &  & 200 & 100 & 102 & 64 & 200 & 100 & 37 & 10 & \\    
    \hline
    \textbf{\textcolor{orange}{Transfer Error $\downarrow$}} \\
    \textcolor{blue}{\textbf{VGG17}} \\
    Encoder & \textbf{13.06} & 1.017 & 0.449 & 0.479 & 26.459 & 82.10 & \textbf{56.82} & 59.71 & 39.20 & 77.70 & 88.21 & 35.00 & 18.58 & 57.17 \\
    \hline
    
    \textcolor{blue}{\textbf{VGG17+Ours}} \\
    Projector & \textbf{13.18} & 0.087 & 0.468 & 0.267 & 0.264 & 84.48 & 64.69 & 72.35 & 47.28 & 85.88 & 94.66 & 42.85 & 25.93 & 64.77 \\

    \rowcolor{yellow!50}
    \textbf{Encoder} & 15.36 & 0.459 & 0.804 & 0.972 & 3.898 & \textbf{78.20} & 57.27 & \textbf{45.59} & \textbf{32.06} & \textbf{74.27} & \textbf{74.49} & \textbf{27.58} & \textbf{17.80} & \textbf{50.91} \\

    \hline \hline
    \textbf{\textcolor{orange}{Detection Error $\downarrow$}} \\
    \textcolor{blue}{\textbf{VGG17}} \\

    Encoder & \textbf{13.06} & 1.017 & 0.449 & 0.479 & 26.459 & 81.63 & 93.34 & 87.25 & 86.31 & 94.89 & 97.93 & 84.76 & 91.40 & 89.69 \\
    \hline
    
    \textcolor{blue}{\textbf{VGG17+Ours}} \\
    \rowcolor{green!25}
    \textbf{Projector} & \textbf{13.18} & 0.087 & 0.468 & 0.267 & 0.264 & \textbf{52.28} & \textbf{35.80} & \textbf{60.59} & \textbf{67.60} & \textbf{67.77} & \textbf{67.35} & \textbf{67.52} & \textbf{67.60} & \textbf{60.81} \\

    Encoder & 15.36 & 0.459 & 0.804 & 0.972 & 3.898 & 93.58 & 95.11 & 68.43 & 84.01 & 87.42 & 87.16 & 87.00 & 86.48 & 86.15 \\
    \bottomrule
    \end{tabular}}
\end{table*} 


\subsection{Fixed ETF Classifier Vs. Plastic Classifier}

\begin{table*}[t]
\centering
  \caption{\textbf{Fixed ETF Classifier Vs. Plastic Classifier.} VGG17 models are trained on \textbf{ImageNet-100} dataset (ID) and evaluated on eight OOD datasets. 
  All models incorporate entropy regularization and the ETF projector to control NC; \textbf{only the classifier (final layer) differs, being either trainable or a fixed ETF}.
  The same color highlights the rows to compare. All metrics except NC are reported in percentage. For OOD transfer we report $\boldsymbol{\mathcal{E}}_{\text{GEN}}$ (\%) whereas for OOD detection we report $\boldsymbol{\mathcal{E}}_{\text{DET}}$ (\%).
  } 
  \label{tab:fixed_vs_plastic_head}
  \centering
  \resizebox{\linewidth}{!}{
     \begin{tabular}{cc|cccc|ccccccccc}
     \hline 
     \multicolumn{1}{c}{\textbf{Classifier}} &
     \multicolumn{1}{c|}{$\boldsymbol{\mathcal{E}}_{\text{ID}} \downarrow$} &
     \multicolumn{4}{c|}{\textbf{Neural Collapse}} &
     \multicolumn{9}{c}{\textbf{OOD Datasets}} \\
    (Last layer) & IN & $\mathcal{NC}1$ & $\mathcal{NC}2$ & $\mathcal{NC}3$ & $\mathcal{NC}4$ & IN-R & CIFAR & Flowers & NINCO & CUB & Aircrafts & Pets & STL & Avg. \\
    & 100 &  &  &  &  & 200 & 100 & 102 & 64 & 200 & 100 & 37 & 10 & \\    
    \hline
    \textbf{\textcolor{orange}{Transfer Error $\downarrow$}} \\
    \textcolor{blue}{\textbf{Fixed ETF}} \\
    Projector & 13.56 & 0.088 & 0.702 & 0.374 & 0.379 & 98.18 & 84.28 & 92.25 & 96.94 & 96.86 & 97.60 & 72.23 & 36.59 & 84.37 \\

    \rowcolor{yellow!50}
    Encoder & 16.40 & 3.794 & 0.773 & 0.786 & 54.24 & 82.47 & 63.19 & 55.98 & 36.31 & 81.00 & 88.36 & 31.18 & 20.88 & 57.42 \\
    \hline 

    \textcolor{blue}{\textbf{Plastic (Ours)}} \\

    Projector & \textbf{12.62} & 0.393 & 0.490 & 0.468 & 0.316 & 91.38 & 65.72 & 64.51 & 64.97 & 82.22 & 97.42 & 43.17 & 21.51 & 66.36 \\
    
    \rowcolor{yellow!50}
    \textbf{Encoder} & 15.52 & 2.175 & 0.603 & 0.616 & 5.364 & \textbf{71.52} & \textbf{47.24} & \textbf{25.10} & \textbf{24.32} & \textbf{63.67} & \textbf{67.81} & \textbf{21.56} & \textbf{13.55} & \textbf{41.85} \\ 

    \hline \hline
    \textbf{\textcolor{orange}{Detection Error $\downarrow$}} \\
    \textcolor{blue}{\textbf{Fixed ETF}} \\
    \rowcolor{green!25}
    
    Projector & 13.56 & 0.088 & 0.702 & 0.374 & 0.379 & 73.80 & \textbf{26.45} & 73.04 & 68.20 & \textbf{55.80} & 98.98 & 96.05 & \textbf{63.56} & 69.49 \\
    
    Encoder & 16.40 & 3.794 & 0.773 & 0.786 & 54.24 & 81.03 & 98.98 & 81.57 & 87.25 & 97.29 & 99.01 & 86.48 & 93.11 & 90.59 \\

    \hline
    \textcolor{blue}{\textbf{Plastic (Ours)}} \\
    \rowcolor{green!25}
    \textbf{Projector} & \textbf{12.62} & 0.393 & 0.490 & 0.468 & 0.316 & \textbf{60.85} & 48.23 & \textbf{42.35} & \textbf{67.69} & 56.51 & 99.04 & \textbf{76.32} & 69.84 & \textbf{65.10} \\
    
    Encoder & 15.52 & 2.175 & 0.603 & 0.616 & 5.364 & 67.17 & 98.14 & 81.76 & 84.95 & 84.57 & 99.70 & 97.36 & 87.34 & 87.62 \\
    
    \bottomrule
    \end{tabular}}
\end{table*}

We investigate how using a fixed ETF classifier head impacts NC and OOD detection/generalization performance.
We train two identical models consisting of our proposed mechanisms to control NC, the only thing we vary is the classifier head. One model consists of a plastic (learnable) classifier head which is our proposed model and the other consists of an ETF classifier head. The ETF classifier head is configured with Simplex ETF and frozen during training. We train VGG17 networks on ImageNet-100 (ID) and evaluate them on 8 OOD datasets.

Table~\ref{tab:fixed_vs_plastic_head} shows results across all OOD datasets, where the plastic classifier outperforms the fixed ETF classifier by 4.39\% (absolute) in OOD detection and by 15.6\% in OOD generalization. The plastic classifier also outperforms ETF classifier in the ID task. 
Our results suggest that imposing NC in the classifier head is sub-optimal for enhancing OOD detection and generalization.


\subsection{Computational Efficiency \& Scalability}
\label{sec:compute}

Our proposed method is computationally efficient and does not require higher computational costs than standard DNNs. It introduces two additional components compared to standard DNN architecture and training protocol:
\begin{enumerate}[noitemsep, nolistsep, leftmargin=*]
    \item Entropy regularization applied at the encoder’s output.
    \item A frozen ETF projector (two MLP layers) following the DNN backbone.
\end{enumerate}

For entropy regularization, we employ an efficient batch-level nearest neighbor distance computation, which incurs negligible computational overhead during training.
Regarding the ETF projector, since it remains frozen and does not undergo gradient updates, it does not introduce any noticeable training costs beyond those of the baseline DNN.

\textbf{Training Time.} When training DNNs on ImageNet-100 (ID dataset) for 100 epochs using four NVIDIA RTX A5000 GPUs, both our method and the baseline require almost the same training time (see Table~\ref{tab:compute_overhead}). 

\textbf{FLOPs.} In terms of FLOPs (floating-point operations per second), both our method and the baseline require almost the same amount of computation. For FLOPs analysis, we use DeepSpeed~\footnote{\url{https://github.com/deepspeedai/DeepSpeed}} with the same GPU (single NVIDIA RTX A5000) across compared models.
As shown in Table~\ref{tab:compute_overhead}, the overhead introduced by our method remains below 0.3\% in all cases, which we believe is trivial and well-justified given the observed performance gains.

\textbf{Scalability.}
Here we ask: \emph{does the proposed method scale to deeper architectures?}
Our method is inherently compatible with deeper architectures since the ETF projector (two MLP layers) can be seamlessly integrated into encoders of any depth. Additionally, while
deeper DNNs typically exhibit stronger NC in their top layers, our entropy regularizer effectively mitigates NC in encoders of any depth.
As shown in Table~\ref{tab:resnet_results}, our method performs effectively with both ResNet18 and ResNet34, highlighting its scalability. Similar trend is observed for ViTs (Table~\ref{tab:vit_results}).


\section{Classes of ImageNet-100 ID Dataset}
\label{sec:imagenet_100_classes}

We list the 100 classes in the ID dataset, ImageNet-100~\cite{tian2020contrastive}. 
This list can also be found at: \url{https://github.com/HobbitLong/CMC/blob/master/imagenet100.txt}

\textit{Rocking chair, pirate, computer keyboard, Rottweiler, Great Dane, tile roof, harmonica, langur, Gila monster, hognose snake, vacuum, Doberman, laptop, gasmask, mixing bowl, robin, throne, chime, bonnet, komondor, jean, moped, tub, rotisserie, African hunting dog, kuvasz, stretcher, garden spider, theater curtain, honeycomb, garter snake, wild boar, pedestal, bassinet, pickup, American lobster, sarong, mousetrap, coyote, hard disc, chocolate sauce, slide rule, wing, cauliflower, American Staffordshire terrier, meerkat, Chihuahua, lorikeet, bannister, tripod, head cabbage, stinkhorn, rock crab, papillon, park bench, reel, toy terrier, obelisk, walking stick, cocktail shaker, standard poodle, cinema, carbonara, red fox, little blue heron, gyromitra, Dutch oven, hare, dung beetle, iron, bottlecap, lampshade, mortarboard, purse, boathouse, ambulance, milk can, Mexican hairless, goose, boxer, gibbon, football helmet, car wheel, Shih-Tzu, Saluki, window screen, English foxhound, American coot, Walker hound, modem, vizsla, green mamba, pineapple, safety pin, borzoi, tabby, fiddler crab, leafhopper, Chesapeake Bay retriever, and ski mask.}


\begin{table*}[t]
\centering
  \caption{\textbf{Comprehensive Comparison with NECO.} We compare our method against a SOTA OOD detection method NECO~\cite{ammar2024neco}. Since NECO does not address OOD generalization, we do not compare OOD generalization performance. 
  Here, various DNNs are trained on \textbf{ImageNet-100} dataset (ID) and evaluated on eight OOD datasets.  
  NC metrics are computed using the penultimate-layer embeddings. A lower NC value indicates stronger neural collapse.
  } 
  \label{tab:neco_comp}
  \centering
  \resizebox{\linewidth}{!}{
     \begin{tabular}{cc|cccc|ccccccccc}
     \hline 
     \multicolumn{1}{c}{\textbf{Model}} &
     \multicolumn{1}{c|}{$\boldsymbol{\mathcal{E}}_{\text{ID}} \downarrow$} &
     \multicolumn{4}{c|}{\textbf{Neural Collapse}} &
     \multicolumn{9}{c}{\textbf{OOD Detection Error} $\boldsymbol{\mathcal{E}}_{\text{DET}}$ (\%)} \\
    & IN & $\mathcal{NC}1$ & $\mathcal{NC}2$ & $\mathcal{NC}3$ & $\mathcal{NC}4$ & IN-R & CIFAR & Flowers & NINCO & CUB & Aircrafts & Pets & STL & Avg. \\
    & 100 &  &  &  &  & 200 & 100 & 102 & 64 & 200 & 100 & 37 & 10 & \\    
    \toprule
    \textbf{\textcolor{blue}{VGG17}} \\
    
    

    NECO & 12.18 & 0.766 & 0.705 & 0.486 & 37.491 & 83.20 & \textbf{26.41} & 87.94 & 78.74 & 83.45 & 83.53 & 96.76 & 82.49 & 77.82 \\

    \rowcolor[gray]{0.9}
    \textbf{Ours} & 12.62 & 0.393 & 0.490 & 0.468 & 0.316 & \textbf{60.85} & 48.23 & \textbf{42.35} & \textbf{67.69} & \textbf{56.51} & 99.04 & \textbf{76.32} & \textbf{69.84} & \textbf{65.10} \\

    \hline 
    \textbf{\textcolor{blue}{ResNet18}} \\


    NECO & 15.38 & 1.11 & 0.658 & 0.590 & 31.446 & 93.88 & 66.81 & 94.12 & 90.05 & 90.05 & \textbf{90.05} & 90.05 & 90.05 & 88.13 \\

    \rowcolor[gray]{0.9}
    \textbf{Ours} & 16.14 & 0.341 & 0.456 & 0.306 & 0.540 & \textbf{67.92} & \textbf{61.21} & \textbf{71.18} & \textbf{71.09} & \textbf{23.20} & 99.28 & \textbf{81.41} & \textbf{82.29} & \textbf{69.70} \\

    \hline 
    \textbf{\textcolor{blue}{ViT-T}} \\
    NECO & 31.78 & 2.467 & 0.657 & 0.601 & 1.015 & \textbf{79.90} & 74.76 & 82.84 & 84.44 & 85.12 & \textbf{98.50} & \textbf{92.67} & \textbf{87.10} & 85.67 \\

    \rowcolor[gray]{0.9}
    \textbf{Ours} & 32.04 & 2.748 & 0.609 & 0.798 & 1.144 & 81.12 & \textbf{60.81} & \textbf{77.55} & \textbf{82.40} & \textbf{79.05} & 99.10 & 95.15 & 90.06 & \textbf{83.16} \\
    
    \bottomrule
    \end{tabular}}
\end{table*}


\begin{table*}[t]
\centering
  \caption{\textbf{Compute Overhead.} We compare our method with baseline DNNs in terms of FLOPs and training time. Training time (wall-clock) is reported in minutes.
  } 
  \label{tab:compute_overhead}
  \centering
     \begin{tabular}{c|cc|cc}
     \hline 
     \multicolumn{1}{c|}{\textbf{Model}} &
     \multicolumn{2}{c|}{\textbf{FLOPs Comparison}} &
     \multicolumn{2}{c}{\textbf{Time Comparison}} \\
    & FLOPs $\downarrow$ & \textbf{\% Increase} & Time (Mins) $\downarrow$ & \textbf{\% Increase} \\
    \toprule
    VGG17 & 4,955,622,740,132,864 & -- & 307.80 & -- \\    
    \rowcolor[gray]{0.9}
    \textbf{VGG17 + Ours} & 4,956,972,705,684,480 & \textbf{+0.0272\%} & 307.98 & \textbf{+0.0585\%} \\
    \hline
    ResNet18 & 461,500,110,825,472 & -- & 136.81 & -- \\
    \rowcolor[gray]{0.9}
    \textbf{ResNet18 + Ours} & 462,031,742,464,000 & \textbf{+0.1152\%} & 137.04 & \textbf{+0.1681\%} \\
    \hline
    ResNet34 & 931,123,885,195,264 & -- & 140.89 & -- \\
    \rowcolor[gray]{0.9}
    \textbf{ResNet34 + Ours} & 931,655,516,833,792 & \textbf{+0.0571\%} & 141.26 & \textbf{+0.2626\%} \\
    \hline
    ViT-T & 271,301,725,913,088 & -- & 63.03 & -- \\
    \rowcolor[gray]{0.9}
    \textbf{ViT-T + Ours} & 271,376,414,539,776 & \textbf{+0.0275\%} & 63.18 & \textbf{+0.2380\%} \\
    \hline
    ViT-S & 1,068,921,092,308,992 & -- & 102.02 & -- \\
    \rowcolor[gray]{0.9}
    \textbf{ViT-S + Ours} & 1,069,219,652,567,040 & \textbf{+0.0279\%} & 102.24 & \textbf{+0.2156\%} \\
    \bottomrule
    \end{tabular}
\end{table*}

\end{document}